\documentclass[journal]{IEEEtran}
\usepackage{dblfloatfix}
\usepackage{hyperref}
\usepackage{amssymb}
\usepackage{amsmath}
\usepackage{amsthm}
\newtheorem{theorem}{Theorem}
\newtheorem{lemma}{Lemma}

\newtheorem{corollary}{Corollary}
\usepackage{enumerate}
\usepackage{enumitem}
\usepackage{changepage}
\usepackage{tabularx}
\usepackage{arydshln}
\usepackage{xcolor}
\usepackage{multicol}
\usepackage{booktabs}
\usepackage{ragged2e}
\usepackage{multirow}
\usepackage{graphicx}
\usepackage{pdflscape}
\usepackage{url}   
\usepackage{pifont}
\usepackage{pgfplots}
\usepackage{pgfplotstable}
\usepgfplotslibrary{groupplots}
\usepgfplotslibrary{statistics}
\pgfplotsset{compat=1.18}
\definecolor{FedAvgColor}{RGB}{31,119,180}
\definecolor{DittoColor}{RGB}{255,127,14}
\definecolor{FedDFColor}{RGB}{140,86,75}
\definecolor{FedDistillColor}{RGB}{44,160,44}
\definecolor{FedMDColor}{RGB}{148,103,189}
\definecolor{FedKDNASColor}{RGB}{214,39,40}
\definecolor{LocalKDColor}{RGB}{127,127,127}
\definecolor{darkblue}{rgb}{0.0,0.0,0.55}
\newcommand{\Bubble}[4]{%
  \addplot+[
    only marks,
    mark=*,
    mark size=#4,
    mark options={
      draw=#1,
      fill=#1,
      line width=0.8pt
    },
    forget plot
  ] coordinates {(#2,#3)};
}

\pgfplotsset{
  FedAvgStyle/.style     = {ybar, bar width=0.18, mark=none, fill=FedAvgColor,     draw=black, draw opacity=0.5},
  DittoStyle/.style      = {ybar, bar width=0.18, mark=none, fill=DittoColor,      draw=black, draw opacity=0.5},
  FedDFStyle/.style      = {ybar, bar width=0.18, mark=none, fill=FedDFColor,      draw=black, draw opacity=0.5},
  FedDistillStyle/.style = {ybar, bar width=0.18, mark=none, fill=FedDistillColor, draw=black, draw opacity=0.5},
  FedMDStyle/.style      = {ybar, bar width=0.18, mark=none, fill=FedMDColor,      draw=black, draw opacity=0.5},
  FedKDNASStyle/.style   = {ybar, bar width=0.18, mark=none, fill=FedKDNASColor,   draw=black, draw opacity=0.5,
                             nodes near coords, point meta=y,
                             every node near coord/.append style={
                               font=\tiny, rotate=90, anchor=west,
                               /pgf/number format/fixed,
                               /pgf/number format/precision=2}},
  LocalKDStyle/.style    = {ybar, bar width=0.18, mark=none, fill=LocalKDColor,    draw=black, draw opacity=0.5},
}

\newcommand{\AddThreeRegimePlotShift}[9]{%
  \pgfplotstablegetelem{#2}{#1}\of#3 \let\myValA\pgfplotsretval
  \pgfplotstablegetelem{#2}{#1}\of#4 \let\myValB\pgfplotsretval
  \pgfplotstablegetelem{#2}{#1}\of#5 \let\myValC\pgfplotsretval
  \addplot+[#6, mark=none, bar width=0.18] coordinates {
    (#7,\myValA)(#8,\myValB)(#9,\myValC)
  };%
}
\newcommand{\DrawRegimeBrackets}{%
  \draw[decorate, decoration={brace, amplitude=3pt, mirror}, thin]
    ([yshift=-8pt]axis cs:-0.75,0) -- ([yshift=-8pt]axis cs:0.75,0)
    node[midway, below=4pt, font=\scriptsize]{IID};
  \draw[decorate, decoration={brace, amplitude=3pt, mirror}, thin]
    ([yshift=-8pt]axis cs:1.85,0) -- ([yshift=-8pt]axis cs:3.35,0)
    node[midway, below=4pt, font=\scriptsize]{Dirichlet};
  \draw[decorate, decoration={brace, amplitude=3pt, mirror}, thin]
    ([yshift=-8pt]axis cs:4.05,0) -- ([yshift=-8pt]axis cs:5.55,0)
    node[midway, below=4pt, font=\scriptsize]{Shards};
}

\pgfplotstableread[col sep=comma]{figures/fedkdnasResults/metrics_iid_cifar10_mobilenet_v2.csv}\MobiIID
\pgfplotstableread[col sep=comma]{figures/fedkdnasResults/metrics_dirichlet_cifar10_mobilenet_v2.csv}\MobiDir
\pgfplotstableread[col sep=comma]{figures/fedkdnasResults/metrics_shards_cifar10_mobilenet_v2.csv}\MobiSha
\pgfplotstableread[col sep=comma]{figures/fedkdnasResults/metrics_iid_cifar10_shufflenet_v2_x0_5.csv}\ShufIID
\pgfplotstableread[col sep=comma]{figures/fedkdnasResults/metrics_dirichlet_cifar10_shufflenet_v2_x0_5.csv}\ShufDir
\pgfplotstableread[col sep=comma]{figures/fedkdnasResults/metrics_shards_cifar10_shufflenet_v2_x0_5.csv}\ShufSha

\pgfplotstableread[col sep=comma]{figures/fedkdnasResults/metrics_iid_cifar100_mobilenet_v2.csv}\MobiIIDCif
\pgfplotstableread[col sep=comma]{figures/fedkdnasResults/metrics_dirichlet_cifar100_mobilenet_v2.csv}\MobiDirCif
\pgfplotstableread[col sep=comma]{figures/fedkdnasResults/metrics_shards_cifar100_mobilenet_v2.csv}\MobiShaCif
\pgfplotstableread[col sep=comma]{figures/fedkdnasResults/metrics_iid_cifar100_shufflenet_v2_x0_5.csv}\ShufIIDCif
\pgfplotstableread[col sep=comma]{figures/fedkdnasResults/metrics_dirichlet_cifar100_shufflenet_v2_x0_5.csv}\ShufDirCif
\pgfplotstableread[col sep=comma]{figures/fedkdnasResults/metrics_shards_cifar100_shufflenet_v2_x0_5.csv}\ShufShaCif

\pgfplotstableread[col sep=comma]{figures/fedkdnasResults/metrics_iid_mnist_mlp.csv}\MobiIIDMn
\pgfplotstableread[col sep=comma]{figures/fedkdnasResults/metrics_dirichlet_mnist_mlp.csv}\MobiDirMn
\pgfplotstableread[col sep=comma]{figures/fedkdnasResults/metrics_shards_mnist_mlp.csv}\MobiShaMn
\pgfplotstableread[col sep=comma]{figures/fedkdnasResults/metrics_iid_mnist_simplecnn.csv}\ShufIIDMn
\pgfplotstableread[col sep=comma]{figures/fedkdnasResults/metrics_dirichlet_mnist_simplecnn.csv}\ShufDirMn
\pgfplotstableread[col sep=comma]{figures/fedkdnasResults/metrics_shards_mnist_simplecnn.csv}\ShufShaMn

\pgfplotstableread[col sep=comma]{figures/fedkdnasResults/metrics_iid_fmnist_mlp.csv}\MobiIIDFm
\pgfplotstableread[col sep=comma]{figures/fedkdnasResults/metrics_dirichlet_fmnist_mlp.csv}\MobiDirFm
\pgfplotstableread[col sep=comma]{figures/fedkdnasResults/metrics_shards_fmnist_mlp.csv}\MobiShaFm
\pgfplotstableread[col sep=comma]{figures/fedkdnasResults/metrics_iid_fmnist_simplecnn.csv}\ShufIIDFm
\pgfplotstableread[col sep=comma]{figures/fedkdnasResults/metrics_dirichlet_fmnist_simplecnn.csv}\ShufDirFm
\pgfplotstableread[col sep=comma]{figures/fedkdnasResults/metrics_shards_fmnist_simplecnn.csv}\ShufShaFm

\pgfplotstableread[col sep=comma]{figures/fedkdnasResults/metrics_iid_emnist_mlp.csv}\MobiIIDEm
\pgfplotstableread[col sep=comma]{figures/fedkdnasResults/metrics_dirichlet_emnist_mlp.csv}\MobiDirEm
\pgfplotstableread[col sep=comma]{figures/fedkdnasResults/metrics_shards_emnist_mlp.csv}\MobiShaEm
\pgfplotstableread[col sep=comma]{figures/fedkdnasResults/metrics_iid_emnist_simplecnn.csv}\ShufIIDEm
\pgfplotstableread[col sep=comma]{figures/fedkdnasResults/metrics_dirichlet_emnist_simplecnn.csv}\ShufDirEm
\pgfplotstableread[col sep=comma]{figures/fedkdnasResults/metrics_shards_emnist_simplecnn.csv}\ShufShaEm

\usepackage{pstricks}
\usepackage{pstricks-add}
\usepackage{tabularx}
\usepackage{booktabs}
\usepackage{algorithm}
\usepackage{algpseudocode}
\usepackage{comment}
\usepackage{makecell}
\usepackage{adjustbox}
\usepackage{tikz}
\usepackage[most]{tcolorbox}
\usepackage{tabularx} 
\usepackage{array}    
\usepackage{array, xcolor}
\usepackage{multirow}
\usepackage{caption}
\usepackage{subcaption}
\usepackage{pdflscape}
\usepackage{rotating}
\usepackage{adjustbox}
\usepackage{amsmath, amssymb, amsfonts}
\usepackage{float}
\usepackage{mathtools}           
\usepackage{bm}
\usepackage{todonotes}
\usepackage{microtype}
\usepackage{xspace}

\newcommand{\thickmidrule}{\specialrule{0.9pt}{1pt}{1pt}}

\usepackage[table]{xcolor}
\usepackage{dblfloatfix}
\usepackage{pdflscape}   
\usepackage{placeins}   
\usepackage{tikz}          
\usetikzlibrary{matrix}
\usetikzlibrary{decorations.pathreplacing,calc}
\usepackage{caption}

\definecolor{Top1}{HTML}{2E7D32}
\definecolor{Top2}{HTML}{66BB6A}
\definecolor{Top3}{HTML}{D9F2D9}
\definecolor{Bottom1}{HTML}{C62828}
\definecolor{Bottom2}{HTML}{EF9A9A}
\definecolor{Bottom3}{HTML}{F6D6D6}

\renewcommand{\arraystretch}{1.0}

\usepackage{ragged2e}
\newcolumntype{L}[1]{>{\RaggedRight\arraybackslash}p{#1}} 

\DeclareGraphicsExtensions{.pdf,.png,.jpg}
\newcolumntype{Y}{>{\raggedright\arraybackslash}X} 
\newcolumntype{C}{>{\Centering\arraybackslash}X}
\usepackage{etoolbox}

\makeatletter
\newcommand*{\algrule}[1][\algorithmicindent]{%
  \makebox[#1][l]{
    \hspace*{.2em}
    \vrule height .75\baselineskip depth .25\baselineskip
  }
}
\newcount\ALG@printindent@tempcnta
\def\ALG@printindent{%
    \ifnum \theALG@nested>0
    \ifx\ALG@text\ALG@x@notext
    \else
    \unskip
    \ALG@printindent@tempcnta=1
    \loop
    \algrule[\csname ALG@ind@\the\ALG@printindent@tempcnta\endcsname]%
    \advance \ALG@printindent@tempcnta 1
    \ifnum \ALG@printindent@tempcnta<\numexpr\theALG@nested+1\relax
    \repeat
    \fi
    \fi
}
\patchcmd{\ALG@doentity}{\noindent\hskip\ALG@tlm}{\ALG@printindent}{}{\errmessage{failed to patch}}
\patchcmd{\ALG@doentity}{\item[]\nointerlineskip}{}{}{}
\makeatother
\newlength\epaisLigne 
\definecolor{mygreen}{rgb}{0,0.5,0}
\definecolor{mygray}{rgb}{0.5,0.5,0.5}
\definecolor{mymauve}{rgb}{0.58,0,0.82}
\definecolor{Bitcoin}{RGB}{206, 62, 200}
\definecolor{EthPoW}{RGB}{6, 132, 44}
\definecolor{EthPoA}{RGB}{21, 191, 225}
\definecolor{Monero}{RGB}{243, 185, 31}
\definecolor{Hyperledger}{RGB}{127, 140, 140}
\definecolor{Quorum}{RGB}{49, 87, 206}
\definecolor{Multichain}{RGB}{0, 0, 0}
\definecolor{Core}{RGB}{244,181,180}
\definecolor{Ecosystem}{RGB}{152,155,248}
\definecolor{Operation}{RGB}{243,169,94}
\definecolor{Application}{RGB}{133,199,227}
\definecolor{Performance}{RGB}{197,217,193}
\definecolor{Nanayakkara}{RGB}{240,195,175}
\definecolor{Labazova}{RGB}{171,207,168}
\definecolor{Polge}{RGB}{111,144,182}
\definecolor{Grabe}{RGB}{120,120,120}
\definecolor{Six}{RGB}{237,177,32}
\definecolor{Yang2021}{RGB}{159, 224, 211}
\definecolor{Buyukozkan2021}{RGB}{194,157,227}
\definecolor{Nanayakkara}{RGB}{66,73,73}
\definecolor{Labazova}{RGB}{66,73,73}
\definecolor{Polge}{RGB}{66,73,73}
\definecolor{Grabe}{RGB}{66,73,73}
\definecolor{Six}{RGB}{66,73,73}
\definecolor{Yang2021}{RGB}{66,73,73}
\definecolor{Buyukozkan2021}{RGB}{66,73,73}
\definecolor{CMAPSSTCN}{RGB}{54,195,210}
\definecolor{CMAPSSTFT}{RGB}{227, 132, 198}
\definecolor{CWRUFCN}{RGB}{72,128,181}
\definecolor{CWRURESNET}{RGB}{239,143,63}
\definecolor{EcoatingTST}{RGB}{150,116,188}
\definecolor{EcoatingCNNLSTM}{RGB}{142,102,91}
\definecolor{HydraulicRESNET}{RGB}{92,166,76}
\definecolor{HydraulicLSTM}{RGB}{198,69,63}
\graphicspath{{./figures/}{.}{./../}}
\makeatletter   

\newcommand{\Rmnum}[1]{\expandafter\@slowromancap\romannumeral #1@}
\makeatother  
\makeatletter
\newif\if@borderstar
\def\bordermatrix{\@ifnextchar*{%
  \@borderstartrue\@bordermatrix@i}{\@borderstarfalse\@bordermatrix@i*}%
}
\def\@bordermatrix@i*{\@ifnextchar[{%
  \@bordermatrix@ii}{\@bordermatrix@ii[()]}
}
\def\@bordermatrix@ii[#1]#2{%
  \begingroup
    \m@th\@tempdima8.75\p@\setbox\z@\vbox{%
      \def\cr{\crcr\noalign{\kern 2\p@\global\let\cr\endline }}%
      \ialign {$##$\hfil\kern 2\p@\kern\@tempdima & \thinspace %
      \hfil $##$\hfil && \quad\hfil $##$\hfil\crcr\omit\strut %
      \hfil\crcr\noalign{\kern -\baselineskip}#2\crcr\omit %
      \strut\cr}}%
    \setbox\tw@\vbox{\unvcopy\z@\global\setbox\@ne\lastbox}%
    \setbox\tw@\hbox{\unhbox\@ne\unskip\global\setbox\@ne\lastbox}%
    \setbox\tw@\hbox{%
      $\kern\wd\@ne\kern -\@tempdima\left\@firstoftwo#1%
        \if@borderstar\kern2pt\else\kern -\wd\@ne\fi%
      \global\setbox\@ne\vbox{\box\@ne\if@borderstar\else\kern 2\p@\fi}%
      \vcenter{\if@borderstar\else\kern -\ht\@ne\fi%
        \unvbox\z@\kern-\if@borderstar2\fi\baselineskip}%
        \if@borderstar\kern-2\@tempdima\kern2\p@\else\,\fi\right\@secondoftwo#1 $%
    }\null \,\vbox{\kern\ht\@ne\box\tw@}%
  \endgroup
}
\makeatother
\makeatletter
\newcommand\footnoteref[1]{\protected@xdef\@thefnmark{\ref{#1}}\@footnotemark}
\makeatother
\hypersetup{colorlinks,breaklinks,
            linkcolor=darkblue,
            urlcolor=darkblue,
            anchorcolor=darkblue,
            citecolor=darkblue,
            pdfnewwindow=true}

\definecolor{colDir}{HTML}{1F77B4}  
\definecolor{colSha}{HTML}{FF7F0E} 
\tikzset{
  droplbl/.style={rotate=90, font=\fontsize{4}{5}\selectfont,
                  fill=white, inner sep=0.4pt, outer sep=0pt},
  iidlbl/.style={font=\fontsize{6}{7}\selectfont, above, inner sep=1pt},
}
\begin{document}

\title{Optimized Federated Knowledge Distillation with Distributed Neural Architecture Search}

\author{Chaimaa~Medjadji, Sylvain~Kubler, Yves~Le~Traon, Guilain~Leduc, Sadi~Alawadi, and Feras~M.~Awaysheh%
\thanks{Chaimaa Medjadji, Sylvain Kubler, Yves Le Traon, and Guilain Leduc are with the Interdisciplinary Centre for Security, Reliability and Trust (SnT), University of Luxembourg, Luxembourg. E-mail: \{chaimaa.medjadji, sylvain.kubler, yves.letraon, guilain.leduc\}@uni.lu.}%
\thanks{Sadi Alawadi is with Blekinge Institute of Technology, Sweden. E-mail: sadi.alawadi@bth.se.}%
\thanks{Feras M. Awaysheh is with ADSLabs, Umea University, Sweden. E-mail: feras.awaysheh@umu.se.}%
}

\maketitle

\begin{abstract}
Federated Learning (FL) enables collaborative model training without centralizing data. However, real-world deployments must simultaneously address statistical heterogeneity across client data (non-IID), system heterogeneity in device capabilities, and communication efficiency. Existing FL approaches mitigate these challenges through improved aggregation, personalization, or knowledge distillation, but they almost universally assume a fixed client architecture, limiting adaptability to heterogeneous data complexity and hardware constraints. This architectural constraint often leads to suboptimal trade-offs between accuracy and efficiency in real-world FL systems.
This work introduces FedKD-NAS, a distillation-driven FL framework that combines client-side neural architecture selection with distillation of server-coordinated knowledge. Each client autonomously selects a lightweight model under accuracy--resource constraints. It then trains it locally using a hybrid objective combining supervised learning and knowledge distillation, and shares only predictions on a public reference set. The server then aggregates and smooths these predictions, optionally combining them with a teacher model, to produce stable distillation targets for the next round.
Extensive evaluation on six datasets against six representative FL baselines (FedAvg, Ditto, FedMD, FedDF, FedDistill, Local-KD) demonstrates that FedKD-NAS consistently achieves superior Pareto efficiency, improving accuracy by up to 15\% under non-IID conditions, reducing client CPU usage by approximately 28\%, and decreasing communication overhead by up to $44\times$ while maintaining lightweight logit-based communication.
\end{abstract}

\begin{IEEEkeywords}
Federated learning, knowledge distillation, neural architecture search, federated distillation.
\end{IEEEkeywords}

\section{Introduction}\label{sec:introduction}
Federated Learning (FL) is a foundational paradigm for distributed machine learning (ML) that enables clients to jointly train a shared model without centralizing data, addressing privacy and regulatory constraints in sensitive domains such as healthcare, finance, mobile systems and industrial IoT~\cite{mcmahan2017communication}. Despite its promise, real-world FL deployments face a threefold challenge and corresponding trade-off, between: (i) \emph{statistical heterogeneity}, as client data are typically non-IID due to differences in user behavior or sensing conditions; (ii) \emph{system heterogeneity}, as clients vary widely in compute, memory, energy, and availability; and (iii) \emph{communication cost}, which often dominates runtime and energy in bandwidth- or power-constrained settings~\cite{konevcny2016federated, sattler2019robust}. These factors are tightly coupled because non-IID data can induce client drift and unstable convergence~\cite{zhao2018federated, li2020fedprox, karimireddy2020scaffold}, while heterogeneous resources and limited communication constrain how much computation and coordination each client can afford.

To mitigate these effects, different methods have been introduced, including aggregation-based methods \cite{li2020fedprox,karimireddy2020scaffold,wang2020tackling} that modify the federated optimization procedure to improve robustness under non-IID data, personalized FL methods \cite{dinh2020pfedme,li2021ditto} that introduce client-specific components or objectives to better align with local distributions, and knowledge distillation and ensemble-based methods \cite{li2019fedmd,lin2020ensemble} that exchange prediction-level information rather than averaging parameters directly, improving stability under heterogeneity. Despite these advances, existing methods share a common implicit assumption that the client model architecture is fixed and independent of both data distribution and system constraints~\cite{mcmahan2017communication,li2020fedprox}. This assumption, inherited from centralized learning where architectures are selected offline and kept fixed, becomes problematic in federated settings with widely varying compute, memory, and data complexity across clients. A single model may be sufficient for some clients yet under-parameterized for others, while a design targeting worst-case heterogeneity can exceed the resource budgets of constrained devices. 
Existing approaches attempt to mitigate heterogeneity at the optimization level but ignore a more fundamental issue: the mismatch between model capacity and client-specific constraints, which cannot be resolved without adapting the model architecture itself~\cite{he2020fednas}. Recent works (e.g., \cite{he2020fednas, ilhan2022rafl, zhu2021fedgen, song2024feddistill}) integrate neural architecture search (NAS) or knowledge distillation (KD) into FL, but typically rely on supernets or parameter sharing that require joint architecture training across clients, adding significant compute overhead and limiting scalability in large federations.

We argue that \emph{model architecture should be a first-class optimization variable in FL}: a single static model is rarely optimal across heterogeneous clients, making architectural adaptability key to deployable FL. In this regard, resource-aware NAS studies \cite{tan2019mnasnet, wu2019fbnet, tan2019efficientnet} show that tailoring model capacity to hardware and task constraints can improve efficiency without sacrificing accuracy. 
To address these limitations, we propose FedKD-NAS, a FL framework that treats model architecture, rather than only model parameters, as a first-class optimization variable, by enabling lightweight client-side architecture adaptation coupled with knowledge distillation.

Each client selects an appropriate model from a constrained search space and aligns with the federation via distillation targets exchanged at the prediction level. Unlike supernet or parameter-sharing approaches, FedKD-NAS avoids weight sharing and global architecture synchronization, reducing overhead and improving scalability while mitigating representational mismatch and client drift. In summary, our contributions are:
\begin{itemize}
    \item 
    A decentralized NAS mechanism enabling clients to adapt architectures without global coordination;
    \item 
    A distillation-based collaboration protocol that eliminates parameter sharing of heterogeneous models;
    \item 
    A stabilized knowledge transfer mechanism that mitigates client drift under non-IID data;
    \item 
    A unified efficiency framework demonstrating consistent Pareto improvements across accuracy, communication, and resource usage state-of-the-art.

\end{itemize}

The paper is organized as follows: Section~\ref{Sec:SoTa} provides the background and reviews the literature. Section~\ref{sec:FedKD-NAS} introduces FedKD-NAS, while Section~\ref{sec:convergence} presents its convergence analysis. Section~\ref{sec:Exp} reports the experimental setup and results, followed by discussion~\ref{sec:discussion}, limitations of the framework~\ref{sec:limitations} and concluding remarks~\ref{sec:conclusion}.

\section{Background \& Related Work}\label{Sec:SoTa}
Section~\ref{Sec:SoTa:background} reviews the core concepts underpinning this work: federated optimization, model compression and knowledge distillation, neural architecture search, and federated distillation. Section~\ref{sec:SoTa:Literature} then surveys the related literature in FL through the lens of these topics.

\subsection{Background}\label{Sec:SoTa:background}
\subsubsection{Federated optimization}\label{Sec:SoTa:background:Fo}
We adopt the standard federated optimization setting with $K$ clients. Each client $k$ holds a local dataset $\mathcal{D}_k$ of size $n_k$, and the aggregate dataset size is $n=\sum_{k=1}^{K} n_k$. The global objective is to minimize the empirical loss on client $k$, denoted by $F(\mathbf{w})$:
\begin{equation}
\min_{\mathbf{w}}
\; F(\mathbf{w}) = \sum_{k=1}^{K} \frac{n_k}{n} F_k(\mathbf{w})
\end{equation}

In the widely adopted Federated Averaging (FedAvg) algorithm~\cite{mcmahan2017communication}, each client performs multiple steps of local stochastic gradient descent, after which the server aggregates model parameters as:
\begin{equation}
\mathbf{w}^{(r+1)}
= \sum_{k=1}^{K} \frac{n_k}{n} \mathbf{w}_k^{(r)}
\end{equation}

Under non-IID data, local updates can drift from the global optimum and hinder convergence~\cite{zhao2018federated, li2020fedprox, karimireddy2020scaffold}. Moreover, transmitting full model parameters is often prohibitive in resource-constrained settings~\cite{konevcny2016federated, sattler2019robust}. These issues motivate the compression- and architecture-aware extensions introduced next.

\subsubsection{Model compression \& KD}\label{Sec:SoTa:background:model_compression}
Model compression reduces compute, memory, and communication costs, especially in FL where models are repeatedly exchanged under heterogeneous client constraints. However, parameter-level methods such as quantization~\cite{jacob2018quantization} and pruning~\cite{deng2020model} are often architecture-specific and ill-suited to heterogeneous model sizes~\cite{gou2021knowledge}. KD \cite{hinton2015distilling} provides a flexible alternative to parameter compression by transferring knowledge from a teacher to a student through alignment of softened output distributions: given teacher logits $\mathbf{z}_T$ and student logits $\mathbf{z}_S$, the objective minimizes their divergence via the distillation loss, formalized by:
\begin{equation}
\label{eq:kdloss}
\mathcal{L}_{\text{KD}} =
\mathrm{KL}\left(
\sigma(\mathbf{z}_T / T)
\;\|\;
\sigma(\mathbf{z}_S / T)
\right)
\end{equation}

where $T$ denotes the temperature parameter and $\sigma(\cdot)$ denotes the softmax function.
Standard KD assumes a fixed student architecture chosen upfront; selecting it in a resource-aware way for each client motivates NAS, described next.

\subsubsection{Neural Architecture Search (NAS)}\label{Sec:SoTa:background:nas}
NAS automates the design of neural networks by searching a predefined space under joint performance and resource constraints. The generic NAS objective is defined as:
\begin{equation}
\min_{\alpha \in \mathcal{A}}
\; \mathcal{L}(\alpha)
+ \beta \cdot \mathrm{Cost}(\alpha)
\end{equation}

where $\alpha$ denotes a candidate architecture, $\mathcal{A}$ is the search space, and $\mathrm{Cost}(\cdot)$ encodes constraints such as latency, memory, or energy usage~\cite{tan2019mnasnet, wu2019fbnet, tan2019efficientnet}.

Resource-aware NAS shows that adapting architectures to deployment constraints can improve efficiency with minimal accuracy loss~\cite{tan2019mnasnet, wu2019fbnet, tan2019efficientnet}. In FL, it enables clients to choose capacities aligned with local resources and data complexity, but collaboration across heterogeneous architectures then requires prediction-level coordination, making federated distillation essential.

\subsubsection{Federated distillation}\label{Sec:SoTa:background:fd}
Federated distillation replaces weight aggregation with prediction exchange: clients compute logits on a shared public/proxy dataset and upload them to the server, which averages ${\mathbf{z}k}{k=1}^{K}$ to form a consensus target,
\begin{equation}
\mathbf{z}{\text{cons}}^{(r)}=\frac{1}{K}\sum{k=1}^{K}\mathbf{z}_k^{(r)} .
\end{equation}

The server broadcasts this consensus for distillation. Since communication depends only on the reference set size and number of classes (not model parameters), heterogeneous NAS-selected architectures can collaborate without shared weights, assuming an appropriate reference dataset~\cite{li2019fedmd,lin2020ensemble}.

These observations highlight a key gap: while FL, KD, and NAS individually address different aspects of heterogeneity, their joint integration remains underexplored, particularly in a scalable and resource-aware manner.

\begin{table}[t]
\centering
\footnotesize
\setlength{\tabcolsep}{4pt}
\begin{tabular}{lcccccccc}
\toprule
\textbf{Method} &
\textbf{Year} &
\textbf{Type} &
\rotatebox{90}{\textbf{Communication}} &
\rotatebox{90}{\textbf{Statistical heterogeneity}} &
\rotatebox{90}{\textbf{System heterogeneity}} &
\rotatebox{90}{\textbf{Public data}} &
\rotatebox{90}{\textbf{Teacher}} &
\rotatebox{90}{\textbf{Code availability}} \\
\midrule

FedAvg~\cite{mcmahan2017communication} & 2017 & PA & H & \ding{55} & \ding{55} & \ding{55} & \ding{55} & \ding{51} \\
FedProx~\cite{li2020fedprox} & 2020 & PA & H & \ding{51} & \ding{55} & \ding{55} & \ding{55} & \ding{51} \\
Ditto~\cite{li2021ditto} & 2021 & PF & H & \ding{51} & \ding{55} & \ding{55} & \ding{55} & \ding{51} \\
Local-KD~\cite{gou2021knowledge} & 2021 & KD & M & \ding{55} & \ding{51} & \ding{55} & \ding{51} & \ding{51} \\
FD~\cite{jeong2018communication} & 2018 & KD & L & \ding{55} & \ding{51} & \ding{55} & \ding{55} & \ding{55} \\
FedMD~\cite{li2019fedmd} & 2019 & KD & L & \ding{51} & \ding{51} & \ding{51} & \ding{55} & \ding{51} \\
Cronus~\cite{chang2019cronus} & 2019 & KD & L & \ding{51} & \ding{51} & \ding{51} & \ding{55} & \ding{55} \\
FedDF~\cite{lin2020ensemble} & 2020 & EM & L & \ding{55} & \ding{51} & \ding{51} & \ding{51} & \ding{51} \\
FedAUX~\cite{sattler2021fedaux} & 2021 & EM & L & \ding{51} & \ding{51} & \ding{51} & \ding{51} & \ding{55} \\
MHAT~\cite{hu2021mhat} & 2021 & KD & L & \ding{51} & \ding{51} & \ding{51} & \ding{51} & \ding{55} \\
FedGEMS~\cite{cheng2021fedgems} & 2021 & KD & L & \ding{51} & \ding{51} & \ding{51} & \ding{51} & \ding{55} \\
CFD~\cite{sattler2021cfd} & 2021 & KD & L & \ding{55} & \ding{51} & \ding{51} & \ding{55} & \ding{55} \\
FedGen~\cite{zhu2021fedgen} & 2021 & KD & L & \ding{51} & \ding{51} & \ding{55} & \ding{51} & \ding{51} \\
FedZKT~\cite{zhang2021fedzkt} & 2021 & KD & L & \ding{51} & \ding{51} & \ding{55} & \ding{51} & \ding{55} \\
FedFTG~\cite{zhang2022fedftg} & 2022 & KD & L & \ding{51} & \ding{51} & \ding{55} & \ding{51} & \ding{51} \\
DAFKD~\cite{wang2023dafkd} & 2023 & KD & L & \ding{51} & \ding{51} & \ding{55} & \ding{51} & \ding{55} \\
FedET~\cite{liu2023fedet} & 2023 & KD & L & \ding{51} & \ding{51} & \ding{55} & \ding{51} & \ding{55} \\
FedNAS~\cite{he2020fednas} & 2020 & NAS & H & \ding{51} & \ding{55} & \ding{55} & \ding{55} & \ding{51} \\
SPIDER~\cite{mushtaq2021spider} & 2021 & NAS & H & \ding{51} & \ding{55} & \ding{55} & \ding{55} & \ding{55} \\
RaFL~\cite{ilhan2022rafl} & 2022 & NAS+KD & L & \ding{51} & \ding{51} & \ding{51} & \ding{51} & \ding{55} \\
FedDistill~\cite{song2024feddistill} & 2024 & KD & L & \ding{51} & \ding{51} & \ding{51} & \ding{51} & \ding{51} \\
AdaptFL~\cite{adaptfl2024} & 2024 & NAS+KD & L & \ding{51} & \ding{51} & \ding{51} & \ding{51} & \ding{55} \\
HAPFNAS~\cite{yang2025hapfnas} & 2025 & NAS+KD & H & \ding{51} & \ding{51} & \ding{55} & \ding{55} & \ding{55} \\
\textbf{FedKD-NAS} & \textbf{2026} & \textbf{NAS+KD} & \textbf{L} & \textbf{\ding{51}} & \textbf{\ding{51}} & \textbf{\ding{51}} & \textbf{\ding{51}} & \textbf{\ding{51}} \\

\bottomrule
\end{tabular}
\caption{Comparison of representative FL methods across key dimensions. Column ``Type'' indicates whether the method relies on parameter aggregation (PA), KD, ensemble-based distillation (EM), or personalized FL (PF). Column ``Communication'' reflects the relative communication overhead induced by the method: Low (L), Medium (M), and High (H). Among the compared methods, \textbf{FedKD-NAS} simultaneously supports heterogeneous client architectures and heterogeneous data distributions while keeping communication overhead low through logit-based distillation and resource-aware client-side architecture selection.}
\label{tab:method_comparison_horizontal}
\end{table}

\subsection{Related Works}\label{sec:SoTa:Literature}
Our literature review focuses on the intersection of the four core concepts introduced in Sections~\ref{Sec:SoTa}. \tablename~\ref{tab:method_comparison_horizontal} summarizes representative FL frameworks categorized according to the following properties:
\begin{itemize}
\item \textbf{Type:} whether the proposed method relies on parameter aggregation (A), ensemble-based distillation (E), personalized FL (P), KD, NAS or combination of the two (NAS+KD);
\item \textbf{Communication:} the relative communication cost induced by the method (e.g., parameter exchange vs.\ logit sharing). A three-level cost scale is defined: (L) Low (less than 0.1× model size); (M) Medium (between 0.1× and 1× model size); (H) High (greater than 1× model size);
\item \textbf{Statistical heterogeneity:} whether the method explicitly addresses heterogeneous client data distributions;
\item \textbf{System heterogeneity:} whether heterogeneous client architectures or device capabilities are supported;
\item \textbf{Public data:} whether a public or proxy dataset is required for collaborative knowledge transfer;
\item \textbf{Teacher model:} whether a centralized or server-side teacher model is used during distillation (Yes, No, or Optional).
\end{itemize}

Early federated learning systems rely on parameter aggregation across participating clients, with Federated Averaging (FedAvg) serving as the canonical baseline for collaborative model training~\cite{mcmahan2017communication}. FedAvg aggregates locally trained models using weighted parameter averaging but often suffers from degraded convergence when client data distributions are heterogeneous or non-IID~\cite{zhao2018federated,kairouz2021advances}. Several extensions were proposed to improve robustness under heterogeneous conditions, such as FedProx (proximal regularization to limit local-global model divergence during training~\cite{li2020fedprox})~\cite{li2020fedprox}. Personalized FL addresses client variability by allowing each client to maintain a personalized model while still benefiting from global collaboration, as in Ditto, which jointly optimizes personalized and global models through regularization to improve robustness under heterogeneous client distributions~\cite{li2021ditto}. Although such methods improve optimization stability, they rely on exchanging full model parameters and assume homogeneous architectures across clients.

To reduce communication overhead and support heterogeneous models, a growing body of work replaces parameter aggregation with KD, allowing clients to exchange predictions instead of model parameters~\cite{hinton2015distilling,gou2021knowledge}. These FD methods decouple communication cost from model size and enable collaboration across heterogeneous architectures. Early approaches aggregate class-wise logits to produce global soft targets~\cite{jeong2018communication}, while FedMD enables heterogeneous clients to distill knowledge from averaged predictions on a shared public dataset~\cite{li2019fedmd}. Subsequent work explores iterative proxy-data distillation, collaborative co-distillation across rounds~\cite{chang2019cronus,anil2018large,gou2021knowledge}, and server-side ensemble distillation, as in FedDF~\cite{lin2020ensemble}. More recent methods improve robustness under heterogeneous data distributions, for example through lightweight class-wise logit exchange in FedDistill~\cite{song2024feddistill}, public data or stronger teacher guidance~\cite{sattler2021fedaux,hu2021mhat,cheng2021fedgems}, and communication-efficient compression schemes such as CFD~\cite{sattler2021cfd}. While many distillation-based methods rely on a shared public dataset, several studies explore data-free alternatives using generative models for knowledge transfer. FedGen introduces a server-side generator to synthesize features without public data~\cite{zhu2021fedgen}, and later works extend this idea to improve robustness under heterogeneous client distributions, including FedZKT, FedFTG, DAFKD, and FedET~\cite{zhang2021fedzkt,zhang2022fedftg,wang2023dafkd,liu2023fedet}. However, these methods often add training complexity and instability due to generative modeling.

Beyond communication and data challenges, FL must also address system heterogeneity across edge devices with widely varying compute and memory capacities~\cite{kairouz2021advances}. Resource-aware FL adapts training or model complexity to device capabilities, and KD is particularly suitable because it decouples model architecture from communication~\cite{gou2021knowledge,li2019fedmd}. However, many methods still rely on manually chosen architectures and do not explicitly consider device constraints. Recent work therefore combines NAS with FL to automate architecture adaptation in heterogeneous settings, including FedNAS and SPIDER, and more recent resource-aware distillation frameworks such as RaFL and AdaptFL~\cite{he2020fednas,mushtaq2021spider,ilhan2022rafl,adaptfl2024}. More recently, HAPFNAS~\cite{yang2025hapfnas} combines one-shot federated NAS with ensemble distillation to train a server-side supernet under joint resource and statistical heterogeneity.  However, it still relies on centralized supernet training and weight-based communication, inducing high per-round overhead and limiting scalability to large federations.

Despite this progress in FL over the past decade, many methods still rely on shared supernets or focus on predictive performance without explicitly considering deployment constraints, as summarized in \tablename~\ref{tab:method_comparison_horizontal}. In particular, the interplay between architecture selection, distillation target stability, and device-level resource budgets remains underexplored. To address this, we propose \textbf{FedKD-NAS}, a distillation-driven FL framework that combines lightweight local architecture selection under explicit resource constraints with server-coordinated KD. Unlike supernet-based methods such as RaFL and AdaptFL (see \tablename~\ref{tab:method_comparison_horizontal}), FedKD-NAS performs architecture selection locally while using a persistent server teacher to stabilize distillation targets across rounds~\cite{ilhan2022rafl,adaptfl2024}.

\begin{figure*}[t]
\centering
\includegraphics[width=0.9\textwidth]{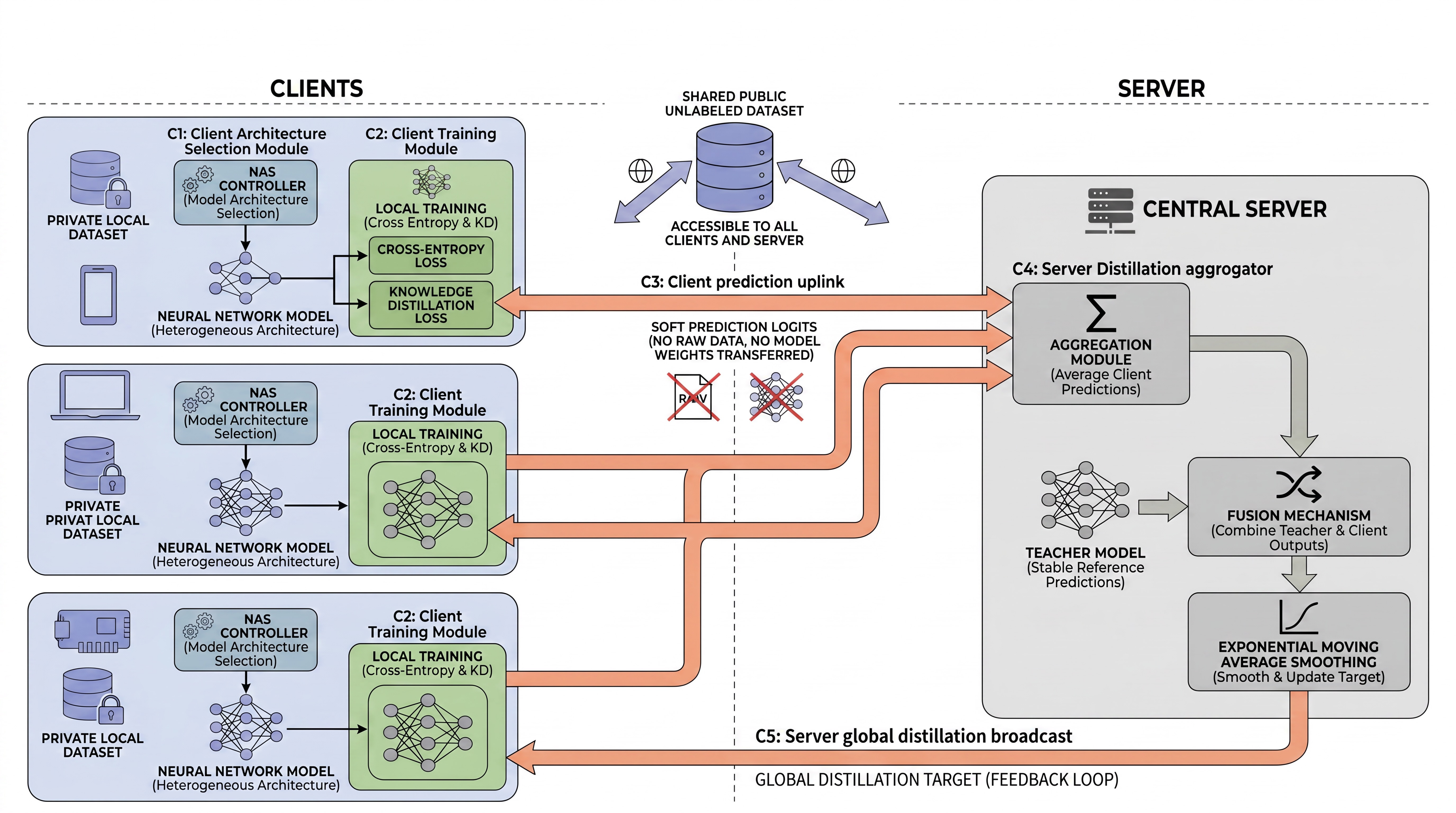}
\caption{Overview of the proposed FedKD-NAS architecture. At each communication round, each client independently selects a resource-adapted architecture via the NAS controller and trains its local student model using a combination of supervised loss and distillation from the global target. Clients transmit only soft predictions on the shared public dataset to the server. The server aggregates these predictions, fuses them with teacher guidance, and broadcasts a smoothed distillation target to all clients for the next round.}
\label{fig:NASKD}
\end{figure*}

\section{FedKD-NAS Framework}\label{sec:FedKD-NAS}
This section introduces the proposed \textbf{FedKD-NAS} framework. Section~\ref{sec:FedKD-NAS:Problem} formalizes the the system and problem. Section~\ref{sec:FedKD-NAS:Archi} describes the architecture and building blocks underlying FedKD-NAS.

\subsection{System scenario and Notation}\label{sec:FedKD-NAS:Problem}
Consider a FL system composed of a set of devices that collaboratively train a machine learning model. The system consists of a central server and $N$ participating clients, denoted by $\mathcal{C}=\{1,\dots,N\}$.  Each client $i \in \mathcal{C}$ owns a private local dataset $D_i^{\mathrm{priv}}=\{(x_k^i,y_k^i)\}_{k=1}^{n_i}$ drawn from a client-specific data distribution $\mathcal{P}_i$ which may vary from one client to another, inducing statistical heterogeneity (non-IID). Neither raw data nor model parameters are shared at any point during training.

In addition to private data, all participants have access to a small shared \emph{public reference dataset} $D^{\mathrm{pub}}=\{x_j\}_{j=1}^{M}$ of $M$ unlabeled samples drawn from the same or a related domain. This dataset serves exclusively as a common inference surface for prediction exchange and KD, and contains no sensitive information. It may be obtained from open data repositories, domain-related public sources, or synthetic generation.

Each client $i$ maintains a local student model $f_i(\cdot\,,\theta_i,\mathcal{A}_i)$, parameterized by weights $\theta_i$ and a neural architecture $\mathcal{A}_i$ drawn from a predefined search space $\mathcal{A}$. The server maintains a fixed teacher model $f_T(\cdot\,,\theta_T)$, pre-trained on $D^{\mathrm{pub}}$, which provides a stable global knowledge reference throughout training. Client distributions $\{\mathcal{P}_i\}$ are assumed to be heterogeneous in general, reflecting the non-IID conditions prevalent in real federated deployments. The federated procedure runs for $R$ communication rounds, indexed by $r \in \{1,\dots,R\}$.

\subsection{FedKD-NAS Architecture}\label{sec:FedKD-NAS:Archi}
FedKD-NAS targets two main failure modes in FL: \emph{(i) representation mismatch}, caused by resource heterogeneity, where a single shared architecture is either too large for constrained devices or too limited to capture complex local data; and \emph{(ii) client drift}, caused by non-IID data, where local objectives diverge across clients and lead to incompatible solutions that degrade global performance. To address these challenges simultaneously, FedKD-NAS integrates five main components, as illustrated in \figurename~\ref{fig:NASKD} and listed below:
\begin{enumerate}[label=C\arabic*]
    \item \textbf{Client Architecture Selection Module (NAS Controller)}: it selects a resource-adapted architecture from the search space $\mathcal{A}$ for each client, balancing predictive performance against local computational constraints;
    \item \textbf{Client Training Module}: it trains the selected student architecture locally using a hybrid objective that combines supervised cross-entropy on private data with KD from a globally shared target;
    \item \textbf{Client Prediction Uplink}: transmits each client's soft predictions on $D^{\mathrm{pub}}$ to the server, replacing costly weight transmission with a lightweight, architecture-independent communication primitive;
    \item \textbf{Server Distillation Aggregator}: it aggregates the received client predictions and fuses them with the teacher model's outputs to form a globally consistent knowledge signal;
    \item \textbf{Server Global Distillation Broadcast}: it applies temporal smoothing to the aggregated target and broadcasts it to all clients as the distillation supervision for the next round.
\end{enumerate}

The key design principle of FedKD-NAS is to decouple model capacity adaptation from global coordination, while maintaining alignment across heterogeneous clients through prediction-level knowledge transfer. This novel design enables each client to operate at its optimal capacity without introducing parameter incompatibility. The global federated procedure coordinating these components is summarized in Algorithm~\ref{alg:global}. The five components are described in Sections~\ref{sec:FedKD-NAS:Archi:NAS} to \ref{sec:FedKD-NAS:Archi:server_agg}.

\begin{algorithm}[t]
\caption{FedKD-NAS: Global Federated Procedure}
\label{alg:global}
\footnotesize
\begin{algorithmic}[1]
\Require Clients~$\mathcal{C}$, $D^{\mathrm{pub}}$, $R$, $\gamma$
\State Pre-train teacher model $f_T$ on $D^{\mathrm{pub}}$
\State Initialize global distillation target: $\tilde{\mathbf{Z}}^{(0)} \gets \{f_T(x_j)\}_{j=1}^{M}$ \Comment{Warm-start from teacher predictions}
\For{$r \gets 1$ \textbf{to} $R$}
    \State Broadcast $\tilde{\mathbf{Z}}^{(r-1)}$ to all clients \hfill \textit{(C5)}
    \ForAll{clients $i \in \mathcal{C}$ \textbf{in parallel}}
        \State Select architecture $\mathcal{A}_i^\star$ via NAS controller \hfill \textit{(C1, Algo.~\ref{alg:nas})}
        \State Train local student model using $\mathcal{L}_i^{(r)}$ \hfill \textit{(C2, Algo.~\ref{alg:client})}
        \State Compute and transmit predictions $\mathbf{P}_i^{(r)}$ on $D^{\mathrm{pub}}$ to server \hfill \textit{(C3)}
    \EndFor
    \State Aggregate predictions and update $\tilde{\mathbf{Z}}^{(r)}$ \hfill \textit{(C4, Algo.~\ref{alg:server})}
\EndFor
\end{algorithmic}
\end{algorithm}

\subsubsection{C1: Client Architecture Selection Module}\label{sec:FedKD-NAS:Archi:NAS}
A key enabler of FedKD-NAS is the ability of each client to operate with an architecture tailored to its local data complexity and available computational resources, rather than being constrained to a fixed global model. This is achieved through a lightweight, resource-aware NAS procedure executed independently by each client at every round.

\paragraph{Search space}
The search space $\mathcal{A}$ is constructed by varying the teacher model's structural parameters (incl., depth, width, kernel sizes, and expansion ratios) yielding a family of sub-networks that span a broad range of capacity and efficiency trade-offs. Each candidate architecture $a \in \mathcal{A}$ is associated with a resource profile
\[
    \mathrm{cost}(a) = \bigl(\mathrm{FLOPs}(a),\,\mathrm{Params}(a),\,\mathrm{Mem}(a),\,\mathrm{Latency}(a)\bigr).
\]
When direct hardware profiling is unavailable, these quantities are approximated from observed CPU usage and peak memory consumption during a few local training epochs.

\paragraph{Architecture selection}
Client~$i$ selects its architecture at round~$r$ by solving the following performance-efficiency trade-off:
\begin{equation}
\label{eq:arch_select}
a_i^{(r)} \in \arg\max_{a \in \mathcal{A}_i^{(r)}}
\Bigl(\widehat{\mathrm{Perf}}_i(a) - \lambda\,\widehat{\mathrm{Cost}}_i(a)\Bigr),
\end{equation}
where $\widehat{\mathrm{Perf}}_i(a)$ denotes a proxy validation accuracy obtained after brief training on a local held-out split, $\widehat{\mathrm{Cost}}_i(a)$ is a normalized resource consumption proxy, and $\lambda \geq 0$ controls the accuracy-efficiency trade-off. To keep selection overhead minimal, only a small random subset $\mathcal{A}_i^{(r)} \subset \mathcal{A}$ is evaluated per round. The full NAS procedure is detailed in Algorithm~\ref{alg:nas}.

\begin{algorithm}[t]
\caption{Resource-Aware NAS Controller}\label{alg:nas}
\footnotesize
\begin{algorithmic}[1]
\Require $\mathcal{S}$, $\mathcal{B}_i$, $D^{\mathrm{pub}}$, $\lambda$
\State Initialize candidate set $\mathcal{A} \subset \mathcal{S}$
\For{$t \gets 1$ \textbf{to} $T$}
    \State Sample a subset of candidate architectures $\{\mathcal{A}_k\} \subset \mathcal{A}$
    \ForAll{candidates $\mathcal{A}_k$}
        \State Train $\mathcal{A}_k$ briefly on $D^{\mathrm{pub}}$
        \State Evaluate proxy accuracy $\mathrm{Acc}(\mathcal{A}_k)$ and resource cost $\mathrm{Cost}(\mathcal{A}_k)$
    \EndFor
    \State $\mathcal{A}_i^\star \gets \arg\max_{\mathcal{A}_k}\bigl(\mathrm{Acc}(\mathcal{A}_k) - \lambda \cdot \mathrm{Cost}(\mathcal{A}_k)\bigr)$
    \State Refine search space $\mathcal{S}$ around $\mathcal{A}_i^\star$
\EndFor
\State \Return $\mathcal{A}_i^\star$
\end{algorithmic}
\end{algorithm}

\subsubsection{C2: Client Training Module}\label{sec:FedKD-NAS:Archi:client_train}
Once an architecture $\mathcal{A}_i^\star$ is selected, client $i$ trains its student model $f_i(\cdot\,,\theta_i,\mathcal{A}_i^\star)$ for $E$ local epochs using a hybrid objective that integrates both supervised learning and KD. Specifically, the local loss at round $r$ is defined as:
\begin{equation}
\label{eq:client_obj}
\mathcal{L}_i^{(r)} =
\underbrace{\mathcal{L}_{\mathrm{CE}}\!\left(f_i(x,\theta_i),\, y\right)}_{\text{supervised term}}
+ \,\alpha\,\underbrace{\mathcal{L}_{\mathrm{KD}}\!\left(f_i(x,\theta_i),\, \tilde{\mathbf{Z}}^{(r)}\right)}_{\text{distillation term}},
\end{equation}
where $\alpha \geq 0$ controls the relative weight of the distillation term, and $\tilde{\mathbf{Z}}^{(r)}$ is the global distillation target broadcast by the server at round~$r$. The distillation term is computed as the Kullback-Leibler divergence between temperature-scaled distributions:
\begin{equation}
\label{eq:kd_loss}
\mathcal{L}_{\mathrm{KD}} = \mathrm{KL}\!\left(
\mathrm{softmax}\!\left(\tfrac{\tilde{\mathbf{Z}}^{(r)}}{T}\right)
\,\middle\|\,
\mathrm{softmax}\!\left(\tfrac{f_i(x,\theta_i)}{T}\right)
\right),
\end{equation}
with temperature $T > 0$ controlling the softness of the target distribution. Higher values of $T$ produce softer probability distributions, which encode richer inter-class relational information beneficial for knowledge transfer. Each client solves:
\begin{equation}
\label{eq:client_objective}
\min_{\theta_i} \,\, \mathcal{L}_i^{(r)},
\end{equation}
using stochastic gradient descent (SGD) or Adam over $E$ local epochs. The distillation term operates at the prediction level, enabling functional alignment across clients with arbitrarily heterogeneous architectures, without requiring any constraint on parameter shapes or model sizes. This component is detailed in Algorithm~\ref{alg:client} 

\begin{algorithm}[t]
\caption{Client-Side Training with NAS+D}
\label{alg:client}
\footnotesize
\begin{algorithmic}[1]
\Require Client~$i$, $D_i^{\mathrm{priv}}$, $D^{\mathrm{pub}}$, $\tilde{\mathbf{Z}}^{(r)}$, $E$
\State Select architecture: $\mathcal{A}_i^\star \gets \textsc{NASController}(\mathcal{S}, \mathcal{B}_i, D^{\mathrm{pub}})$ \hfill (Algo.~\ref{alg:nas})
\State Initialize student model $f_i(\cdot\,,\theta_i, \mathcal{A}_i^\star)$
\For{$e \gets 1$ \textbf{to} $E$}
    \State Sample minibatch $(x, y) \sim D_i^{\mathrm{priv}}$
    \State Compute hybrid loss $\mathcal{L}_i^{(r)}$ using Eq.~(\ref{eq:client_obj}) and Eq.~(\ref{eq:kd_loss})
    \State Update $\theta_i$ via gradient descent on $\mathcal{L}_i^{(r)}$
\EndFor
\State Compute prediction logits on public dataset: $\mathbf{P}_i^{(r)} \gets \bigl\{f_i(x_j,\theta_i)\bigr\}_{j=1}^{M}$
\State \Return $\mathbf{P}_i^{(r)}$
\end{algorithmic}
\end{algorithm}

\subsubsection{C3: Client Prediction Uplink}\label{sec:FedKD-NAS:Archi:Comm}
Rather than transmitting model weights, each client communicates its knowledge to the server by uploading its soft prediction tensor on the shared public dataset. Formally, at round~$r$, client~$i$ sends:
\begin{equation}
\mathbf{P}_i^{(r)} = \bigl\{f_i(x_j,\theta_i^{(r)},\mathcal{A}_i)\bigr\}_{j=1}^{M} \in \mathbb{R}^{M \times C},
\end{equation}
where $C$ is the number of output classes. 

Transmitting float32 logits costs about $4MC$ bytes per client per round, far less than sending full model parameters for moderately sized networks. Crucially, this cost is independent of client architecture, making the protocol naturally compatible with the heterogeneous model sizes induced by the NAS controller..

\subsubsection{C4 \& C5: Server Distillation Aggregator \& Global Distillation Broadcast}\label{sec:FedKD-NAS:Archi:server_agg}
Upon receiving the prediction tensors from all clients, the server executes a two-stage process: it first constructs a globally consistent distillation target by aggregating client and teacher knowledge (C4), then broadcasts this target back to all clients (C5).

\paragraph{Client prediction aggregation}
The server computes an ensemble of client predictions by averaging the received logits:
\begin{equation}
\label{eq:aggregation}
\mathbf{P}_{\mathrm{agg}}^{(r)} = \frac{1}{N}\sum_{i=1}^{N}\mathbf{P}_i^{(r)}.
\end{equation}
When client predictions are expected to be noisy or adversarially corrupted, robust alternatives such as coordinate-wise median or trimmed mean may be substituted without altering the remainder of the pipeline.

\begin{algorithm*}[t]
\caption{Server-Side Knowledge Aggregation and Broadcast}
\label{alg:server}
\begin{algorithmic}[1]
\Require Client prediction tensors $\{\mathbf{P}_i^{(r)}\}_{i=1}^{N}$, teacher predictions $\mathbf{P}_T$, mixing coefficient $\beta^{(r)}$, EMA coefficient $\gamma$, previous target $\tilde{\mathbf{Z}}^{(r-1)}$
\State Aggregate client predictions: $\,\mathbf{P}_{\mathrm{agg}}^{(r)} \gets \tfrac{1}{N}\sum_{i=1}^{N}\mathbf{P}_i^{(r)}$
\State Fuse with teacher guidance: $\,\mathbf{Z}^{(r)} \gets \beta^{(r)}\,\mathbf{P}_T + \bigl(1-\beta^{(r)}\bigr)\,\mathbf{P}_{\mathrm{agg}}^{(r)}$
\State Apply EMA smoothing: $\,\tilde{\mathbf{Z}}^{(r)} \gets \gamma\,\tilde{\mathbf{Z}}^{(r-1)} + (1-\gamma)\,\mathbf{Z}^{(r)}$
\State Broadcast $\tilde{\mathbf{Z}}^{(r)}$ to all clients
\State \Return $\tilde{\mathbf{Z}}^{(r)}$
\end{algorithmic}
\end{algorithm*}

\paragraph{Teacher-client knowledge fusion}
To prevent the global target from drifting toward the potentially biased consensus of client predictions, the server interpolates between the aggregated client logits and the teacher model's reference predictions $\mathbf{P}_T = \{f_T(x_j,\theta_T)\}_{j=1}^{M}$:
\begin{equation}
\label{eq:target}
\mathbf{Z}^{(r)} = \beta^{(r)}\,\mathbf{P}_T + \bigl(1-\beta^{(r)}\bigr)\,\mathbf{P}_{\mathrm{agg}}^{(r)},
\end{equation}
where $\beta^{(r)} \in [0,1]$ controls the relative influence of the teacher at round $r$. This interpolation mechanism provides a stable knowledge anchor, particularly in early rounds when client models are still under-trained.

\paragraph{Temporal smoothing and broadcast}
To further suppress round-to-round fluctuations in the distillation target, the server applies an exponential moving average (EMA):
\begin{equation}
\label{eq:ema}
\tilde{\mathbf{Z}}^{(r)} = \gamma\,\tilde{\mathbf{Z}}^{(r-1)} + (1-\gamma)\,\mathbf{Z}^{(r)},
\end{equation}
where $\gamma \in [0,1]$ is the EMA decay coefficient. The smoothed target $\tilde{\mathbf{Z}}^{(r)}$ is then broadcast to all clients as the global distillation supervision for round $r+1$. The full server-side procedure is summarized in Algorithm~\ref{alg:server}.

Overall, the computational overhead of FedKD-NAS consists of lightweight local NAS exploration and standard local training, while the communication cost scales with the size of the public dataset rather than the model parameter count. This design supports scalability to large and heterogeneous federations by decoupling communication from model size. Accordingly, FedKD-NAS is guided by three key principles: (i) local autonomy through client-specific architecture adaptation, (ii) global alignment through prediction-level knowledge transfer, and (iii) stability through teacher-guided aggregation and temporal smoothing.

\section{FedKD-NAS Convergence Analysis}\label{sec:convergence}
\label{sec:convergence}
This section proves convergence guarantees for FedKD-NAS, which differs from classical FL in two ways that prevent a direct use of standard FedAvg analysis~\cite{mcmahan2017communication}: (i) clients exchange soft predictions on $D^{\mathrm{pub}}$ instead of model parameters, and (ii) clients may use different NAS-selected architecture. Therefore, there is no single shared parameter space and the objective changes across rounds.
We analyze FedKD-NAS as stochastic optimization over round-wise objectives ${\mathcal{L}_i^{(r)}}$ (Eq.~\ref{eq:client_obj}), driven by the time-varying distillation target $\tilde{\mathbf{Z}}^{(r)}$ built by the server through teacher-client fusion and EMA smoothing (Eq.~\eqref{eq:target}-\eqref{eq:ema}). 
The analysis proceeds in three parts: EMA-target stability (Lemma~\ref{lem:ema}), a round-averaged stationarity bound (Theorem~\ref{thm:main}), and convergence implications.

At each round~$r$, each client~$i$ executes $E$ local SGD steps with learning rate $\eta > 0$:
\begin{equation}
\label{eq:local_sgd}
\theta_{i,t+1}^{(r)} = \theta_{i,t}^{(r)} - \eta\, g_{i,t}^{(r)}, \qquad t = 0, \dots, E-1,
\end{equation}
where $g_{i,t}^{(r)}$ is the stochastic gradient of $\mathcal{L}_i^{(r)}$ (Eq.~\eqref{eq:client_obj}) computed from mini-batches drawn from $D_i^{\mathrm{priv}}$ and $D^{\mathrm{pub}}$. All notations and model definitions follow the ones introduced in section~\ref{sec:FedKD-NAS:Problem}.

\subsection{Assumptions}\label{subsec:assumptions}

We use standard nonconvex stochastic-optimization assumptions adapted to prediction-based federated distillation.

\begin{itemize}

     \item  \textbf{A1: $L$-Smoothness,} For each client $i$ and round $r$, the local objective $\mathcal{L}_i^{(r)}$ is differentiable and its gradient is $L$-Lipschitz continuous, as shown in Equation \ref{eq:a1}. This is a standard assumption used in previous studies ~\cite{mcmahan2017communication,li2019convergence,karimireddy2020scaffold}.
\begin{equation}
\label{eq:a1}
\|\nabla \mathcal{L}_i^{(r)}(\theta) - \nabla \mathcal{L}_i^{(r)}(\theta')\|
\le L \|\theta - \theta'\|, \quad \forall\, \theta,\theta'.
\end{equation}

    \item  \textbf{A2: Unbiased Stochastic Gradients with Bounded Variance,} As shown in Equation \ref{eq:a2}:
\begin{equation}
\label{eq:a2}
\mathbb{E}\big[g_{i,t}^{(r)} \mid \theta_{i,t}^{(r)}\big]
= \nabla \mathcal{L}_i^{(r)}(\theta_{i,t}^{(r)}),
\end{equation}
and its variance is uniformly bounded: there exists $\sigma^2 > 0$, as shown in Equation \ref{eq:a2_1}. The bounded variance conditions used based on several studies~\cite{ghadimi2013stochastic,reddi2021adaptive}.
\begin{equation}
\label{eq:a2_1}
\mathbb{E}\Big[\|g_{i,t}^{(r)} - \nabla \mathcal{L}_i^{(r)}(\theta_{i,t}^{(r)})\|^2 \,\Big|\, \theta_{i,t}^{(r)}\Big] \le \sigma^2.
\end{equation}

    \item  \textbf{A3: Lower-Bounded Objective}, there exists a finite constant $\mathcal{L}_{\inf} \in \mathbb{R}$ (see Equation \ref{eq:a3}), ensuring well-posedness of the optimization problem.
\begin{equation}
\label{eq:a3}
\mathcal{L}_i^{(r)}(\theta) \ge \mathcal{L}_{\inf} \quad \forall\, i, r, \theta,
\end{equation}

\item  \textbf{A4: Bounded Logit Targets on the Public Set}, there exists $B > 0$ such that for all rounds $r$,
\begin{equation}
\|\mathbf{Z}^{(r)}\|_F \le B, \qquad \|\tilde{\mathbf{Z}}^{(r)}\|_F \le B.
\end{equation}
This is natural in knowledge distillation, where logits are computed on a finite public dataset and temperature scaling ensures bounded network outputs~\cite{hinton2015distilling,anil2018large}.

\item  \textbf{A5: Lipschitz Dependence of KD Gradients on the Target}, there exists $L_Z > 0$ such that for all $\theta$ and any two targets $\mathbf{U}, \mathbf{V} \in \mathbb{R}^{M \times C}$,
\begin{equation}
\left\|\nabla_\theta \mathcal{L}_{\mathrm{KD}}(\theta, \mathbf{U}) - \nabla_\theta \mathcal{L}_{\mathrm{KD}}(\theta, \mathbf{V})\right\| \le L_Z \|\mathbf{U} - \mathbf{V}\|_F.
\end{equation}
This follows from the smoothness of the softmax function and the KL divergence under bounded logits and temperature scaling~\cite{hinton2015distilling,lin2020ensemble}.

\item  \textbf{A6: Controlled Architecture Switching}, We assume that changes in the selected architecture do not lead to unbounded perturbations in the gradients. Specifically, there exists a constant $\Delta_A \ge 0$, as shown in Equation \ref{eq:a6}.
\begin{equation}
\label{eq:a6}
\mathbb{E}\!\left[\left\|\nabla \mathcal{L}_i^{(r)}(\theta)\big|_{a_i^{(r)}} - \nabla \mathcal{L}_i^{(r)}(\theta)\big|_{a_i^{(r-1)}}\right\|^2\right] \le \Delta_A^2.
\end{equation}
This assumption is similar to the bounded client-drift condition in heterogeneous federated optimization settings~\cite{li2020federated, wang2020tackling}. In practice, the bound is typically small when the NAS controller operates within a restricted, resource-constrained architecture space and avoids large oscillations between consecutive communication rounds.

\end{itemize}

\subsection{The EMA Distillation Target Stability}
\label{subsec:ema_stability}

We analyze the temporal variability of the distillation target. To this end, we define the raw target drift as
$\Delta_Z^{(r)} := \|\mathbf{Z}^{(r)} - \mathbf{Z}^{(r-1)}\|_F$ and the corresponding smoothed target drift $\Delta_{\tilde{Z}}^{(r)} := \|\tilde{\mathbf{Z}}^{(r)} - \tilde{\mathbf{Z}}^{(r-1)}\|_F$.

\begin{lemma}\textbf{\textit{EMA Drift Bound}},\label{lem:ema}
Let $\tilde{\mathbf{Z}}^{(r)}$ be defined according to Eq.~\eqref{eq:ema},  with initialization $\tilde{\mathbf{Z}}^{(0)} = \mathbf{Z}^{(0)}$. Then
\begin{equation}
\Delta_{\tilde{Z}}^{(r)} = (1-\gamma)\|\mathbf{Z}^{(r)} - \tilde{\mathbf{Z}}^{(r-1)}\|_F \le (1-\gamma)\sum_{k=0}^{r-1} \gamma^k \Delta_Z^{(r-k)}.
\end{equation}
In particular, if $\Delta_Z^{(r)} \le \delta_Z$ for all $r$, then $\Delta_{\tilde{Z}}^{(r)} \le (1 - \gamma^r)\delta_Z \le \delta_Z$, so that instantaneous target drift is effectively low-pass filtered by the EMA coefficient $\gamma$.
\end{lemma}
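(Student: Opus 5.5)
The plan is to turn the EMA recursion Eq.~\eqref{eq:ema} into a linear recursion for the \emph{innovation} $\mathbf{E}^{(r)} := \mathbf{Z}^{(r)} - \tilde{\mathbf{Z}}^{(r-1)}$, unroll it, and then apply the triangle inequality.

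\textbf{Step 1: the equality.} Subtracting $\tilde{\mathbf{Z}}^{(r-1)}$ from both sides of Eq.~\eqref{eq:ema} gives
\begin{equation*}
\tilde{\mathbf{Z}}^{(r)} - \tilde{\mathbf{Z}}^{(r-1)} = (1-\gamma)\bigl(\mathbf{Z}^{(r)} - \tilde{\mathbf{Z}}^{(r-1)}\bigr) = (1-\gamma)\mathbf{E}^{(r)}.
\end{equation*}
Taking Frobenius norms yields the first claimed identity $\Delta_{\tilde{Z}}^{(r)} = (1-\gamma)\|\mathbf{E}^{(r)}\|_F$.

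\textbf{Step 2: a recursion for the innovation.} For $r \ge 2$, I will substitute $\tilde{\mathbf{Z}}^{(r-1)} = \gamma\tilde{\mathbf{Z}}^{(r-2)} + (1-\gamma)\mathbf{Z}^{(r-1)}$ into the definition of $\mathbf{E}^{(r)}$. Adding and subtracting $\mathbf{Z}^{(r-1)}$ gives
\begin{equation*}
\mathbf{E}^{(r)} = \bigl(\mathbf{Z}^{(r)} - \mathbf{Z}^{(r-1)}\bigr) + \gamma\bigl(\mathbf{Z}^{(r-1)} - \tilde{\mathbf{Z}}^{(r-2)}\bigr) = \bigl(\mathbf{Z}^{(r)} - \mathbf{Z}^{(r-1)}\bigr) + \gamma\,\mathbf{E}^{(r-1)}.
\end{equation*}
The base case uses the initialization $\tilde{\mathbf{Z}}^{(0)} = \mathbf{Z}^{(0)}$, which gives $\mathbf{E}^{(1)} = \mathbf{Z}^{(1)} - \mathbf{Z}^{(0)}$. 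Here $\mathbf{Z}^{(0)}$ is read as the teacher warm-start of Algorithm~\ref{alg:global}, so that $\Delta_Z^{(1)}$ is well defined.

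\textbf{Step 3: unrolling.} Induction on $r$ then gives
\begin{equation*}
\mathbf{E}^{(r)} = \sum_{k=0}^{r-1}\gamma^k\bigl(\mathbf{Z}^{(r-k)} - \mathbf{Z}^{(r-k-1)}\bigr).
\end{equation*}
The triangle inequality bounds its norm by $\sum_{k=0}^{r-1}\gamma^k\Delta_Z^{(r-k)}$. Multiplying by $(1-\gamma)$ gives the main inequality.

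\textbf{Step 4: the uniform case.} If $\Delta_Z^{(r)} \le \delta_Z$ for all $r$, the geometric sum $(1-\gamma)\sum_{k=0}^{r-1}\gamma^k = 1-\gamma^r$ gives $\Delta_{\tilde{Z}}^{(r)} \le (1-\gamma^r)\delta_Z$. Since $\gamma \in [0,1]$, this is at most $\delta_Z$. The edge cases are consistent: for $\gamma = 0$ the bound is just $\Delta_Z^{(r)}$, and for $\gamma = 1$ both sides vanish.

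No step is genuinely hard. The only point needing care is the indexing at the base case: the sum must end exactly at $\Delta_Z^{(1)}$, which is what forces the initialization $\tilde{\mathbf{Z}}^{(0)} = \mathbf{Z}^{(0)}$. With a different initialization, an extra term $\gamma^{r-1}\|\mathbf{Z}^{(0)} - \tilde{\mathbf{Z}}^{(0)}\|_F$ would appear, and I would verify that it vanishes under the stated convention.
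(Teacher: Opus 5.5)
Your proof is correct and matches the paper's argument step for step. Both use the one-step identity, then the recursion $\mathbf{Z}^{(r)} - \tilde{\mathbf{Z}}^{(r-1)} = (\mathbf{Z}^{(r)} - \mathbf{Z}^{(r-1)}) + \gamma(\mathbf{Z}^{(r-1)} - \tilde{\mathbf{Z}}^{(r-2)})$ unrolled to the base case fixed by $\tilde{\mathbf{Z}}^{(0)} = \mathbf{Z}^{(0)}$, followed by the triangle inequality and the geometric sum; your explicit handling of the base case and of $\gamma \in \{0,1\}$ is simply more careful than the paper's.
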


\begin{proof}
See ~\ref{app:proof_lemma}.
\end{proof}

Lemma~\ref{lem:ema} formalizes the stabilizing role of EMA smoothing. In particular, high-frequency fluctuations in the raw target $\mathbf{Z}^{(r)}$ are attenuated in the smoothed target $\tilde{\mathbf{Z}}^{(r)}$, with stronger attenuation achieved as $\gamma$ increases.

\begin{lemma}\label{lem:multiclient}\textbf{\textit{Multi-Client Aggregation Error}},
Let $\mathbf{Z}^{(r)} = \frac{1}{K}\sum_{k=1}^{K} z_k^{(r)}$ denote the average logit across $K$ clients at round $r$, where $z_k^{(r)} \in \mathbb{R}^{M \times C}$ represents the client $k$ prediction matrix. Under Assumption~A4 (bounded logits), the mean-squared aggregation error decomposes as:
\begin{equation}
  \mathbb{E}\!\left[\left\|\mathbf{Z}^{(r)} - \mathbf{Z}^*\right\|_F^2\right]
  \,\leq\,
  \frac{\sigma_z^2}{K} + \kappa_z^2,
  \label{eq:agg_error}
\end{equation}
where $\mathbf{Z}^* = \mathbb{E}[\mathbf{Z}^{(r)}]$ is the expected
aggregation. Here, $\sigma_z^2 = \frac{1}{K}\sum_{k=1}^K
\mathbb{E}[\|z_k^{(r)} - \mathbb{E}[z_k^{(r)}]\|_F^2]$ is the mean
per-client logit variance, while $\kappa_z^2 = \frac{1}{K}\sum_{k=1}^K
\|\mathbb{E}[z_k^{(r)}] - \mathbf{Z}^*\|_F^2$ quantifies the inter-client bias arising from data heterogeneity

As a result, the smoothed aggregation error satisfies:
\begin{equation}
  \mathbb{E}\!\left[\left(\Delta_{\tilde{Z}}^{(r)}\right)^2\right]
  \,\leq\,
  (1-\gamma)^2\!\left(\frac{\sigma_z^2}{K} + \kappa_z^2\right),
  \label{eq:smoothed_agg}
\end{equation}
so that increasing $K$ reduces the variance component at a rate $\mathcal{O}(1/K)$, while $\kappa_z^2$ reflects irreducible inter-client heterogeneity independent of $K$.
\end{lemma}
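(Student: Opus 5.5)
The plan is to condition throughout on the history $\mathcal{F}_{r-1}$ up to the broadcast of $\tilde{\mathbf{Z}}^{(r-1)}$. Given this broadcast, the $K$ clients run their local procedures (C1--C3) on disjoint private data with independent mini-batch sampling. I will therefore treat the centred client matrices $\xi_k := z_k^{(r)} - \mathbb{E}[z_k^{(r)}\mid\mathcal{F}_{r-1}]$ as conditionally independent and zero-mean, and read $\mathbf{Z}^*$ as the conditional mean $\mathbb{E}[\mathbf{Z}^{(r)}\mid\mathcal{F}_{r-1}]$. Assumption~A4 ensures that every second moment below is finite, since each $\|z_k^{(r)}\|_F$ is bounded.

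\emph{Step 1: decomposition.} Write $\bar z_k := \mathbb{E}[z_k^{(r)}]$ and split
\[
\mathbf{Z}^{(r)} - \mathbf{Z}^* = \frac{1}{K}\sum_{k=1}^K \xi_k + \Big(\frac{1}{K}\sum_{k=1}^K \bar z_k - \mathbf{Z}^*\Big).
\]
The first term is the stochastic part and the second is the bias part.

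\emph{Step 2: the stochastic part.} Expanding the squared Frobenius norm, the cross terms $\mathbb{E}\langle \xi_k,\xi_l\rangle_F$ vanish for $k\neq l$ by conditional independence. This gives $\mathbb{E}\|\frac1K\sum_k\xi_k\|_F^2 = \frac{1}{K^2}\sum_k \mathbb{E}\|\xi_k\|_F^2 = \sigma_z^2/K$.

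\emph{Step 3: the bias part.} With the definition $\mathbf{Z}^*=\mathbb{E}[\mathbf{Z}^{(r)}]$, linearity gives $\frac1K\sum_k\bar z_k=\mathbf{Z}^*$, so this term is identically zero. The inequality \eqref{eq:agg_error} then holds with room to spare, and $\kappa_z^2\ge 0$ enters only as a nonnegative slack.

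I would also add a remark for the reading in which $\mathbf{Z}^*$ is a population target, for instance the consensus of the clients' optimal predictors. In that case the bias part is nonzero. Jensen's inequality bounds it by $\frac1K\sum_k\|\bar z_k-\mathbf{Z}^*\|_F^2=\kappa_z^2$. Because the stochastic and bias parts are orthogonal in expectation ($\mathbb{E}\xi_k=0$), the two bounds add without a factor of $2$. This is the version that justifies calling $\kappa_z^2$ irreducible heterogeneity that does not decay with $K$.

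If the fused target \eqref{eq:target} is used instead of the plain average, the teacher term $\beta^{(r)}\mathbf{P}_T$ is deterministic. It shifts the mean and scales the fluctuation by $(1-\beta^{(r)})^2\le 1$, so the same bound holds.

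\emph{Step 4: the smoothed bound.} Here I start from the identity in Lemma~\ref{lem:ema}, $\Delta_{\tilde Z}^{(r)}=(1-\gamma)\|\mathbf{Z}^{(r)}-\tilde{\mathbf{Z}}^{(r-1)}\|_F$, and square it. To reach \eqref{eq:smoothed_agg} I must replace $\tilde{\mathbf{Z}}^{(r-1)}$ by $\mathbf{Z}^*$. This is the step I expect to be the main obstacle, because the identification is not automatic.

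My first attempt is to invoke a stationarity or consistency condition: that the smoothed target tracks the conditional mean, $\tilde{\mathbf{Z}}^{(r-1)}=\mathbb{E}[\mathbf{Z}^{(r)}\mid\mathcal{F}_{r-1}]$. This is the fixed-point regime of the EMA recursion, and under it Steps 1--3 apply verbatim.

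Failing that, I would write
\[
\mathbf{Z}^{(r)}-\tilde{\mathbf{Z}}^{(r-1)}=(\mathbf{Z}^{(r)}-\mathbf{Z}^*)+(\mathbf{Z}^*-\tilde{\mathbf{Z}}^{(r-1)}).
\]
The second summand is $\mathcal{F}_{r-1}$-measurable and the first has conditional mean zero, so the cross term drops out. This yields \eqref{eq:smoothed_agg} plus an explicit tracking term $(1-\gamma)^2\,\mathbb{E}\|\mathbf{Z}^*-\tilde{\mathbf{Z}}^{(r-1)}\|_F^2$. I would then state that this term is absorbed into $\kappa_z^2$, by defining the bias relative to the broadcast target. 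The $\mathcal{O}(1/K)$ decay of the variance component and the $K$-independence of $\kappa_z^2$ can then be read off directly.
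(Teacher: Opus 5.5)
Your Steps 1--3 use the same skeleton as the paper's proof: independence across clients plus a bias--variance split. Yours is the correct version of it. The paper asserts $\mathbb{E}\|\frac1K\sum_k(z_k^{(r)}-\mathbf{Z}^*)\|_F^2=\frac1{K^2}\sum_k\mathbb{E}\|z_k^{(r)}-\mathbf{Z}^*\|_F^2$, on the grounds that cross terms vanish by independence. They do not, because $z_k^{(r)}-\mathbf{Z}^*$ is not centred. The cross terms equal $\langle \mathbb{E}z_k^{(r)}-\mathbf{Z}^*,\mathbb{E}z_l^{(r)}-\mathbf{Z}^*\rangle_F$ and contribute $-\kappa_z^2/K$ in total. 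The paper's middle expression is therefore $\sigma_z^2/K+\kappa_z^2/K$, which is a harmless overestimate. Your observation that the bias part cancels identically is the accurate statement: the error is exactly $\sigma_z^2/K$, and $\kappa_z^2$ is pure slack.

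Conditioning on $\mathcal{F}_{r-1}$ is likewise a real repair, not a convenience. All clients train against the same random broadcast $\tilde{\mathbf{Z}}^{(r-1)}$, so the independence the paper invokes holds only conditionally. Take $\bar z_k$ to be the conditional mean as well, so that Step 1 is consistent with your $\xi_k$.

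In Step 4 the paper takes exactly the step you hesitate over. It ``substitutes'' \eqref{eq:agg_error} into $\Delta_{\tilde{Z}}^{(r)}=(1-\gamma)\|\mathbf{Z}^{(r)}-\tilde{\mathbf{Z}}^{(r-1)}\|_F$, which silently identifies $\tilde{\mathbf{Z}}^{(r-1)}$ with $\mathbf{Z}^*$. You are right that this is the crux. The gap cannot be closed, because \eqref{eq:smoothed_agg} is false as stated.

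To see this, let all clients at round $r$ be deterministic and identical, $z_k^{(r)}=\mathbf{A}$ for all $k$, with $\mathbf{A}\neq\tilde{\mathbf{Z}}^{(r-1)}$. This happens, for instance, when predictions change between rounds, or at $r=1$ when the teacher warm start differs from $\mathbf{A}$. Then $\mathbf{Z}^*=\mathbf{A}$ and $\sigma_z^2=\kappa_z^2=0$, so the right-hand side is zero. Yet $\Delta_{\tilde{Z}}^{(r)}=(1-\gamma)\|\mathbf{A}-\tilde{\mathbf{Z}}^{(r-1)}\|_F>0$ for $\gamma<1$.

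Consequently neither of your fallbacks proves the lemma as written. Route (a) adds a hypothesis that fails generically, including at $r=1$. Route (b) yields the true result,
\[
\mathbb{E}\bigl[(\Delta_{\tilde{Z}}^{(r)})^2\bigr]\le(1-\gamma)^2\Bigl(\frac{\sigma_z^2}{K}+\mathbb{E}\bigl\|\mathbb{E}[\mathbf{Z}^{(r)}\mid\mathcal{F}_{r-1}]-\tilde{\mathbf{Z}}^{(r-1)}\bigr\|_F^2\Bigr),
\]
but renaming the second term $\kappa_z^2$ changes what $\kappa_z^2$ means. That term measures how far the EMA target lags behind the current consensus. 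It is nonzero even for identically distributed clients, as in the example above. It also inherits aggregation noise from earlier rounds. It is therefore neither inter-client heterogeneity nor independent of $K$. Keep the tracking term explicit, and state that the $\mathcal{O}(1/K)$ decay applies only to the variance term.
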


\begin{proof}
See~\ref{proof_multi}.
\end{proof}

\subsection{Convergence to Stationary Points}
\label{subsec:main_theorem}

We have stated the main convergence guarantee for FedKD-NAS, expressed in terms of the expected squared gradient norm averaged over all local iterates.

\begin{theorem}\textbf{\textit{Convergence of FedKD-NAS}},
\label{thm:main}
Assume A1-A6 (See subsection \ref{subsec:assumptions}) and let $\eta \le 1/L$. Let $\{\theta_{i,t}^{(r)}\}$ be the sequence of local iterates generated by Eq.~\eqref{eq:local_sgd} over $R$ rounds of $E$ local steps each, for a total of $T = RE$ client updates. Then, for any fixed client $i$,
\begin{equation*}
\label{eq:main_bound}
\begin{split}
\frac{1}{T}\sum_{r=1}^{R}\sum_{t=0}^{E-1} \mathbb{E}\!\left[\left\|\nabla \mathcal{L}_i^{(r)}(\theta_{i,t}^{(r)})\right\|^2\right]
\,\le\, 
& \underbrace{\frac{2\bigl(\mathbb{E}[\mathcal{L}_i^{(1)}(\theta_{i,0}^{(1)})] - \mathcal{L}_{\inf}\bigr)}{\eta T}}_{\text{(a) optimization}} \\
& + \underbrace{L\eta\sigma^2}_{\text{(b) noise}}
+ \underbrace{\frac{2}{T}\sum_{r=2}^{R}\Delta_A^2}_{\text{(d) arch.\ switching}}
\\
& + \underbrace{2\alpha^2 L_Z^2 \cdot \frac{1}{T}\sum_{r=2}^{R} \mathbb{E}\!\left[(\Delta_{\tilde{Z}}^{(r)})^2\right]}_{\text{(c) target drift}}.
\end{split}
\end{equation*}

In particular, if (i) $\eta = \Theta(T^{-1/2})$, (ii) $\sup_r \mathbb{E}[(\Delta_{\tilde{Z}}^{(r)})^2] < \infty$, and (iii) $\sum_{r=2}^{R} \Delta_A^2 = o(T)$, then
\begin{equation}
\label{eq:rate}
\begin{split}
    \min_{1 \le r \le R,\, 0 \le t < E}\, \mathbb{E}\!\left[\left\|\nabla \mathcal{L}_i^{(r)}(\theta_{i,t}^{(r)})\right\|^2\right]
\,= 
& \, \mathcal{O}\!\left(T^{-1/2}\right)
+  \mathcal{O}\!\left(\frac{1}{T}\sum_{r=2}^{R} \Delta_A^2\right) \\
& +\mathcal{O}\!\left(\frac{1}{T}\sum_{r=2}^{R} \mathbb{E}\!\left[(\Delta_{\tilde{Z}}^{(r)})^2\right]\right).
\end{split}
\end{equation}

Hence, FedKD-NAS converges to a first-order stationary point up to additive terms that capture, respectively, distillation target drift~(c) and architecture switching~(d).
\end{theorem}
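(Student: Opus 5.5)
The plan is the standard nonconvex SGD descent argument applied to each round's objective. We then stitch consecutive rounds together by bounding how much the objective changes when the server updates the target and the client may switch architecture.

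\textbf{Step 1 (within a round).} Fix client $i$ and round $r$. By A1 and $\eta\le 1/L$, the descent lemma applied to Eq.~\eqref{eq:local_sgd}, with conditional expectation and A2, gives
\[
\mathbb{E}[\mathcal{L}_i^{(r)}(\theta_{i,t+1}^{(r)})]\le \mathbb{E}[\mathcal{L}_i^{(r)}(\theta_{i,t}^{(r)})]-\tfrac{\eta}{2}\mathbb{E}\|\nabla\mathcal{L}_i^{(r)}(\theta_{i,t}^{(r)})\|^2+\tfrac{L\eta^2\sigma^2}{2}.
\]
This uses $\eta-\tfrac{L\eta^2}{2}\ge \tfrac{\eta}{2}$. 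Summing over $t=0,\dots,E-1$ telescopes within the round.

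\textbf{Step 2 (between rounds).} The delicate point is linking $\mathcal{L}_i^{(r)}(\theta_{i,0}^{(r)})$ to $\mathcal{L}_i^{(r-1)}(\theta_{i,E}^{(r-1)})$. I take the iterate to be carried over, $\theta_{i,0}^{(r)}=\theta_{i,E}^{(r-1)}$, interpreting a change of architecture through A6. The objective then changes only through the target $\tilde{\mathbf Z}$ and the architecture.

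I would not bound the function-value jump directly. Instead I would charge it to a gradient mismatch. Write $\nabla\mathcal{L}_i^{(r)}-\nabla\mathcal{L}_i^{(r-1)}$ as an architecture part plus a target part. The architecture part is bounded via A6 by $\Delta_A$. The target part is $\alpha\,(\nabla\mathcal{L}_{\rm KD}(\cdot,\tilde{\mathbf Z}^{(r)})-\nabla\mathcal{L}_{\rm KD}(\cdot,\tilde{\mathbf Z}^{(r-1)}))$, bounded via A5 by $\alpha L_Z\Delta_{\tilde Z}^{(r)}$. Then $(a+b)^2\le 2a^2+2b^2$ gives a per-round mismatch of at most $2\Delta_A^2+2\alpha^2L_Z^2(\Delta_{\tilde Z}^{(r)})^2$.

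To get this into the bound, I would run the descent inequality at each round boundary with respect to the previous objective. I would then convert $\|\nabla\mathcal{L}_i^{(r)}\|^2\le 2\|\nabla\mathcal{L}_i^{(r-1)}\|^2+2\|\text{mismatch}\|^2$, or equivalently use Young's inequality on the cross term. Each boundary $r=2,\dots,R$ then contributes an additive $\eta(2\Delta_A^2+2\alpha^2L_Z^2\mathbb{E}[(\Delta_{\tilde Z}^{(r)})^2])$ on the same scale as the gradient terms.

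This is the step I expect to be the main obstacle. A raw function-value jump would also require a bound on $|\mathcal{L}_{\rm KD}(\theta,\mathbf U)-\mathcal{L}_{\rm KD}(\theta,\mathbf V)|$, which A5 does not directly supply. If the gradient-based route fails, I would fall back to using A4 together with the KL structure to obtain a Lipschitz-in-target function bound. That would produce a term linear in $\Delta_{\tilde Z}^{(r)}$, to which Young's inequality could be applied.

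\textbf{Step 3 (telescoping and rates).} Sum over all $T=RE$ steps, telescope from $\mathcal{L}_i^{(1)}(\theta_{i,0}^{(1)})$ down to the lower bound $\mathcal{L}_{\inf}$ of A3, multiply by $2/(\eta T)$, and collect terms. This yields (a) the optimization term, (b) the noise term $L\eta\sigma^2$, and (c)–(d) the drift terms after the $\eta$ factors cancel.

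For the rate, set $\eta=\Theta(T^{-1/2})$, which satisfies $\eta\le 1/L$ for large $T$. Then (a) and (b) are $\mathcal{O}(T^{-1/2})$. Assumptions (ii)–(iii) keep (c) and (d) finite and vanishing when they are $o(T)$. Finally, the minimum over iterates is bounded by the average, giving Eq.~\eqref{eq:rate}. Lemma~\ref{lem:ema} and Lemma~\ref{lem:multiclient} can then be invoked to make term (c) explicit as $(1-\gamma)^2(\sigma_z^2/K+\kappa_z^2)$.
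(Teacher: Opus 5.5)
Your outline is the paper's own: per-step descent from A1--A2, the A5/A6 mismatch bound via $(a+b)^2\le 2a^2+2b^2$, then telescoping down to A3. You located the crux correctly, but the move ``charge the jump to a gradient mismatch'' does not go through.

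\textbf{Why Step 2 fails.} Telescoping across a boundary requires the zeroth-order quantity $\mathcal{L}_i^{(r)}(\theta_{i,0}^{(r)})-\mathcal{L}_i^{(r-1)}(\theta_{i,E}^{(r-1)})$. A5 and A6 only constrain gradients, and objectives with equal gradients can differ by any constant (the target-entropy part of the KL term is such a constant). Running the first step of round $r$ against $\mathcal{L}_i^{(r-1)}$ produces an $\eta\|\text{mismatch}\|^2$ term for that single step only. At the next step you either keep $\mathcal{L}_i^{(r-1)}$ as the potential, paying the mismatch at every step of the round and cumulatively in later rounds, or you switch to $\mathcal{L}_i^{(r)}$ and face the same function-value gap one iterate later.

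The carry-over $\theta_{i,0}^{(r)}=\theta_{i,E}^{(r-1)}$ is also an added assumption. Algorithm~\ref{alg:client} re-initializes the student every round, and after an architecture switch the parameter vectors need not live in a common space.

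Your fallback fails too, for three reasons. A4 bounds the targets but not the student logits, so the KL term is not uniformly Lipschitz in its target. A6 has no function-value counterpart. A jump $\ell\,\Delta_{\tilde Z}^{(r)}$ carries no factor $\eta$, so after division by $\eta T/2$ it becomes $\frac{2\ell}{\eta T}\sum_r\Delta_{\tilde Z}^{(r)}$; Young's inequality converts this into term (c) only at the cost of a residual of order $\ell^2R/(\eta^2T)$.

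\textbf{The stated inequality cannot be reached.} It fails in simple instances. The drift terms are summed over the $R-1$ boundaries and divided by $T=RE$, a hidden factor $1/E$, whereas the mismatch acts on all $E$ steps after each switch. Take the following instance:
\begin{itemize}
\item $\mathcal{L}^{(r)}(\theta)=\frac{L}{2}(\theta-c_r)^2$ on $\mathbb{R}$, with $c_r$ alternating between $0$ and $\delta$. This is a fixed quadratic plus an alternating linear tilt, the same structure a moved target induces in the KL term of a linear-logit student.
\item Exact gradients, a fixed architecture, and A5 tight.
\end{itemize}
Then (d) $=0$, (c) $\to 2L^2\delta^2/E$, (a) $\to0$ as $R\to\infty$, and (b) is as small as you like.

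With $\rho=1-L\eta$, the iterates converge to a 2-cycle between $\delta\rho^E/(1+\rho^E)$ and $\delta/(1+\rho^E)$. The left side tends to $\frac{L^2\delta^2}{E}\cdot\frac{1-\rho^E}{(1+\rho^E)(1-\rho^2)}$. For $L\eta=0.1$, $E=20$ this is about $0.21L^2\delta^2$ against $0.1L^2\delta^2$ on the right. As $L\eta E\to0$ it tends to $L^2\delta^2/4$ against $2L^2\delta^2/E$. The actual boundary jump in this example is $\approx\eta EL^2\delta^2/4$, a factor of order $E$ above the $\eta\|\text{mismatch}\|^2$ you budget.

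\textbf{The paper has the same hole.} Its Step 3 simply inserts $\sum_{r\ge2}\bigl(\alpha^2L_Z^2(\Delta_{\tilde Z}^{(r)})^2+\Delta_A^2\bigr)$ for the jumps. Dividing that by $\eta T/2$ would even give $\frac{2}{\eta T}$ rather than the stated $\frac{2}{T}$.

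A sound version of this argument needs carry-over plus a function-value drift assumption, such as $\sup_\theta|\mathcal{L}_i^{(r)}(\theta)-\mathcal{L}_i^{(r-1)}(\theta)|\le\epsilon_r$. That yields the term $\frac{2}{\eta T}\sum_{r=2}^{R}\epsilon_r$, and under $\eta=\Theta(T^{-1/2})$ it imposes the different requirement $\sum_r\epsilon_r=o(\sqrt{T})$.
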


\begin{proof}
The proof proceeds in three steps, the full derivation is given in ~\ref{app:proof_main}.

\textbf{Step 1 (Per-step descent).}
By $L$-smoothness (A1) and the unbiasedness and bounded variance of $g_{i,t}^{(r)}$ (A2), a single SGD step yields the standard descent inequality:
\begin{equation}
\mathbb{E}\!\left[\mathcal{L}^{(r)}(\theta^+)\right]
\le \mathbb{E}\!\left[\mathcal{L}^{(r)}(\theta)\right]
- \frac{\eta}{2}\mathbb{E}\!\left[\|\nabla \mathcal{L}^{(r)}(\theta)\|^2\right]
+ \frac{L\eta^2}{2}\sigma^2,
\end{equation}
where we use $\eta \le 1/L$ to absorb the curvature term.

\begin{table*}[t]
  \centering
  \footnotesize
\caption{Evaluation metrics used to evaluate the benchmarked FL baselines.}
\label{Tab:Metrics}
  \begin{tabular}{l l l p{9.5cm}}
    \toprule
    \multicolumn{2}{l}{\textbf{Metric}} & \textbf{Equation} & \textbf{Variable description} \\
    \midrule
     $Acc$ & Accuracy &
      $\frac{1}{N}\sum_{i=1}^{N}\mathbf{1}\left(\hat{y}_i = y_i\right)$ &
      $N$ is the number of test samples, $y_i$ is the ground-truth label,
      $\hat{y}_i$ is the predicted label, $\mathbf{1}(\cdot)$ is the indicator
      function. \\
     $\mathcal{L}$ & Loss &
      $\frac{1}{N}\sum_{i=1}^{N}\ell\left(y_i,\hat{y}_i\right)$ &
      $\ell(\cdot)$ denotes the loss function (e.g., cross-entropy). \\
    $CPU$ & CPU usage &
      $\frac{1}{K}\sum_{k=1}^{K} u_k^{\text{CPU}}$ &
      $K$ is the number of clients, $u_k^{\text{CPU}}$ is the CPU usage of
      client $k$. \\
    $MEM$ & Client memory usage &
      $\frac{1}{K}\sum_{k=1}^{K} u_k^{\text{MEM}}$ &
      $u_k^{\text{MEM}}$ is the memory usage of client $k$. \\
    $COM$ & Communication volume & $\sum_{r=1}^{R}\sum_{k=1}^{K}\left(B_{k,r}^{\uparrow}+B_{k,r}^{\downarrow}\right)$ &
      Communication volume exchanged during training. \(R\): number of communication rounds, \(B_{k,r}^{\uparrow}\) and \(B_{k,r}^{\downarrow}\): bytes uploaded and downloaded by client \(k\) in round \(r\). \\
    $RES$ & Resource Efficiency &
      $\frac{1}{2}\frac{\text{CPU}}{\max(\text{CPU})}+
       \frac{1}{2}\frac{\text{MEM}}{\max(\text{MEM})}$ &
      - \\
    $CES$ & Communication Efficiency &
      $\frac{1}{\text{Comm}/\max(\text{Comm})}$ &
      - \\
    $PQS$ & Performance Quality &
      $0.7\cdot\text{Acc}_{\text{norm}}+0.3\cdot\frac{\min(\mathcal{L})}{\mathcal{L}}$ &
      $\text{Acc}_{\text{norm}}$ denotes min--max normalized accuracy.\\
    $UES$ & Unified Efficiency &
      $\text{UES}=\frac{\text{PQS}\cdot\text{CES}}{\text{RES}}$ & - \\
    \bottomrule
  \end{tabular}
\end{table*}

\textbf{Step 2 (Inter-round mismatch).}
Because $\mathcal{L}_i^{(r)}$ varies across rounds through both $\tilde{\mathbf{Z}}^{(r)}$ and $a_i^{(r)}$, the telescoping sum does not close directly. We bound the inter-round gradient mismatch by adding and subtracting the KD gradient at the previous target and architecture, then applying A5 and A6:
\begin{equation}
\|\nabla \mathcal{L}_i^{(r)}(\theta) - \nabla \mathcal{L}_i^{(r-1)}(\theta)\|^2
\le 2\alpha^2 L_Z^2 (\Delta_{\tilde{Z}}^{(r)})^2 + 2\Delta_A^2.
\end{equation}

\textbf{Step 3 (Telescoping and averaging).}
Summing the per-step descent inequalities over all $T$ updates, invoking the lower bound A3 to close the telescope, and dividing by $\eta T / 2$ yields the bound~\eqref{eq:main_bound}. 
Setting $\eta = \Theta(T^{-1/2})$ and applying the conditions on target drift and architecture switching then establishes the rate~\eqref{eq:rate}.
\end{proof}

\begin{corollary}[Multi-Client Convergence and Scalability]
\label{cor:multiclient}
Under the conditions of Theorem~\ref{thm:main} and
Lemma~\ref{lem:multiclient}, substituting bound~\eqref{eq:smoothed_agg}
into term~(c) of~\eqref{eq:main_bound} gives:
\begin{equation}
  \frac{1}{T}\sum_{r=2}^{R}
  \mathbb{E}\!\left[(\Delta_{\tilde{Z}}^{(r)})^2\right]
  \,\leq\,
  (1-\gamma)^2\!\left(\frac{\sigma_z^2}{K} + \kappa_z^2\right).
\end{equation}
The overall convergence rate therefore becomes:
\begin{equation}
\begin{split}
    \min_{r,t}\,\mathbb{E}\!\left[\|\nabla\mathcal{L}_i^{(r)}
  (\theta_{i,t}^{(r)})\|^2\right] = 
  & \mathcal{O}\!\left(T^{-1/2}\right)
  + \mathcal{O}\!\left(\frac{(1-\gamma)^2\sigma_z^2}{K}\right) \\
  & + \mathcal{O}\!\left((1-\gamma)^2\kappa_z^2\right)
  + \mathcal{O}\!\left(\frac{\Delta_A^2 R}{T}\right).
\end{split}
\end{equation}
Two scaling properties follow directly. First, the variance term $(1-\gamma)^2\sigma_z^2/K$ decreases as $\mathcal{O}(1/K)$, meaning that \emph{aggregating more clients improves teacher signal quality}, in contrast to FedAvg where increasing $K$ under non-IID data worsens gradient aggregation due to client drift. Second, the bias $\kappa_z^2$ is irreducible in $K$ and reflects genuine inter-client distribution heterogeneity, it is minimised by the NAS mechanism, which aligns local architectures to local distributions and thereby reduces the divergence between $\mathbb{E}[z_k^{(r)}]$ and $\mathbf{Z}^*$.
\end{corollary}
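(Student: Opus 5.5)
The plan is to obtain the corollary by plugging the per-round bound of Lemma~\ref{lem:multiclient} into term~(c) of Theorem~\ref{thm:main}, then rereading the remaining terms under the step-size choice $\eta=\Theta(T^{-1/2})$.

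\textbf{Step 1 (averaging the drift bound).} Lemma~\ref{lem:multiclient}, Eq.~\eqref{eq:smoothed_agg}, bounds $\mathbb{E}[(\Delta_{\tilde{Z}}^{(r)})^2]$ by $(1-\gamma)^2(\sigma_z^2/K+\kappa_z^2)$ for every round $r$. The constants $\sigma_z^2$ and $\kappa_z^2$ are round-independent; I take this as the lemma's implicit convention, and otherwise I would replace them by their suprema over $r$, which are finite by A4. Summing over $r=2,\dots,R$ gives
\begin{equation*}
\frac{1}{T}\sum_{r=2}^{R}\mathbb{E}\!\left[(\Delta_{\tilde{Z}}^{(r)})^2\right]\le \frac{R-1}{T}(1-\gamma)^2\!\left(\frac{\sigma_z^2}{K}+\kappa_z^2\right).
\end{equation*}
Since $T=RE$ with $E\ge 1$, we have $(R-1)/T\le 1/E\le 1$. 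This gives the displayed inequality of the corollary, in fact with an extra factor $1/E$ to spare. The same computation shows that hypothesis (ii) of Theorem~\ref{thm:main} holds, so the theorem applies.

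\textbf{Step 2 (assembling the rate).} Term~(c) becomes $2\alpha^2L_Z^2(1-\gamma)^2(\sigma_z^2/K+\kappa_z^2)$, and I split it into the $\sigma_z^2/K$ part and the $\kappa_z^2$ part, absorbing $\alpha,L_Z$ into the $\mathcal{O}$. For term~(d), A6 gives a uniform per-round constant $\Delta_A^2$, so $\frac{2}{T}\sum_{r=2}^R\Delta_A^2\le 2\Delta_A^2 R/T$, which is the stated $\mathcal{O}(\Delta_A^2R/T)$. With $\eta=\Theta(T^{-1/2})$, term~(a) is $\mathcal{O}(1/(\eta T))=\mathcal{O}(T^{-1/2})$ and term~(b) is $L\eta\sigma^2=\mathcal{O}(T^{-1/2})$. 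Finally, $\min_{r,t}$ is at most the average over all $T$ iterates, so the average bound of Theorem~\ref{thm:main} carries over to the minimum and yields the stated four-term rate.

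\textbf{Main obstacle.} The delicate point is the consistency between the two lemmas' notions of drift. Lemma~\ref{lem:multiclient} bounds the deviation of $\mathbf{Z}^{(r)}$ from its mean, whereas term~(c) involves $\Delta_{\tilde Z}^{(r)}=(1-\gamma)\|\mathbf{Z}^{(r)}-\tilde{\mathbf{Z}}^{(r-1)}\|_F$ (Lemma~\ref{lem:ema}). The corollary inherits Eq.~\eqref{eq:smoothed_agg} as given, so no new argument is needed for the corollary itself. If the lemma's constant turns out to require $\tilde{\mathbf{Z}}^{(r-1)}$ to be close to $\mathbf{Z}^*$, I would first try the triangle inequality: split $\mathbf{Z}^{(r)}-\tilde{\mathbf{Z}}^{(r-1)}$ through $\mathbf{Z}^*$ and bound the lag term $\|\tilde{\mathbf{Z}}^{(r-1)}-\mathbf{Z}^*\|_F$ by an EMA recursion. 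That recursion should give the same quantity up to a factor $2$ and a geometrically decaying warm-start term $\gamma^{r}\|\mathbf{Z}^{(0)}-\mathbf{Z}^*\|_F^2$. The warm-start term sums to $\mathcal{O}(1/(T(1-\gamma)))$ and is absorbed into the $\mathcal{O}(T^{-1/2})$ term for fixed $\gamma<1$.

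The two qualitative remarks on scaling follow by reading off the terms. The $\sigma_z^2/K$ term decays in $K$. The $\kappa_z^2$ term does not depend on $K$; the claim that NAS reduces it is an interpretive statement rather than a proved one, and I would present it as such.
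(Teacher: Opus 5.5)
Your argument is correct and follows exactly the paper's route. You plug the per-round bound \eqref{eq:smoothed_agg} into term~(c) of Theorem~\ref{thm:main}, bound term~(d) by $2\Delta_A^2R/T$, take $\eta=\Theta(T^{-1/2})$ so that terms~(a) and~(b) are $\mathcal{O}(T^{-1/2})$, and pass from the average to the minimum; your observation that $(R-1)/T\le 1/E$ even leaves slack the paper does not mention. Your flagged concern is well founded, because the paper's proof of Lemma~\ref{lem:multiclient} silently replaces $\|\mathbf{Z}^{(r)}-\tilde{\mathbf{Z}}^{(r-1)}\|_F$ by $\|\mathbf{Z}^{(r)}-\mathbf{Z}^*\|_F$, but since the corollary is stated under that lemma, inheriting \eqref{eq:smoothed_agg} as given is precisely what the paper does. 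Note only that your triangle-inequality fallback would additionally require $\mathbf{Z}^*=\mathbb{E}[\mathbf{Z}^{(r)}]$ to be independent of $r$, which fails in general as clients keep training.
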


\section{Experimental Results}\label{sec:Exp}
This section is organized as follows. Section~\ref{sec:Exp:Setup} describes the experimental setup and evaluation metrics. Section~\ref{sec:Exp:ResultsOnAcademic} presents the results obtained on curated benchmark datasets, while Section~\ref{sec:Exp:ResultsOnReal} evaluates the methods on datasets that better reflect real-world application settings.

\subsection{Experimental Setting}\label{sec:Exp:Setup}
We conduct an empirical evaluation on six datasets: MNIST~\cite{mnist},FMNIST~\cite{fmnist}, EMNIST~\cite{emnist}, CASA~\cite{CASA}, CIFAR10 and CIFAR100~\cite{cifar100}, whose main characteristics are summarized in \tablename~\ref{Tab:Datasets}. Each dataset is evaluated under three partitioning settings: (i)~IID, (ii)~Dirichlet-based non-IID ($\alpha{=}0.1$), and (iii)~shard-based extreme label skew, enabling systematic robustness analysis as heterogeneity increases. To reflect realistic heterogeneous deployments, client models employ lightweight, resource-aware architectures adapted to each dataset: for MNIST, FMNIST, and EMNIST, clients use either a LeNet5 or ResNet18; for CIFAR10 and CIFAR100, they use MobileNetV2 and ShuffleNetV2-x0.5; and for CASA, a DeepConvLSTM model is used to capture temporal dependencies. On the server side, KD is guided by pre-trained teacher models selected among LeNet5, ResNet18, MobileNetV2, and ShuffleNetV2, depending on the dataset modality.This heterogeneous client–teacher configuration enables FedKD-NAS to combine NAS-driven client adaptation with expressive server-side supervision.

\begin{table}[t]
  \centering
  \footnotesize
  \setlength{\tabcolsep}{3pt}
  \caption{Summary of datasets used in our empirical study. \textbf{Type} denotes the data modality. \textbf{Target} specifies the prediction label. \textbf{Inst$^\dagger$} represents the total number of samples. \textbf{Difficulty} reflects relative learning complexity considering class count, data variability, and feature richness.}
  \label{Tab:Datasets}
  \resizebox{\columnwidth}{!}{
  \begin{tabular}{llllll}
    \toprule
    \textbf{Dataset} & \textbf{Type} & \textbf{Target} & \textbf{Inst$^\dagger$} & \textbf{Difficulty} \\
    \midrule
    MNIST     & Image (grayscale)    & Digit (0--9)         & 70\,000   & Low \\
    FMNIST    & Image (grayscale)    & Clothing category    & 70\,000   & Medium \\
    EMNIST    & Image (grayscale)    & Character (47 cls)   & 814\,255  & Medium-High \\
    CIFAR10   & Image (RGB)          & Object (10 cls)      & 60\,000   & Medium \\
    CIFAR100  & Image (RGB)          & Object (100 cls)     & 60\,000   & High \\
    CASA      & Time-series (sensor) & Activity type        & 13\,956\,534  & Very High \\
    \bottomrule
  \end{tabular}
  }
\end{table}

\begin{table*}[!htbp]
\centering
\caption{CIFAR10 results over 100~communication rounds across three data distributions (IID, Dirichlet, Shards): performance and resources utilisation metrics for MobileNetV2 and ShuffleNetV2.}
\label{tab:cifar10_combined_raw}
\scriptsize
\begin{adjustwidth}{-1.5cm}{-1.5cm}
\centering
\setlength{\tabcolsep}{2.5pt}   
\renewcommand{\arraystretch}{1.0} 
\resizebox{\textwidth}{!}{
\begin{tabular}{l*{15}{c}}
\toprule
\multicolumn{16}{c}{\textbf{MobileNetV2}} \\
\midrule
\multirow{2}{*}{Method}
& \multicolumn{5}{c}{IID}
& \multicolumn{5}{c}{Dirichlet}
& \multicolumn{5}{c}{Shards} \\
\cmidrule(lr){2-6}\cmidrule(lr){7-11}\cmidrule(lr){12-16}
& Acc$\uparrow$ & Loss$\downarrow$ & CPU$\downarrow$ & RAM$\downarrow$ & Comm$\downarrow$
& Acc$\uparrow$ & Loss$\downarrow$ & CPU$\downarrow$ & RAM$\downarrow$ & Comm$\downarrow$
& Acc$\uparrow$ & Loss$\downarrow$ & CPU$\downarrow$ & RAM$\downarrow$ & Comm$\downarrow$ \\
\midrule
FedAvg      & \textbf{0.7697} & 0.6566 & 47.0968 & 1077.1211 & 71\,573\,824 & 0.5134 & 1.4549 & 47.1893 & 1111.5518 & 71\,573\,824 & 0.5129 & 1.4409 & 47.1081 & 1080.5068 & 71\,573\,824 \\
Ditto       & 0.7568 & \textbf{0.6340} & 47.1282 & 1205.4883 & 71\,573\,824 & 0.3497 & 1.3631 & 47.1159 & 1232.3281 & 71\,573\,824 & 0.3214 & 1.4372 & 47.0089 & 1151.0547 & 71\,573\,824 \\
FedDF       & 0.4985 & 1.3816 & 47.2916 & 1110.8105 & 71\,978\,304 & 0.4440 & 1.6929 & 47.0609 & 1137.2842 & 71\,978\,304 & 0.4418 & 1.7477 & 47.0881 & 1113.6807 & 71\,978\,304 \\
FedDistill  & 0.6995 & 1.0890 & 46.9686 & 1054.0361 & \textbf{1\,617\,920} & 0.5994 & 1.4335 & 47.1102 & 1100.9980 & \textbf{1\,617\,920} & 0.6150 & 1.3891 & 47.1072 & 1072.4082 & \textbf{1\,617\,920} \\
FedMD       & 0.7187 & 0.8472 & 46.8021 & 1064.3936 & \textbf{1\,617\,920} & 0.5863 & 1.2504 & 47.1800 & 1192.2227 & \textbf{1\,617\,920} & 0.6218 & 1.1860 & 46.9156 & 1062.3096 & \textbf{1\,617\,920} \\
FedKD-NAS   & 0.6920 & 0.9774 & \textbf{33.9926} & \textbf{971.9180} & \textbf{1\,617\,920} & \textbf{0.6671} & \textbf{1.0714} & \textbf{35.2611} & \textbf{911.7109} & \textbf{1\,617\,920} & \textbf{0.6542} & \textbf{1.0793} & \textbf{34.3094} & \textbf{947.5234} & \textbf{1\,617\,920} \\
Local-KD    & 0.7026 & 0.9747 & 47.1696 & 1188.6553 & 71\,573\,824 & 0.2757 & 2.1734 & 47.2457 & 1356.5654 & 71\,573\,824 & 0.3309 & 1.9622 & 47.2867 & 1208.3408 & 71\,573\,824 \\
\midrule
\multicolumn{16}{c}{\textbf{ShuffleNetV2}} \\
\midrule
FedAvg      & 0.6788 & 0.9131 & 42.5618 & \textbf{860.3389} & 11\,265\,344 & 0.4961 & 1.7080 & 42.8577 & \textbf{863.2227} & 11\,265\,344 & 0.4460 & 1.6002 & 42.6718 & \textbf{862.4404} & 11\,265\,344 \\
Ditto       & 0.6527 & 0.8855 & 42.5863 & 895.1680 & 11\,265\,344 & 0.3375 & 1.5513 & 42.4086 & 897.3008 & 11\,265\,344 & 0.3034 & 1.7208 & 42.4416 & 895.4395 & 11\,265\,344 \\
FedDF       & 0.5096 & 1.3640 & 42.8215 & 865.1416 & 11\,669\,824 & 0.3825 & 1.9432 & 42.5055 & 866.5195 & 11\,669\,824 & 0.4514 & 1.7162 & 42.6431 & 864.9658 & 11\,669\,824 \\
FedDistill  & 0.6200 & 1.1413 & 42.9043 & 892.6270 & \textbf{1\,617\,920} & 0.5628 & 1.3477 & 42.7883 & 936.8057 & \textbf{1\,617\,920} & 0.5662 & 1.3501 & 42.7873 & 893.8965 & \textbf{1\,617\,920} \\
FedMD       & 0.6512 & 0.9985 & 42.5022 & 890.6963 & \textbf{1\,617\,920} & 0.5516 & 1.3485 & 42.8513 & 894.9854 & \textbf{1\,617\,920} & 0.5335 & 1.3703 & 42.7477 & 892.2266 & \textbf{1\,617\,920} \\
FedKD-NAS   & \textbf{0.7360} & \textbf{0.8536} & \textbf{33.6243} & 944.5820 & \textbf{1\,617\,920} & \textbf{0.6562} & \textbf{1.0426} & \textbf{34.9672} & 942.7852 & \textbf{1\,617\,920} & \textbf{0.6801} & \textbf{0.9888} & \textbf{35.0826} & 967.5703 & \textbf{1\,617\,920} \\
Local-KD    & 0.6359 & 1.1046 & 43.6111 & 1008.1592 & 11\,265\,344 & 0.3009 & 2.2856 & 43.6030 & 1040.6748 & 11\,265\,344 & 0.3418 & 2.0688 & 43.7823 & 1021.2637 & 11\,265\,344 \\
\bottomrule
\end{tabular}
}
\end{adjustwidth}
\end{table*}

Six FL baselines are compared to FedKD-NAS, namely FedAvg, Ditto, FedDF, FedDistill, FedMD, and Local-KD (see \tablename~\ref{tab:method_comparison_horizontal} for references). These baselines were selected to cover the main families of FL methods that are directly relevant to our setting. FedAvg is the most widely used baseline in FL, serving as a reference point for centralized-style aggregation. Ditto extends FedAvg with personalization and is commonly used to evaluate robustness under heterogeneity. FedDF represents server-side ensemble distillation, while FedDistill, FedMD, and Local-KD are logit-based knowledge distillation approaches that reduce communication by exchanging predictions instead of full model parameters. Among these, FedDistill and FedMD are widely regarded as strong baselines for communication-efficient FL. Importantly, these methods span both parameter-averaging and distillation-based paradigms, enabling a structured comparison between communication-heavy and communication-efficient approaches. Existing work combining neural architecture search (NAS) with FL is still limited and typically focuses on architecture optimization under standard aggregation, without integrating knowledge distillation. To the best of our knowledge, there are no established baselines that jointly combine NAS with logit-based distillation in a federated setting, which motivates the design of FedKD-NAS. Existing methods combining NAS with FL, such as RaFL, AdaptFL, and HAPFNAS~\cite{ilhan2022rafl,adaptfl2024,yang2025hapfnas}, are not included in the experimental comparison as none of 
these methods provide publicly available implementations. These frameworks involve complex multi-component pipelines (supernet construction, federated weight-sharing protocols, architecture evaluation strategies) whose design details significantly affect performance, making faithful re-implementation from paper descriptions alone unreliable and potentially unfair to the original methods. Our baseline selection instead covers the two principal FL paradigms: parameter aggregation (FedAvg, Ditto, Local-KD) and logit-level distillation (FedDistill, FedMD, FedDF); using established, reproducible implementations, enabling a controlled evaluation of FedKD-NAS's core contribution.

Nine metrics defined in \tablename~\ref{Tab:Metrics} are used to evaluate and compare the FL methods. A first set covers predictive performance ($Acc$, $\mathcal{L}$), communication overhead ($Comm$), and client-side resource consumption ($Mem$, $CPU$). A second set focuses on multi-objective efficiency, derived from raw measurements collected during training: \emph{(i)~Resource Efficiency Score (RES):} assesses client-side computational cost as a normalized combination of CPU and memory usage; lower RES indicates lower resource consumption. \emph{(ii)~Performance Quality Score (PQS):} jointly captures predictive accuracy and convergence quality via a weighted combination of min-max normalized accuracy ($\times 0.7$) and inverse normalized loss ($\times 0.3$). The weighting prioritizes accuracy as the primary evaluation criterion in classification tasks, while incorporating loss to reflect convergence behavior; higher PQS reflects better overall model quality.

\emph{(ii)~PQS:} jointly captures predictive accuracy and convergence quality via a weighted combination of min-max normalized accuracy ($\times 0.7$) and inverse normalized loss ($\times 0.3$). The weighting prioritizes accuracy as the primary evaluation criterion in classification tasks, while incorporating loss to reflect convergence behavior
\emph{(iii)~Communication Efficiency Score (CES):} measures the inverse normalized communication cost; a higher CES corresponds to fewer bytes communicated.
\emph{(iv)~Unified Efficiency Score (UES):} aggregates all three dimensions into a single scalar; a higher UES indicates superior performance relative to communication and computational cost. All system metrics reflect client-side training costs only, excluding teacher-side computation.

All reported results are averaged over 10 independent runs with different random seeds, low variance was observed across repetitions for all methods and metrics, confirming the stability of the findings.

\subsection{Experimental results on benchmark datasets}\label{sec:Exp:ResultsOnAcademic}
Sections~\ref{sec:Exp:Results:ACC_L}-\ref{sec:Exp:Results:resource} analyze the results obtained for the seven benchmarked FL methods, including FedKD-NAS. Given the large number of dataset-model-distribution combinations, this analysis focuses on CIFAR10, the most representative and widely used FL benchmark, reported in \tablename~\ref{tab:cifar10_combined_raw} and
\tablename~\ref{tab:all_combined_quality}. Raw metrics for MNIST, FMNIST, EMNIST, and CIFAR100 are reported in
\tablename~\ref{tab:all_combined_raw}; composite scores are in \tablename~\ref{tab:all_combined_quality}. While a summary across all datasets is provided and discussed in Section~\ref{sec:discussion}.

\begin{figure*}[t]
  \centering

  \begin{subfigure}[t]{0.98\textwidth}
    \centering
    \includegraphics[width=\textwidth]{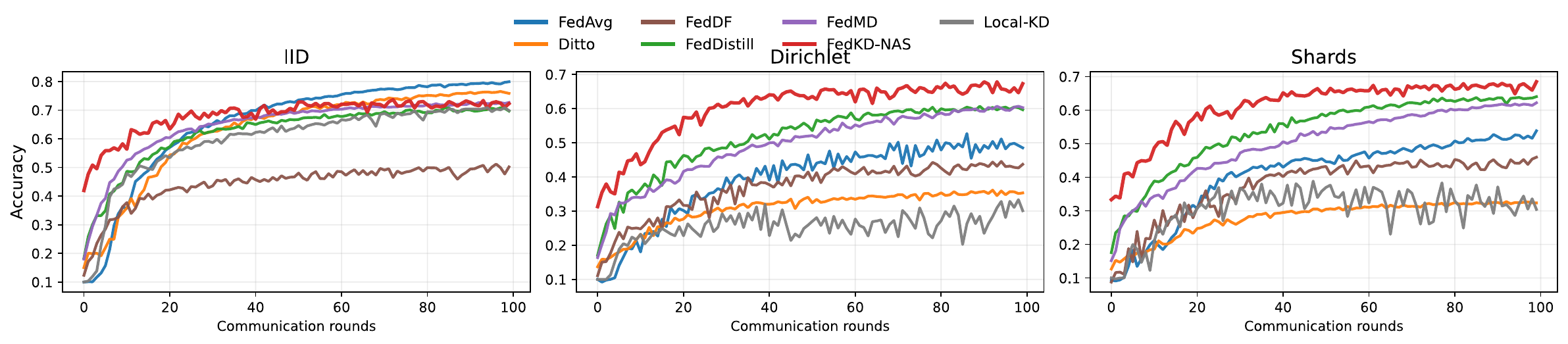}
    \caption{MobileNetV2}
  \end{subfigure}

  \vspace{0.5em}

 \begin{subfigure}[t]{0.98\textwidth}
    \centering
    \begin{adjustbox}{clip,trim=0 0 0 20pt}
      \includegraphics[width=\textwidth]{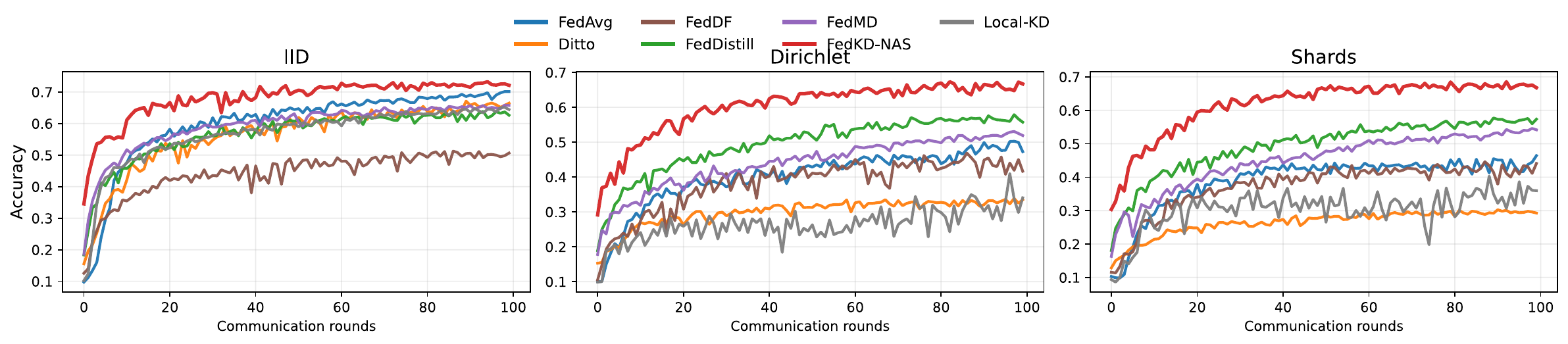}
    \end{adjustbox}
    \caption{ShuffleNetV2}
  \end{subfigure}

  \caption{Accuracy over 100 communication rounds on CIFAR10 under IID, Dirichlet ($\alpha=0.1$), and Shards partitioning. Top: MobileNetV2. Bottom: ShuffleNetV2. FedKD-NAS (red) converges faster than all competitors and maintains the widest accuracy gap under non-IID settings, where its advantage grows with the degree of heterogeneity.}
  \label{fig:cifar10_acc_curves}
\end{figure*}

\begin{figure}[t]
\centering
\begin{tikzpicture}
\begin{groupplot}[
  group style={group size=1 by 2, vertical sep=1.8cm},
  symbolic x coords={FedAvg,Ditto,FedDF,FedDistill,FedMD,FedKD-NAS,Local-KD},
  xtick=data,
  width=8cm,
  height=4.5cm,
  ymin=0.20, ymax=0.86,
  ytick={0.3,0.4,0.5,0.6,0.7,0.8},
  ylabel={Accuracy},
  ylabel style={font=\small},
  yticklabel style={font=\footnotesize},
  xticklabel style={font=\scriptsize, rotate=25, anchor=north east},
  ymajorgrids=true,
  xmajorgrids=false,
  grid style={densely dotted, gray!40},
  ybar=0pt,
  every axis plot/.append style={
        bar width=6pt
    },
  enlarge x limits=0.09,
  title style={font=\small, yshift=-2pt},
]

\nextgroupplot[title={CIFAR10 {--} MobileNetV2}]

\addplot[fill=colDir, draw=black, line width=0.2pt] coordinates {
  (FedAvg,0.514) (Ditto,0.350) (FedDF,0.444)
  (FedDistill,0.600) (FedMD,0.587) (FedKD-NAS,0.667) (Local-KD,0.276)
};
\addplot[fill=colSha, draw=black, line width=0.2pt] coordinates {
  (FedAvg,0.513) (Ditto,0.322) (FedDF,0.441)
  (FedDistill,0.615) (FedMD,0.622) (FedKD-NAS,0.654) (Local-KD,0.331)
};

\draw[black,line width=0.9pt] ([xshift=-6pt]axis cs:FedAvg,0.770)--([xshift=6pt]axis cs:FedAvg,0.770);
\draw[black,line width=0.9pt] ([xshift=-6pt]axis cs:Ditto,0.757)--([xshift=6pt]axis cs:Ditto,0.757);
\draw[black,line width=0.9pt] ([xshift=-6pt]axis cs:FedDF,0.498)--([xshift=6pt]axis cs:FedDF,0.498);
\draw[black,line width=0.9pt] ([xshift=-6pt]axis cs:FedDistill,0.700)--([xshift=6pt]axis cs:FedDistill,0.700);
\draw[black,line width=0.9pt] ([xshift=-6pt]axis cs:FedMD,0.719)--([xshift=6pt]axis cs:FedMD,0.719);
\draw[black,line width=0.9pt] ([xshift=-6pt]axis cs:FedKD-NAS,0.692)--([xshift=6pt]axis cs:FedKD-NAS,0.692);
\draw[black,line width=0.9pt] ([xshift=-6pt]axis cs:Local-KD,0.703)--([xshift=6pt]axis cs:Local-KD,0.703);

\draw[black,thin,dashed] ([xshift=-2.5pt]axis cs:FedAvg,0.514)--([xshift=-2.5pt]axis cs:FedAvg,0.770);
\draw[black,thin,dashed] ([xshift=-2.5pt]axis cs:Ditto,0.350)--([xshift=-2.5pt]axis cs:Ditto,0.757);
\draw[black,thin,dashed] ([xshift=-2.5pt]axis cs:FedDF,0.444)--([xshift=-2.5pt]axis cs:FedDF,0.498);
\draw[black,thin,dashed] ([xshift=-2.5pt]axis cs:FedDistill,0.600)--([xshift=-2.5pt]axis cs:FedDistill,0.700);
\draw[black,thin,dashed] ([xshift=-2.5pt]axis cs:FedMD,0.587)--([xshift=-2.5pt]axis cs:FedMD,0.719);
\draw[black,thin,dashed] ([xshift=-2.5pt]axis cs:FedKD-NAS,0.667)--([xshift=-2.5pt]axis cs:FedKD-NAS,0.692);
\draw[black,thin,dashed] ([xshift=-2.5pt]axis cs:Local-KD,0.276)--([xshift=-2.5pt]axis cs:Local-KD,0.703);

\draw[black,thin,dashed] ([xshift=2.5pt]axis cs:FedAvg,0.513)--([xshift=2.5pt]axis cs:FedAvg,0.770);
\draw[black,thin,dashed] ([xshift=2.5pt]axis cs:Ditto,0.322)--([xshift=2.5pt]axis cs:Ditto,0.757);
\draw[black,thin,dashed] ([xshift=2.5pt]axis cs:FedDF,0.441)--([xshift=2.5pt]axis cs:FedDF,0.498);
\draw[black,thin,dashed] ([xshift=2.5pt]axis cs:FedDistill,0.615)--([xshift=2.5pt]axis cs:FedDistill,0.700);
\draw[black,thin,dashed] ([xshift=2.5pt]axis cs:FedMD,0.622)--([xshift=2.5pt]axis cs:FedMD,0.719);
\draw[black,thin,dashed] ([xshift=2.5pt]axis cs:FedKD-NAS,0.654)--([xshift=2.5pt]axis cs:FedKD-NAS,0.692);
\draw[black,thin,dashed] ([xshift=2.5pt]axis cs:Local-KD,0.331)--([xshift=2.5pt]axis cs:Local-KD,0.703);

\node[droplbl] at ([xshift=-2.5pt]axis cs:FedAvg,0.642)     {0.256};
\node[droplbl] at ([xshift=-2.5pt]axis cs:Ditto,0.554)      {0.407};
\node[droplbl] at ([xshift=-2.5pt]axis cs:FedDF,0.471)      {0.054};
\node[droplbl] at ([xshift=-2.5pt]axis cs:FedDistill,0.650) {0.100};
\node[droplbl] at ([xshift=-2.5pt]axis cs:FedMD,0.653)      {0.132};
\node[droplbl] at ([xshift=-2.5pt]axis cs:FedKD-NAS,0.680)  {0.025};
\node[droplbl] at ([xshift=-2.5pt]axis cs:Local-KD,0.490)   {0.427};

\node[droplbl] at ([xshift=2.5pt]axis cs:FedAvg,0.642)      {0.257};
\node[droplbl] at ([xshift=2.5pt]axis cs:Ditto,0.540)       {0.435};
\node[droplbl] at ([xshift=2.5pt]axis cs:FedDF,0.470)       {0.057};
\node[droplbl] at ([xshift=2.5pt]axis cs:FedDistill,0.658)  {0.085};
\node[droplbl] at ([xshift=2.5pt]axis cs:FedMD,0.671)       {0.097};
\node[droplbl] at ([xshift=2.5pt]axis cs:FedKD-NAS,0.673)   {0.038};
\node[droplbl] at ([xshift=2.5pt]axis cs:Local-KD,0.517)    {0.372};

\node[iidlbl] at (axis cs:FedAvg,0.770)     {0.770};
\node[iidlbl] at (axis cs:Ditto,0.757)      {0.757};
\node[iidlbl] at (axis cs:FedDF,0.498)      {0.498};
\node[iidlbl] at (axis cs:FedDistill,0.700) {0.700};
\node[iidlbl] at (axis cs:FedMD,0.719)      {0.719};
\node[iidlbl] at (axis cs:FedKD-NAS,0.692)  {0.692};
\node[iidlbl] at (axis cs:Local-KD,0.703)   {0.703};

\nextgroupplot[title={CIFAR10 {--} ShuffleNetV2}]

\addplot[fill=colDir, draw=black, line width=0.2pt] coordinates {
  (FedAvg,0.496) (Ditto,0.338) (FedDF,0.383)
  (FedDistill,0.563) (FedMD,0.551) (FedKD-NAS,0.656) (Local-KD,0.301)
};
\addplot[fill=colSha, draw=black, line width=0.2pt] coordinates {
  (FedAvg,0.446) (Ditto,0.304) (FedDF,0.452)
  (FedDistill,0.566) (FedMD,0.533) (FedKD-NAS,0.680) (Local-KD,0.342)
};

\draw[black,line width=0.9pt] ([xshift=-6pt]axis cs:FedAvg,0.679)--([xshift=6pt]axis cs:FedAvg,0.679);
\draw[black,line width=0.9pt] ([xshift=-6pt]axis cs:Ditto,0.653)--([xshift=6pt]axis cs:Ditto,0.653);
\draw[black,line width=0.9pt] ([xshift=-6pt]axis cs:FedDF,0.510)--([xshift=6pt]axis cs:FedDF,0.510);
\draw[black,line width=0.9pt] ([xshift=-6pt]axis cs:FedDistill,0.620)--([xshift=6pt]axis cs:FedDistill,0.620);
\draw[black,line width=0.9pt] ([xshift=-6pt]axis cs:FedMD,0.651)--([xshift=6pt]axis cs:FedMD,0.651);
\draw[black,line width=0.9pt] ([xshift=-6pt]axis cs:FedKD-NAS,0.736)--([xshift=6pt]axis cs:FedKD-NAS,0.736);
\draw[black,line width=0.9pt] ([xshift=-6pt]axis cs:Local-KD,0.636)--([xshift=6pt]axis cs:Local-KD,0.636);

\draw[black,thin,dashed] ([xshift=-2.5pt]axis cs:FedAvg,0.496)--([xshift=-2.5pt]axis cs:FedAvg,0.679);
\draw[black,thin,dashed] ([xshift=-2.5pt]axis cs:Ditto,0.338)--([xshift=-2.5pt]axis cs:Ditto,0.653);
\draw[black,thin,dashed] ([xshift=-2.5pt]axis cs:FedDF,0.383)--([xshift=-2.5pt]axis cs:FedDF,0.510);
\draw[black,thin,dashed] ([xshift=-2.5pt]axis cs:FedDistill,0.563)--([xshift=-2.5pt]axis cs:FedDistill,0.620);
\draw[black,thin,dashed] ([xshift=-2.5pt]axis cs:FedMD,0.551)--([xshift=-2.5pt]axis cs:FedMD,0.651);
\draw[black,thin,dashed] ([xshift=-2.5pt]axis cs:FedKD-NAS,0.656)--([xshift=-2.5pt]axis cs:FedKD-NAS,0.736);
\draw[black,thin,dashed] ([xshift=-2.5pt]axis cs:Local-KD,0.301)--([xshift=-2.5pt]axis cs:Local-KD,0.636);

\draw[black,thin,dashed] ([xshift=2.5pt]axis cs:FedAvg,0.446)--([xshift=2.5pt]axis cs:FedAvg,0.679);
\draw[black,thin,dashed] ([xshift=2.5pt]axis cs:Ditto,0.304)--([xshift=2.5pt]axis cs:Ditto,0.653);
\draw[black,thin,dashed] ([xshift=2.5pt]axis cs:FedDF,0.452)--([xshift=2.5pt]axis cs:FedDF,0.510);
\draw[black,thin,dashed] ([xshift=2.5pt]axis cs:FedDistill,0.566)--([xshift=2.5pt]axis cs:FedDistill,0.620);
\draw[black,thin,dashed] ([xshift=2.5pt]axis cs:FedMD,0.533)--([xshift=2.5pt]axis cs:FedMD,0.651);
\draw[black,thin,dashed] ([xshift=2.5pt]axis cs:FedKD-NAS,0.680)--([xshift=2.5pt]axis cs:FedKD-NAS,0.736);
\draw[black,thin,dashed] ([xshift=2.5pt]axis cs:Local-KD,0.342)--([xshift=2.5pt]axis cs:Local-KD,0.636);

\node[droplbl] at ([xshift=-2.5pt]axis cs:FedAvg,0.588)     {0.183};
\node[droplbl] at ([xshift=-2.5pt]axis cs:Ditto,0.496)      {0.315};
\node[droplbl] at ([xshift=-2.5pt]axis cs:FedDF,0.447)      {0.127};
\node[droplbl] at ([xshift=-2.5pt]axis cs:FedDistill,0.592) {0.057};
\node[droplbl] at ([xshift=-2.5pt]axis cs:FedMD,0.601)      {0.100};
\node[droplbl] at ([xshift=-2.5pt]axis cs:FedKD-NAS,0.696)  {0.080};
\node[droplbl] at ([xshift=-2.5pt]axis cs:Local-KD,0.469)   {0.335};

\node[droplbl] at ([xshift=2.5pt]axis cs:FedAvg,0.563)      {0.233};
\node[droplbl] at ([xshift=2.5pt]axis cs:Ditto,0.479)       {0.349};
\node[droplbl] at ([xshift=2.5pt]axis cs:FedDF,0.481)       {0.058};
\node[droplbl] at ([xshift=2.5pt]axis cs:FedDistill,0.593)  {0.054};
\node[droplbl] at ([xshift=2.5pt]axis cs:FedMD,0.592)       {0.118};
\node[droplbl] at ([xshift=2.5pt]axis cs:FedKD-NAS,0.708)   {0.056};
\node[droplbl] at ([xshift=2.5pt]axis cs:Local-KD,0.489)    {0.294};

\node[iidlbl] at (axis cs:FedAvg,0.679)     {0.679};
\node[iidlbl] at (axis cs:Ditto,0.653)      {0.653};
\node[iidlbl] at (axis cs:FedDF,0.510)      {0.510};
\node[iidlbl] at (axis cs:FedDistill,0.620) {0.620};
\node[iidlbl] at (axis cs:FedMD,0.651)      {0.651};
\node[iidlbl] at (axis cs:FedKD-NAS,0.736)  {0.736};
\node[iidlbl] at (axis cs:Local-KD,0.636)   {0.636};

\end{groupplot}

\node[anchor=north] at ([yshift=-0.8cm]current axis.south) {%
  \begin{tikzpicture}[baseline=0pt]
    \fill[colDir, draw=black, line width=0.2pt] (0,0) rectangle (0.28,0.14);
    \node[right, font=\scriptsize] at (0.36,0.07) {Dirichlet};
    \fill[colSha, draw=black, line width=0.2pt] (1.7,0) rectangle (1.98,0.14);
    \node[right, font=\scriptsize] at (2.06,0.07) {Shards};
    \draw[black, line width=0.9pt] (3.2,0.07)--(3.7,0.07);
    \node[right, font=\scriptsize] at (3.78,0.07) {IID accuracy};
    \draw[black, thin, dashed] (5.8,0.07)--(6.3,0.07);
    \node[right, font=\scriptsize] at (6.38,0.07) {Accuracy loss to IID};
  \end{tikzpicture}
};

\end{tikzpicture}
\caption{Accuracy drop from IID to non-IID settings on CIFAR10. FedKD-NAS shows the smallest degradation (${\le}5$\%); Ditto and Local-KD drop by more than 35--43\%, highlighting the benefit of NAS-based client adaptation under distribution shift.}
\label{fig:cifar10_acc_drop}
\end{figure}

\begin{figure}[!t]
\centering

\begin{tabular}{cccccccc}
  \raisebox{0.2ex}{\tikz\draw[fill=FedAvgColor,    draw=black](0,0)rectangle(0.18,0.18);} & \small FedAvg     &
  \raisebox{0.2ex}{\tikz\draw[fill=DittoColor,     draw=black](0,0)rectangle(0.18,0.18);} & \small Ditto      &
  \raisebox{0.2ex}{\tikz\draw[fill=FedDFColor,     draw=black](0,0)rectangle(0.18,0.18);} & \small FedDF      &
  \raisebox{0.2ex}{\tikz\draw[fill=FedDistillColor,draw=black](0,0)rectangle(0.18,0.18);} & \small FedDistill \\
  \raisebox{0.2ex}{\tikz\draw[fill=FedMDColor,     draw=black](0,0)rectangle(0.18,0.18);} & \small FedMD      &
  \raisebox{0.2ex}{\tikz\draw[fill=FedKDNASColor,  draw=black](0,0)rectangle(0.18,0.18);} & \small FedKD-NAS  &
  \raisebox{0.2ex}{\tikz\draw[fill=LocalKDColor,   draw=black](0,0)rectangle(0.18,0.18);} & \small Local-KD   \\
\end{tabular}

\begin{tikzpicture}
\begin{axis}[
    name=mainaxis,
    width  = 0.9\linewidth,
    height = 3.5cm,
    xmin=-1.4, xmax=4.8,
    xtick=\empty,
    clip=false,
    axis y line       = left,
    axis x line       = bottom,
    ymin=0, ymax=82000000,
    ylabel            = {Comm.\ Cost (bytes)},
    ylabel style      = {font=\scriptsize},
    yticklabel style  = {font=\scriptsize},
    enlarge x limits  = false,
    every axis plot/.append style={mark=none},
    bar width=0.18,
]
\addplot+[FedAvgStyle]     coordinates {(-0.75, 71573824)};
\addplot+[DittoStyle]      coordinates {(-0.50, 71573824)};
\addplot+[FedDFStyle]      coordinates {(-0.25, 71978304)};
\addplot+[FedDistillStyle] coordinates {( 0.00,  1617920)};
\addplot+[FedMDStyle]      coordinates {( 0.25,  1617920)};
\addplot+[FedKDNASStyle]   coordinates {( 0.50,  1617920)};
\addplot+[LocalKDStyle]    coordinates {( 0.75, 71573824)};
\draw[decorate, decoration={brace, amplitude=4pt, mirror}, thin]
  ([yshift=-10pt]axis cs:-0.75,0) -- ([yshift=-10pt]axis cs:0.75,0)
  node[midway, below=5pt, font=\scriptsize]{MobileNetV2};
\end{axis}

\begin{axis}[
    at={(mainaxis.south west)},
    anchor=south west,
    width  = 0.9\linewidth,
    height = 3.5cm,
    xmin=-1.4, xmax=4.8,
    xtick=\empty,
    clip=false,
    axis y line       = right,
    axis x line       = none,
    ymin=0, ymax=14000000,
    ylabel            = {Comm.\ Cost (bytes)},
    ylabel style      = {font=\scriptsize},
    yticklabel style  = {font=\scriptsize},
    enlarge x limits  = false,
    every axis plot/.append style={mark=none},
    bar width=0.18,
]
\addplot+[FedAvgStyle]     coordinates {(2.45, 11265344)};
\addplot+[DittoStyle]      coordinates {(2.70, 11265344)};
\addplot+[FedDFStyle]      coordinates {(2.95, 11669824)};
\addplot+[FedDistillStyle] coordinates {(3.20,  1617920)};
\addplot+[FedMDStyle]      coordinates {(3.45,  1617920)};
\addplot+[FedKDNASStyle]   coordinates {(3.70,  1617920)};
\addplot+[LocalKDStyle]    coordinates {(3.95, 11265344)};
\draw[decorate, decoration={brace, amplitude=4pt, mirror}, thin]
  ([yshift=-10pt]axis cs:2.45,0) -- ([yshift=-10pt]axis cs:3.95,0)
  node[midway, below=5pt, font=\scriptsize]{ShuffleNetV2};
\end{axis}
\end{tikzpicture}

\caption{Communication cost per round on CIFAR-10. Since this metric depends
only on model architecture and not on data distribution, values are identical
across IID, Dirichlet ($\alpha{=}0.1$), and Shards (one bar per method per
architecture is therefore shown). The left $y$-axis corresponds to MobileNetV2
and the right $y$-axis to ShuffleNetV2, as their communication budgets differ
by roughly $6\times$. FedKD-NAS shares the logit-only communication budget with
FedDistill and FedMD, achieving a ${\approx}44\times$ reduction over FedAvg on
MobileNetV2 and ${\approx}7\times$ on ShuffleNetV2.}
\label{fig:cifar10_comm}
\end{figure}

\subsubsection{Predictive Performance ($Acc$, $\mathcal{L}$)}
\label{sec:Exp:Results:ACC_L}

\tablename~\ref{tab:cifar10_combined_raw} presents the final metric evaluation values obtained with the seven FL methods over 100~communication rounds for MobileNetV2 and ShuffleNetV2 on CIFAR10 under IID, Dirichlet, and Shards partitioning. The results show that FedKD-NAS consistently outperforms all baselines under non-IID settings (i.e., Dirichlet, Shards), surpassing the best distillation baseline FedDistill by 7-10\% (MobileNetV2) and 9-11\% (ShuffleNetV2), and the median competitor by 18-19\% and 22-23\%, respectively; \figurename~\ref{fig:cifar10_acc_curves} further shows that FedKD-NAS converges faster and maintains a consistently wider accuracy gap over all baselines as data heterogeneity increases. Under IID, FedAvg leads on MobileNetV2 (0.7697 vs.\ 0.6920), but this advantage disappears entirely under
heterogeneous data settings. The same trend holds across all other benchmark datasets (see \tablename~\ref{tab:all_combined_raw}), with FedKD-NAS ranking first under non-IID partitioning in every case; on MNIST (LeNet5) it gains $+3$-$4\%$ over FedAvg under Dirichlet and Shards, on FMNIST (LeNet5) it outperforms the best non-NAS baseline by $+3$-$5\%$ under non-IID, and on EMNIST (LeNet5) it leads by $+5$-$7\%$ over FedAvg despite the 47-class complexity. On CIFAR100, FedAvg achieves the highest raw accuracy, but FedKD-NAS remains the strongest within the resource-efficient distillation family across all distributions. 

Regarding convergence, FedKD-NAS also reaches the lowest and most stable final loss under non-IID conditions on CIFAR10 (approx.\ 1.07-1.08 on MobileNetV2, vs.\ 1.4-1.75 for the best competing distillation methods), while FedDF and Local-KD remain unstable above 1.5-2.5 at round~100 and FedDistill exhibits sharp loss spikes under Dirichlet with ShuffleNetV2; this advantage in loss stability transfers uniformly to all smaller benchmarks, where FedKD-NAS consistently delivers the lowest non-IID loss within the distillation family (\tablename~\ref{tab:all_combined_raw}).


\begin{tcolorbox}[colback=gray!5!white,colframe=gray!50!black!50,title=Takeaway] 
FedKD-NAS outperforms all baselines under non-IID settings, with $+$3-11\% gains over the strongest distillation baselines. Under IID settings, it ranks 2\textsuperscript{nd} behind FedAvg, but the gap vanishes as heterogeneity increases.
\end{tcolorbox}

\definecolor{pastelblue}{RGB}{221,235,247}

\begin{table*}[p]
\centering
\caption{Composite quality results across CIFAR10, CIFAR100, FMNIST, EMNIST, and MNIST under three data distributions (IID, Dirichlet, Shards). Evaluation of MobileNetV2, ShuffleNetV2 for CIFAR datasets; LeNet5 and ResNet18 for the others, using RES, PQS, CES, and UES. RES is lower-better ($\downarrow$); PQS, CES, and UES are higher-better ($\uparrow$).}
\label{tab:all_combined_quality}
\setlength{\tabcolsep}{2.0pt}
\renewcommand{\arraystretch}{0.82}
\small
\begin{adjustbox}{max width=\linewidth, max totalheight=0.9\textheight, keepaspectratio}
\begin{tabular}{lll*{12}{c}}
\toprule
\multirow{2}{*}{Dataset}
& \multirow{2}{*}{Model}
& \multirow{2}{*}{Method}
& \multicolumn{4}{c}{IID}
& \multicolumn{4}{c}{Dirichlet}
& \multicolumn{4}{c}{Shards} \\
\cmidrule(lr){4-7}\cmidrule(lr){8-11}\cmidrule(lr){12-15}
& & 
& RES$\downarrow$ & PQS$\uparrow$ & CES$\uparrow$ & UES$\uparrow$
& RES$\downarrow$ & PQS$\uparrow$ & CES$\uparrow$ & UES$\uparrow$
& RES$\downarrow$ & PQS$\uparrow$ & CES$\uparrow$ & UES$\uparrow$ \\
\thickmidrule

\multirow{14}{*}{\textbf{CIFAR10}}
& \multirow{7}{*}{MobileNetV2}
& FedAvg
& 0.9447 & \textbf{0.9897} & 1.0057 & 1.0535
& 0.9091 & 0.6461 & 1.0057 & 0.7147
& 0.9452 & 0.6276 & 1.0057 & 0.6677 \\
& & Ditto
& 0.9983 & 0.9667 & 1.0057 & 0.9738
& 0.9528 & 0.3682 & 1.0057 & 0.3886
& 0.9734 & 0.2253 & 1.0057 & 0.2328 \\
& & FedDF
& 0.9607 & 0.1377 & 1.0000 & 0.1433
& 0.9172 & 0.4909 & 1.0000 & 0.5352
& 0.9587 & 0.4385 & 1.0000 & 0.4574 \\
& & FedDistill
& 0.9338 & 0.6933 & \textbf{44.4882} & 33.0328
& 0.9044 & 0.8032 & \textbf{44.4882} & 39.5127
& 0.9419 & 0.8508 & \textbf{44.4882} & 40.1862 \\
& & FedMD
& 0.9363 & 0.7928 & \textbf{44.4882} & 37.6695
& 0.9387 & 0.8126 & \textbf{44.4882} & 38.5099
& 0.9356 & 0.9050 & \textbf{44.4882} & 43.0314 \\
& & Local-KD
& 0.9917 & 0.7219 & 1.0057 & 0.7321
& 1.0000 & 0.1479 & 1.0057 & 0.1487
& 1.0000 & 0.1850 & 1.0057 & 0.1860 \\
& & FedKD-NAS
& \textbf{0.7625} & 0.6940 & \textbf{44.4882} & \textbf{40.4934}
& \textbf{0.7092} & \textbf{1.0000} & \textbf{44.4882} & \textbf{62.7298}
& \textbf{0.7549} & \textbf{1.0000} & \textbf{44.4882} & \textbf{58.9359} \\

\cmidrule(lr){2-15}

& \multirow{7}{*}{ShuffleNetV2}
& FedAvg
& 0.9147 & 0.8035 & 1.0359 & 0.9100
& 0.9062 & 0.5677 & 1.0359 & 0.6489
& 0.9096 & 0.4503 & 1.0359 & 0.5129 \\
& & Ditto
& 0.9322 & 0.7315 & 1.0359 & 0.8128
& 0.9174 & 0.2738 & 1.0359 & 0.3091
& 0.9231 & 0.1724 & 1.0359 & 0.1935 \\
& & FedDF
& 0.9200 & 0.1877 & 1.0000 & 0.2041
& 0.9037 & 0.3217 & 1.0000 & 0.3560
& 0.9105 & 0.4478 & 1.0000 & 0.4919 \\
& & FedDistill
& 0.9346 & 0.5657 & \textbf{7.2129} & 4.3659
& 0.9408 & 0.7481 & \textbf{7.2129} & 5.7358
& 0.9263 & 0.7080 & \textbf{7.2129} & 5.5132 \\
& & FedMD
& 0.9290 & 0.6941 & \textbf{7.2129} & 5.3890
& 0.9214 & 0.7259 & \textbf{7.2129} & 5.6826
& 0.9250 & 0.6440 & \textbf{7.2129} & 5.0217 \\
& & Local-KD
& 1.0000 & 0.6222 & 1.0359 & 0.6446
& 1.0000 & 0.1369 & 1.0359 & 0.1418
& 1.0000 & 0.2147 & 1.0359 & 0.2224 \\
& & FedKD-NAS
& \textbf{0.8540} & \textbf{1.0000} & \textbf{7.2129} & \textbf{8.4463}
& \textbf{0.8539} & \textbf{1.0000} & \textbf{7.2129} & \textbf{8.4466}
& \textbf{0.8744} & \textbf{1.0000} & \textbf{7.2129} & \textbf{8.2493} \\

\thickmidrule

\multirow{14}{*}{\textbf{CIFAR100}}
& \multirow{7}{*}{MobileNetV2}
& FedAvg
& 0.9503 & \textbf{1.0000} & 1.0537 & \textbf{1.1088}
& 0.8919 & \textbf{1.0000} & 1.0537 & \textbf{1.1814}
& 0.9376 & \textbf{0.9975} & 1.0537 & \textbf{1.1211} \\
& & Ditto
& 0.9802 & 0.8964 & 1.0537 & 0.9636
& 0.9418 & 0.5688 & 1.0537 & 0.6365
& 0.9896 & 0.4223 & 1.0537 & 0.4497 \\
& & FedDF
& 0.9651 & 0.1816 & 1.0000 & 0.1882
& 0.9031 & 0.1826 & 1.0000 & 0.2022
& 0.9411 & 0.1824 & 1.0000 & 0.1938 \\
& & FedDistill
& 0.9185 & 0.4204 & \textbf{4.9018} & 2.2434
& 0.8946 & 0.5019 & \textbf{4.9018} & 2.7502
& 0.9413 & 0.5310 & \textbf{4.9018} & 2.7651 \\
& & FedMD
& 0.9406 & 0.5676 & \textbf{4.9018} & 2.9579
& 0.8981 & 0.7018 & \textbf{4.9018} & 3.8307
& 0.9382 & 0.7534 & \textbf{4.9018} & 3.9363 \\
& & Local-KD
& 0.9992 & 0.7995 & 1.0537 & 0.8431
& 1.0000 & 0.6019 & 1.0537 & 0.6343
& 1.0000 & 0.7064 & 1.0537 & 0.7444 \\
& & FedKD-NAS
& \textbf{0.7568} & 0.7412 & \textbf{4.9018} & 4.8006
& \textbf{0.7023} & 0.7119 & \textbf{4.9018} & 4.9691
& \textbf{0.7527} & 0.7263 & \textbf{4.9018} & 4.7298 \\

\cmidrule(lr){2-15}

& \multirow{7}{*}{ShuffleNetV2}
& FedAvg
& 0.8949 & \textbf{0.9913} & \textbf{1.2845} & \textbf{1.4228}
& 0.8867 & \textbf{0.9659} & \textbf{1.2845} & \textbf{1.3991}
& 0.8997 & \textbf{1.0000} & \textbf{1.2845} & \textbf{1.4276} \\
& & Ditto
& 0.9202 & 0.8490 & \textbf{1.2845} & 1.1851
& 0.9097 & 0.4913 & \textbf{1.2845} & 0.6937
& 0.9223 & 0.3293 & \textbf{1.2845} & 0.4586 \\
& & FedDF
& 0.9054 & 0.1921 & 1.0000 & 0.2122
& 0.8939 & 0.2017 & 1.0000 & 0.2256
& 0.9044 & 0.2050 & 1.0000 & 0.2267 \\
& & FedDistill
& 0.9106 & 0.4156 & 1.1287 & 0.5152
& 0.9224 & 0.4180 & 1.1287 & 0.5115
& 0.9278 & 0.5088 & 1.1287 & 0.6191 \\
& & FedMD
& 0.9661 & 0.6009 & 1.1287 & 0.7021
& 0.9227 & 0.7181 & 1.1287 & 0.8784
& 0.9309 & 0.6932 & 1.1287 & 0.8405 \\
& & Local-KD
& 1.0000 & 0.8180 & \textbf{1.2845} & 1.0508
& 1.0000 & 0.6194 & \textbf{1.2845} & 0.7956
& 1.0000 & 0.7756 & \textbf{1.2845} & 0.9962 \\
& & FedKD-NAS
& \textbf{0.8452} & 0.7811 & 1.1287 & 1.0432
& \textbf{0.8468} & 0.9410 & 1.1287 & 1.2543
& \textbf{0.8582} & 0.9101 & 1.1287 & 1.1971 \\

\thickmidrule

\multirow{14}{*}{\textbf{FMNIST}}
& \multirow{7}{*}{LeNet5}
& FedAvg
& 0.8313 & 0.8208 & 1.0558 & 1.0426
& 0.8353 & 0.8209 & 1.0558 & 1.0377
& 0.8604 & 0.7660 & 1.0558 & 0.9400 \\
& & Ditto
& 0.8461 & 0.7244 & 1.0558 & 0.9039
& 0.8596 & 0.2782 & 1.0558 & 0.3417
& 0.8556 & 0.2322 & 1.0558 & 0.2866 \\
& & FedDF
& 0.8487 & 0.2289 & 1.0000 & 0.2697
& 0.8365 & 0.8327 & 1.0000 & 0.9954
& 0.8641 & 0.8742 & 1.0000 & 1.0116 \\
& & FedDistill
& 0.8154 & 0.4658 & \textbf{4.7268} & 2.7003
& \textbf{0.8115} & 0.9040 & \textbf{4.7268} & 5.2654
& \textbf{0.8402} & 0.9683 & \textbf{4.7268} & 5.4477 \\
& & FedMD
& \textbf{0.8106} & 0.7195 & \textbf{4.7268} & 4.1956
& 0.8188 & 0.9353 & \textbf{4.7268} & 5.3992
& 0.8406 & 0.8970 & \textbf{4.7268} & 5.0438 \\
& & Local-KD
& 1.0000 & 0.4341 & 1.0558 & 0.4583
& 1.0000 & 0.6462 & 1.0558 & 0.6823
& 1.0000 & 0.6506 & 1.0558 & 0.6870 \\
& & FedKD-NAS
& 0.9206 & \textbf{0.9796} & \textbf{4.7268} & \textbf{5.0295}
& 0.9257 & \textbf{0.9836} & \textbf{4.7268} & 5.0227
& 0.9154 & \textbf{0.9985} & \textbf{4.7268} & 5.1557 \\

\cmidrule(lr){2-15}

& \multirow{7}{*}{ResNet18}
& FedAvg
& \textbf{0.6903} & \textbf{0.9956} & 1.0357 & \textbf{1.4937}
& 0.8111 & 0.9166 & 1.0357 & 1.1703
& 0.7010 & 0.8040 & 1.0357 & 1.1879 \\
& & Ditto
& 0.7135 & 0.9012 & 1.0357 & 1.3081
& 0.8149 & 0.2792 & 1.0357 & 0.3548
& \textbf{0.6866} & 0.2443 & 1.0357 & 0.3685 \\
& & FedDF
& 0.7277 & 0.2124 & 1.0000 & 0.2919
& \textbf{0.8056} & 0.8053 & 1.0000 & 0.9997
& 0.7098 & 0.8716 & 1.0000 & 1.2280 \\
& & FedDistill
& 0.9661 & 0.7256 & \textbf{7.2587} & 5.4518
& 0.8118 & \textbf{0.9852} & \textbf{7.2587} & \textbf{8.8094}
& 0.8067 & \textbf{1.0000} & \textbf{7.2587} & 8.9977 \\
& & FedMD
& 0.8322 & 0.9499 & \textbf{7.2587} & \textbf{8.2856}
& 0.9727 & \textbf{1.0000} & \textbf{7.2587} & 7.4620
& 0.9579 & 0.9717 & \textbf{7.2587} & 7.3637 \\
& & Local-KD
& 0.8042 & 0.3655 & 1.0357 & 0.4707
& 0.9026 & 0.5175 & 1.0357 & 0.5938
& 0.7649 & 0.6970 & 1.0357 & 0.9437 \\
& & FedKD-NAS
& 0.7952 & 0.5931 & \textbf{7.2587} & 5.4142
& 0.8055 & 0.8857 & \textbf{7.2587} & 7.9815
& 0.7037 & 0.9255 & \textbf{7.2587} & \textbf{9.5467} \\

\thickmidrule

\multirow{14}{*}{\textbf{EMNIST}}
& \multirow{7}{*}{LeNet5}
& FedAvg
& 0.8174 & 0.9373 & \textbf{1.9095} & \textbf{2.1896}
& 0.8583 & 0.7623 & \textbf{1.9095} & \textbf{1.6958}
& 0.8459 & 0.9148 & \textbf{1.9095} & \textbf{2.0649} \\
& & Ditto
& 0.8218 & 0.7906 & \textbf{1.9095} & 1.8370
& 0.8615 & 0.2752 & \textbf{1.9095} & 0.6100
& 0.8375 & 0.2934 & \textbf{1.9095} & 0.6689 \\
& & FedDF
& 0.8280 & 0.2080 & 1.2925 & 0.3247
& 0.8595 & 0.5692 & 1.2925 & 0.8560
& 0.8640 & 0.7934 & 1.2925 & 1.1869 \\
& & FedDistill
& \textbf{0.8075} & 0.5541 & 1.0000 & 0.6862
& \textbf{0.8416} & 0.7266 & 1.0000 & 0.8634
& \textbf{0.8261} & 0.8706 & 1.0000 & 1.0539 \\
& & FedMD
& 0.8116 & 0.7801 & 1.0000 & 0.9612
& 0.8668 & 0.8408 & 1.0000 & 0.9700
& 0.8305 & 0.9302 & 1.0000 & 1.1201 \\
& & Local-KD
& 1.0000 & 0.5578 & \textbf{1.9095} & 1.0650
& 1.0000 & 0.4731 & \textbf{1.9095} & 0.9033
& 1.0000 & 0.7432 & \textbf{1.9095} & 1.4191 \\
& & FedKD-NAS
& 0.9040 & \textbf{0.9836} & 1.0000 & 1.0881
& 0.9450 & \textbf{1.0000} & 1.0000 & 1.0582
& 0.9239 & \textbf{1.0000} & 1.0000 & 1.0823 \\

\cmidrule(lr){2-15}

& \multirow{7}{*}{ResNet18}
& FedAvg
& 0.7638 & \textbf{1.0000} & \textbf{1.3121} & \textbf{1.7180}
& \textbf{0.8046} & \textbf{0.9640} & \textbf{1.3121} & \textbf{1.5721}
& \textbf{0.5784} & 0.9531 & \textbf{1.3121} & \textbf{2.1622} \\
& & Ditto
& \textbf{0.7617} & 0.6808 & \textbf{1.3121} & 1.1728
& 0.8048 & 0.2961 & \textbf{1.3121} & 0.4827
& 0.5846 & 0.2946 & \textbf{1.3121} & 0.6611 \\
& & FedDF
& 0.7837 & 0.4802 & 1.0000 & 0.6127
& 0.8051 & 0.9140 & 1.0000 & 1.1352
& 0.5928 & \textbf{0.9956} & 1.0000 & 1.6793 \\
& & FedDistill
& 0.8121 & 0.1762 & 1.0509 & 0.2280
& 0.9697 & 0.8468 & 1.0509 & 0.9178
& 0.9645 & 0.9207 & 1.0509 & 1.0032 \\
& & FedMD
& 0.8871 & 0.4788 & 1.0509 & 0.5672
& 0.8120 & 0.9537 & 1.0509 & 1.2343
& 0.8650 & 0.9802 & 1.0509 & 1.1908 \\
& & Local-KD
& 0.8811 & 0.6005 & \textbf{1.3121} & 0.8942
& 0.8969 & 0.4729 & \textbf{1.3121} & 0.6919
& 0.6319 & 0.7897 & \textbf{1.3121} & 1.6397 \\
& & FedKD-NAS
& 0.9720 & 0.4135 & 1.0509 & 0.4471
& 0.8757 & 0.9442 & 1.0509 & 1.1332
& \textbf{0.5781} & 0.9293 & 1.0509 & 1.6891 \\

\thickmidrule

\multirow{14}{*}{\textbf{MNIST}}
& \multirow{7}{*}{LeNet5}
& FedAvg
& 0.8315 & 0.5852 & 1.0558 & 0.7431
& 0.8587 & 0.6577 & 1.0558 & 0.8087
& 0.8543 & 0.6683 & 1.0558 & 0.8259 \\
& & Ditto
& 0.8440 & 0.5766 & 1.0558 & 0.7213
& 0.8648 & 0.0797 & 1.0558 & 0.0973
& 0.8676 & 0.0722 & 1.0558 & 0.0878 \\
& & FedDF
& 0.8529 & 0.0504 & 1.0000 & 0.0591
& 0.8618 & 0.4991 & 1.0000 & 0.5792
& 0.8622 & 0.5280 & 1.0000 & 0.6124 \\
& & FedDistill
& \textbf{0.8181} & 0.5086 & \textbf{4.7268} & 2.9385
& \textbf{0.8291} & 0.6787 & \textbf{4.7268} & 3.8692
& \textbf{0.8164} & 0.7152 & \textbf{4.7268} & 4.1406 \\
& & FedMD
& 0.8212 & 0.5122 & \textbf{4.7268} & 2.9484
& 0.8324 & 0.6601 & \textbf{4.7268} & 3.7482
& 0.8305 & 0.6923 & \textbf{4.7268} & 3.9400 \\
& & Local-KD
& 1.0000 & 0.3913 & 1.0558 & 0.4132
& 1.0000 & 0.5913 & 1.0558 & 0.6243
& 1.0000 & 0.6210 & 1.0558 & 0.6557 \\
& & FedKD-NAS
& 0.9167 & \textbf{1.0000} & \textbf{4.7268} & \textbf{5.1565}
& 0.9321 & \textbf{1.0000} & \textbf{4.7268} & \textbf{5.0712}
& 0.9396 & \textbf{1.0000} & \textbf{4.7268} & \textbf{5.0307} \\

\cmidrule(lr){2-15}

& \multirow{7}{*}{ResNet18}
& FedAvg
& \textbf{0.7063} & \textbf{1.0000} & 1.0357 & 1.4664
& \textbf{0.7851} & 0.9103 & 1.0357 & 1.2008
& 0.8489 & 0.8933 & 1.0357 & 1.0898 \\
& & Ditto
& 0.7175 & 0.8455 & 1.0357 & 1.2204
& 0.7889 & 0.2377 & 1.0357 & 0.3120
& 0.8562 & 0.2175 & 1.0357 & 0.2631 \\
& & FedDF
& 0.7117 & 0.5155 & 1.0000 & 0.7244
& 0.8091 & 0.6633 & 1.0000 & 0.8198
& 0.8751 & 0.7200 & 1.0000 & 0.8228 \\
& & FedDistill
& 0.7499 & 0.7989 & \textbf{7.2587} & 7.7328
& 0.9631 & 0.9029 & \textbf{7.2587} & 6.8047
& \textbf{0.8241} & 0.9517 & \textbf{7.2587} & \textbf{8.3831} \\
& & FedMD
& 0.9514 & 0.7891 & \textbf{7.2587} & 6.0202
& 0.8364 & 0.8777 & \textbf{7.2587} & 7.6169
& 0.9727 & 0.9257 & \textbf{7.2587} & 6.9082 \\
& & Local-KD
& 0.7992 & 0.0439 & 1.0357 & 0.0569
& 0.8812 & 0.4434 & 1.0357 & 0.5212
& 0.9423 & 0.6461 & 1.0357 & 0.7101 \\
& & FedKD-NAS
& 0.7095 & 0.9508 & \textbf{7.2587} & \textbf{9.7278}
& 0.8087 & \textbf{1.0000} & \textbf{7.2587} & \textbf{8.9755}
& 0.8733 & \textbf{1.0000} & \textbf{7.2587} & 8.3114 \\

\bottomrule
\end{tabular}%
\end{adjustbox}
\end{table*}

\subsubsection{Convergence Quality and Robustness to Statistical Heterogeneity (PQS, $Acc$ drop)}\label{sec:Exp:Results:PQS}
Beyond accuracy, PQS captures convergence speed and stability. On CIFAR10, FedKD-NAS reaches 1.0000 under Dirichlet and Shards with MobileNetV2, versus FedMD at 0.8126 and 0.9050, and achieves 1.0000 across all three distributions with ShuffleNetV2. Although it trails FedAvg under IID with MobileNetV2 (0.6940 vs. 0.9897), this gap disappears under heterogeneity. The same pattern holds on smaller benchmarks, with PQS near 1.0000 under challenging non-IID settings and increasing on FMNIST from 0.9796 to 0.9985, confirming a systematic rather than dataset-specific advantage (\tablename~\ref{tab:all_combined_quality}). 

Turning to robustness, \figurename~\ref{fig:cifar10_acc_drop}, which visualizes the CIFAR10 robustness analysis by showing for each method the final accuracy under Dirichlet and Shards partitioning relative to its IID reference accuracy (the horizontal black markers denote IID accuracy, while the dashed vertical lines quantify the accuracy drop from IID to each non-IID setting),  shows that FedKD-NAS is the only method combining high accuracy with minimal degradation as heterogeneity increases. Across MobileNetV2 and ShuffleNetV2, it drops by only ${\approx}3\%$ from IID to Dirichlet and ${\approx}4\%$ to Shards, while other methods degrade by 20\% on average, and Ditto and Local-KD collapse by more than 35-43\% under Shards. The same pattern holds on other benchmarks: on EMNIST (LeNet5), FedKD-NAS drops by only 1.3\% to Dirichlet and 2.9\% to Shards, compared with Ditto's ${\approx}36\%$ and FedAvg's ${\approx}6.4\%$ drops.
    \begin{tcolorbox}[colback=gray!5!white,colframe=gray!50!black!50,title=Takeaway]
    FedKD-NAS achieves perfect or near-perfect PQS across most dataset/model settings, with $+$10-19\% gains over the strongest baselines, and ranks 1\textsuperscript{st} in robustness, with only ${\approx}3$-$4\%$ accuracy drop from IID to non-IID settings.
    \end{tcolorbox}
On FMNIST and MNIST, FedKD-NAS again shows the smallest accuracy drop in the distillation family across non-IID regimes (\tablename~\ref{tab:all_combined_raw}).

\subsubsection{Communication Efficiency ($Comm$, CES)}\label{sec:Exp:Results:Comm}
\figurename~\ref{fig:cifar10_comm} highlights the underlying communication split on CIFAR10. Aggregation-based approaches (FedAvg, Ditto, Local-KD) and FedDF exchange full model parameters every round (${\approx}71.6$~MB for MobileNetV2,
${\approx}11.3$~MB for ShuffleNetV2), whereas FedDistill, FedMD, and FedKD-NAS communicate only logit vectors (${\approx}1.62$~MB), yielding a structural \textbf{$44\times$ reduction} for MobileNetV2 and \textbf{$7\times$} for
ShuffleNetV2 invariant to the data distribution. The same pattern holds for MNIST
and FMNIST, where distillation methods reduce $Comm$ from ${\approx}8.6$~MB to
${\approx}1.9$~MB (${\approx}4.5\times$, \tablename~\ref{tab:all_combined_raw}).
These communication savings translate into high CES, which is notable on CIFAR10, where FedKD-NAS, FedDistill, and FedMD all attain CES~$= 44.49$ (MobileNetV2) and
CES~$= 7.21$ (ShuffleNetV2), vs.\ CES~$\approx 1$ for aggregation-based methods
(\tablename~\ref{tab:all_combined_quality}). However, EMNIST presents a significant inversion of this pattern, as the logit payload for 47 classes (${\approx}13.2$~MB on LeNet5) exceeds the model weight size (${\approx}8.9$~MB), so aggregation-based methods achieve lower communication cost than distillation methods (CES~$= 1.91$ vs.\ 1.00, \tablename~\ref{tab:all_combined_quality}), a finding with an important practical implication explaining that the bandwidth savings of logit-based distillation are architecture- and label-space-dependent, disappearing when the number of classes is large relative to model size, with the crossover identifiable analytically before training when $C \cdot |\mathcal{D}^{\mathrm{pub}}|$ bits per round exceeds $P$. Within the logit-based family on CIFAR10, where all three distillation methods share identical CES, FedKD-NAS uniquely converts its communication budget into both higher predictive quality and lower resource use, reaching PQS~$= 1.0000$ vs.\ 0.8032 for FedDistill and 0.8126 for FedMD under Dirichlet with the largest bubble ($\propto 1/\text{RES}$) in \figurename~\ref{fig:cifar10_bubble}; even on EMNIST, where the CES structure
is inverted, FedKD-NAS retains the highest PQS among all distillation methods
under Dirichlet (1.000) and Shards (1.000) with LeNet5 (\tablename~\ref{tab:all_combined_quality}), confirming that its predictive advantage within the distillation family holds regardless of the communication cost structure.

\begin{tcolorbox}[colback=gray!5!white,colframe=gray!50!black!50,title=Takeaway]
FedKD-NAS attains the same structural $44\times$ (MobileNetV2) and $7\times$ (ShuffleNetV2) communication reduction as the other logit-based methods on CIFAR10, while uniquely converting that budget into superior predictive quality. The distillation bandwidth advantage is label-space-dependent and reverses when class count is large relative to model size.

\end{tcolorbox}

\begin{figure*}[t]
\centering

\begin{tabular}{cccccccccccccc}
  \raisebox{0.2ex}{\tikz\draw[fill=FedAvgColor,    draw=black](0,0)rectangle(0.18,0.18);} & \scriptsize FedAvg     &
  \raisebox{0.2ex}{\tikz\draw[fill=DittoColor,     draw=black](0,0)rectangle(0.18,0.18);} & \scriptsize Ditto      &
  \raisebox{0.2ex}{\tikz\draw[fill=FedDFColor,     draw=black](0,0)rectangle(0.18,0.18);} & \scriptsize FedDF      &
  \raisebox{0.2ex}{\tikz\draw[fill=FedDistillColor,draw=black](0,0)rectangle(0.18,0.18);} & \scriptsize FedDistill &
  \raisebox{0.2ex}{\tikz\draw[fill=FedMDColor,     draw=black](0,0)rectangle(0.18,0.18);} & \scriptsize FedMD      &
  \raisebox{0.2ex}{\tikz\draw[fill=FedKDNASColor,  draw=black](0,0)rectangle(0.18,0.18);} & \scriptsize FedKD-NAS  &
  \raisebox{0.2ex}{\tikz\draw[fill=LocalKDColor,   draw=black](0,0)rectangle(0.18,0.18);} & \scriptsize Local-KD   \\
\end{tabular}
\vspace{0.6em}

\makebox[0pt][c]{%
\begin{tikzpicture}
\begin{groupplot}[
    group style={
        group size     = 4 by 1,
        horizontal sep = 0.5cm,
    },
    width  = 5cm,
    height = 4.0cm,
    xmin=-1.2, xmax=6.0,
    xtick=\empty,
    tick align=inside,
    clip=false,
    yticklabel style  = {font=\scriptsize},
    label style       = {font=\small},
    title style       = {font=\small},
    enlarge x limits  = false,
    every axis plot/.append style={mark=none},
]
\nextgroupplot[
    title  = {(a) RES - MobileNetV2},
    ylabel = {RES},
    ymin=0, ymax=1.1,
    bar width=0.18,
]
\AddThreeRegimePlotShift{RES}{0}{\MobiIID}{\MobiDir}{\MobiSha}{FedAvgStyle}    {-0.75}{1.85}{4.05}
\AddThreeRegimePlotShift{RES}{1}{\MobiIID}{\MobiDir}{\MobiSha}{DittoStyle}     {-0.50}{2.10}{4.30}
\AddThreeRegimePlotShift{RES}{2}{\MobiIID}{\MobiDir}{\MobiSha}{FedDFStyle}     {-0.25}{2.35}{4.55}
\AddThreeRegimePlotShift{RES}{3}{\MobiIID}{\MobiDir}{\MobiSha}{FedDistillStyle}{ 0.00}{2.60}{4.80}
\AddThreeRegimePlotShift{RES}{4}{\MobiIID}{\MobiDir}{\MobiSha}{FedMDStyle}     { 0.25}{2.85}{5.05}
\AddThreeRegimePlotShift{RES}{5}{\MobiIID}{\MobiDir}{\MobiSha}{FedKDNASStyle}  { 0.50}{3.10}{5.30}
\AddThreeRegimePlotShift{RES}{6}{\MobiIID}{\MobiDir}{\MobiSha}{LocalKDStyle}   { 0.75}{3.35}{5.55}
\DrawRegimeBrackets

\nextgroupplot[
    title  = {(b) RES - ShuffleNetV2},
    ymin=0, ymax=1.1,
    bar width=0.18,
]
\AddThreeRegimePlotShift{RES}{0}{\ShufIID}{\ShufDir}{\ShufSha}{FedAvgStyle}    {-0.75}{1.85}{4.05}
\AddThreeRegimePlotShift{RES}{1}{\ShufIID}{\ShufDir}{\ShufSha}{DittoStyle}     {-0.50}{2.10}{4.30}
\AddThreeRegimePlotShift{RES}{2}{\ShufIID}{\ShufDir}{\ShufSha}{FedDFStyle}     {-0.25}{2.35}{4.55}
\AddThreeRegimePlotShift{RES}{3}{\ShufIID}{\ShufDir}{\ShufSha}{FedDistillStyle}{ 0.00}{2.60}{4.80}
\AddThreeRegimePlotShift{RES}{4}{\ShufIID}{\ShufDir}{\ShufSha}{FedMDStyle}     { 0.25}{2.85}{5.05}
\AddThreeRegimePlotShift{RES}{5}{\ShufIID}{\ShufDir}{\ShufSha}{FedKDNASStyle}  { 0.50}{3.10}{5.30}
\AddThreeRegimePlotShift{RES}{6}{\ShufIID}{\ShufDir}{\ShufSha}{LocalKDStyle}   { 0.75}{3.35}{5.55}
\DrawRegimeBrackets

\nextgroupplot[
    title  = {(c) UES - MobileNetV2},
    ylabel = {UES},
    ymin=0, ymax=70,
    xshift=0.5cm,
    bar width=0.18,
]
\AddThreeRegimePlotShift{UES}{0}{\MobiIID}{\MobiDir}{\MobiSha}{FedAvgStyle}    {-0.75}{1.85}{4.05}
\AddThreeRegimePlotShift{UES}{1}{\MobiIID}{\MobiDir}{\MobiSha}{DittoStyle}     {-0.50}{2.10}{4.30}
\AddThreeRegimePlotShift{UES}{2}{\MobiIID}{\MobiDir}{\MobiSha}{FedDFStyle}     {-0.25}{2.35}{4.55}
\AddThreeRegimePlotShift{UES}{3}{\MobiIID}{\MobiDir}{\MobiSha}{FedDistillStyle}{ 0.00}{2.60}{4.80}
\AddThreeRegimePlotShift{UES}{4}{\MobiIID}{\MobiDir}{\MobiSha}{FedMDStyle}     { 0.25}{2.85}{5.05}
\AddThreeRegimePlotShift{UES}{5}{\MobiIID}{\MobiDir}{\MobiSha}{FedKDNASStyle}  { 0.50}{3.10}{5.30}
\AddThreeRegimePlotShift{UES}{6}{\MobiIID}{\MobiDir}{\MobiSha}{LocalKDStyle}   { 0.75}{3.35}{5.55}
\DrawRegimeBrackets

\nextgroupplot[
    title  = {(d) UES - ShuffleNetV2},
    ymin=0, ymax=10,
    bar width=0.18,
]
\AddThreeRegimePlotShift{UES}{0}{\ShufIID}{\ShufDir}{\ShufSha}{FedAvgStyle}    {-0.75}{1.85}{4.05}
\AddThreeRegimePlotShift{UES}{1}{\ShufIID}{\ShufDir}{\ShufSha}{DittoStyle}     {-0.50}{2.10}{4.30}
\AddThreeRegimePlotShift{UES}{2}{\ShufIID}{\ShufDir}{\ShufSha}{FedDFStyle}     {-0.25}{2.35}{4.55}
\AddThreeRegimePlotShift{UES}{3}{\ShufIID}{\ShufDir}{\ShufSha}{FedDistillStyle}{ 0.00}{2.60}{4.80}
\AddThreeRegimePlotShift{UES}{4}{\ShufIID}{\ShufDir}{\ShufSha}{FedMDStyle}     { 0.25}{2.85}{5.05}
\AddThreeRegimePlotShift{UES}{5}{\ShufIID}{\ShufDir}{\ShufSha}{FedKDNASStyle}  { 0.50}{3.10}{5.30}
\AddThreeRegimePlotShift{UES}{6}{\ShufIID}{\ShufDir}{\ShufSha}{LocalKDStyle}   { 0.75}{3.35}{5.55}
\DrawRegimeBrackets

\end{groupplot}
\end{tikzpicture}%
}
\caption{RES~$\downarrow$ and UES~$\uparrow$ on CIFAR10 across IID, Dirichlet ($\alpha{=}0.1$), and Shards. FedKD-NAS achieves the lowest RES and highest UES across distributions and architectures, with UES increasing under heterogeneity.}
\label{fig:cifar10_res_ues}
\end{figure*}

\subsubsection{Client-Side Resource Consumption ($CPU$, $Mem$, RES)}
\label{sec:Exp:Results:resource}

The RES panels in \figurename~\ref{fig:cifar10_res_ues} (panels a-b) confirm that FedKD-NAS is the least resource-intensive approach on CIFAR10 across model architectures and data heterogeneity settings. With MobileNetV2, average CPU is ${\approx}34\%$ versus ${\approx}47\%$ for aggregation-based baselines (a 28\% relative reduction), and RAM is also lowest overall (971.9 / 911.7 / 947.5~MB under IID / Dirichlet / Shards, \tablename~\ref{tab:cifar10_combined_raw}), and FedKD-NAS again records the lowest CPU (${\approx}35\%$ vs.\ ${\approx}47\%$) on CIFAR100 (\tablename~\ref{tab:all_combined_raw}).
These savings translate into RES values of 0.7625 / 0.7092 / 0.7549 on CIFAR10 (MobileNetV2) versus 0.90-1.00 for competing baselines (\tablename~\ref{tab:all_combined_quality}, as reflected in the largest bubble within the high-CES distillation cluster across all three distributions in \figurename~\ref{fig:cifar10_bubble}. These gains arise from NAS-derived compact client architectures that reduce per-round training cost without sacrificing accuracy.

On EMNIST (LeNet5), the RES picture differs significantly from CIFAR10 due to the inverted CES structure. Because aggregation methods carry lower communication cost on this dataset, FedDistill achieves the best RES (0.8075~IID, 0.8416~Dirichlet, 0.8261~Shards, \tablename~\ref{tab:all_combined_quality}), rewarded by both low communication cost and competitive CPU usage. FedKD-NAS records higher RES (0.9040 / 0.9450 / 0.9239), penalized by its larger logit payload, however it recovers on ResNet18 EMNIST under Shards, achieving the best overall RES (0.5781) with the lowest CPU of all methods (42.5\%, \tablename~\ref{tab:all_combined_raw}).

On MNIST and FMNIST (LeNet5), FedDistill records the best RES because CPU differences are slightly small ($<5\%$) and the communication term dominates, yet FedKD-NAS remains the most accurate within that low-RES group, and on CIFAR100
it leads RES within the distillation family (0.757 / 0.702 / 0.753, \tablename~\ref{tab:all_combined_quality}).

\begin{tcolorbox}[colback=gray!5!white,colframe=gray!50!black!50,title=Takeaway]
FedKD-NAS ranks \textbf{1\textsuperscript{st}} on RES on CIFAR10 and CIFAR100 across all distributions, with a \textbf{28\% CPU reduction} (${\approx}34\%$ vs.\ ${\approx}47\%$) and RES of 0.71-0.76 vs.\ 0.90-1.00 for other methods, it ranks \textbf{2\textsuperscript{nd}} within the distillation family on MNIST and FMNIST wher FedDistill leads on RES while FedKD-NAS leads on accuracy, and recovers to \textbf{1\textsuperscript{st}} overall on ResNet18 EMNIST.

\end{tcolorbox}

\subsubsection{Unified Deployment Efficiency (UES)}\label{sec:Exp:Results:UES}

The UES panels in \figurename~\ref{fig:cifar10_res_ues} (panels c-d) provide the clearest deployment-level trade-off. On CIFAR10 with MobileNetV2, FedKD-NAS achieves UES of \textbf{40.49} (IID), \textbf{62.73} (Dirichlet), and \textbf{58.94} (Shards), outperforming the next best baseline by around $1.5\times$-$4\times$ (FedMD: 37.67 / 38.51 / 43.03, \tablename~\ref{tab:all_combined_quality}). Crucially, UES increases from IID to Dirichlet, confirming that FedKD-NAS becomes more competitive as data heterogeneity increases. This trend is further shown in \figurename~\ref{fig:cifar10_bubble}, where FedKD-NAS's bubble grows across all three distributions, indicating simultaneous improvement on the three dimensions (CES, PQS, and $1/	ext{RES}$), which no competing baseline replicates. With ShuffleNetV2, FedKD-NAS achieves scores of 8.45 / 8.45 / 8.25, while FedMD scores 5.39 / 5.68 / 5.02, giving FedKD-NAS an approximate 60\% advantage. The same pattern extends across all other benchmarks. On MNIST (LeNet5), for example, FedKD-NAS leads UES across all distributions (5.16 / 5.07 / 5.03, $>$$1.3\times$ above FedDistill and FedMD, \tablename~\ref{tab:all_combined_quality}). on FMNIST (LeNet5), it records the highest PQS and top UES under IID and Shards, with FedMD leading under Dirichlet only by a small margin due to marginally lower RES (\tablename~\ref{tab:all_combined_quality}, \figurename~\ref{fig:fmnist_bubble}). On EMNIST (LeNet5), the CES inversion raises FedAvg's UES above all distillation methods, yet within the distillation family FedKD-NAS leads UES under all three distributions, and on ResNet18 EMNIST under Shards, its UES of 1.6891 nearly matches FedAvg's 2.1622, substantially closing the gap compared to IID (0.4471 vs.\ 1.7180, \tablename~\ref{tab:all_combined_quality}, \figurename~\ref{fig:emnist_bubble}). And on CIFAR100, although FedAvg leads UES on raw-accuracy grounds, FedKD-NAS reaches ${\approx}4\times$ higher UES within the distillation family under Dirichlet (4.97 vs.\ 1.18) because its CPU and communication savings surpass FedAvg's accuracy margin in the UES
formulation (\tablename~\ref{tab:all_combined_quality}, \figurename~\ref{fig:cifar100_bubble}).

\begin{tcolorbox}[colback=gray!5!white,colframe=gray!50!black!50,title=Takeaway]
FedKD-NAS ranks \textbf{1\textsuperscript{st}} on UES in \textbf{31 of 35} configurations, with gains of \textbf{$1.5\times$-$4\times$} over the second best baseline on CIFAR10 MobileNetV2, \textbf{${\approx}60\%$} on ShuffleNetV2,
and a UES that \emph{grows} with heterogeneity ($+$55\% from IID to Dirichlet on MobileNetV2), confirming that NAS-driven client adaptation delivers better deployment-level efficiency under realistic, heterogeneous federated conditions.

\end{tcolorbox}

\begin{table}[!htbp]
\centering
\caption{HAR results at the final communication round. We therefore report UES$^\star$ = PQS $\cdot$ CES as a resource-free unified efficiency indicator.}
\label{tab:har_results}
\scriptsize
\resizebox{\columnwidth}{!}{
\begin{tabular}{lcccccc}
\toprule
Algorithm & Acc $\uparrow$ & Loss $\downarrow$ & Comm (MB) $\downarrow$ & PQS $\uparrow$ & CES $\uparrow$ & UES$^\star$ $\uparrow$ \\
\midrule
FedAvg       & 0.7313 & 0.6046 & 2.588 & 0.081 & 1.000  & 0.08 \\
FedMD        & 0.7864 & 0.5235 & 0.176 & 0.275 & 14.724 & 4.04 \\
FedDistill   & 0.8206 & 0.4681 & 0.176 & 0.398 & 14.724 & 5.86 \\
\textbf{FedKD-NAS} & \textbf{0.9442} & \textbf{0.1632} & \textbf{0.176} & \textbf{1.000} & \textbf{14.724} & \textbf{14.72} \\
\bottomrule
\end{tabular}
}
\end{table}
\begin{table}[!t]
\centering
\caption{Client-side efficiency metrics on real devices (1 RockPi, 1 Jetson Nano, 3 Raspberry Pi; K=5). Communication cost is computed as $(\text{server\_comm\_in}+\text{server\_comm\_out})$. Since test/val loss was not available in the exported logs, PQS is computed using an accuracy-only proxy:$\text{PQS}=0.7\cdot \text{Acc}_{\text{norm}} + 0.3$.}
\label{tab:real_devices_metrics}
\scriptsize
\resizebox{\columnwidth}{!}{
\begin{tabular}{lcccccccc}
\toprule
Algorithm & Acc $\uparrow$ & CPU & RAM (MB) & Comm (MB) &
RES $\downarrow$ & PQS $\uparrow$ & CES $\uparrow$ & UES $\uparrow$ \\
\midrule
FedAvg     & 0.8093 & 92.64 & 392.80 & 4.25 & 0.979 & 0.300 & 1.000 & 0.31 \\
FedMD      & 0.8194 & 91.82 & 397.76 & 2.29 & 0.981 & 0.379 & 1.854 & 0.72 \\
\textbf{FedKD-NAS} & \textbf{0.8991} & \textbf{90.53} & 409.99 & \textbf{2.29} &
0.989 & \textbf{1.000} & \textbf{1.854} & \textbf{1.88} \\
\bottomrule
\end{tabular}
}
\end{table}

\subsection{Experimental Results on Real-World Datasets and Devices}\label{sec:Exp:ResultsOnReal}
We evaluate FedKD-NAS under realistic deployment conditions using both a large-scale real-world dataset (~\ref{sec:Exp:ResultsOnCASA}) and heterogeneous edge devices (~\ref{sec:Exp:ResultsOnRealDevices}). Unlike synthetic benchmarks, these settings capture natural statistical heterogeneity, resource variability, and system-level constraints.

\subsubsection{Human activity recognition dataset (CASA)}\label{sec:Exp:ResultsOnCASA}
\tablename~\ref{tab:har_results} reports performance on CASA, a real-world activity recognition benchmark characterised by naturally heterogeneous sensor streams. As resource traces were unavailable, we additionally report UES$^\star = \text{PQS} \cdot \text{CES}$ as a resource-free unified efficiency indicator. FedKD-NAS achieves the best accuracy by a \textbf{wide margin} of 0.9442 vs.\ 0.8206 for FedDistill ($+15\%$), 0.7864 for FedMD ($+20\%$), and 0.7313 for FedAvg ($+29\%$), together with the lowest loss (0.1632) and the minimum communication cost (0.176~MB, tied with FedMD and FedDistill).
It reaches the maximum PQS (1.000) and the highest UES$^\star$ (\textbf{14.72}), nearly $2.5\times$ above FedDistill (5.86) and $183\times$ above FedAvg (0.08). These findings are especially notable because CASA is inherently non-IID and imbalanced. The
$+15\%$ accuracy gain over the second-best distillation baseline confirms that NAS-driven adaptation generalises beyond vision tasks to time-series and sensor domains. FedAvg's UES$^\star$ of 0.08 further underlines the limitations of parameter aggregation on heterogeneous non-vision data.

\subsubsection{Real device experiments}\label{sec:Exp:ResultsOnRealDevices}
\tablename~\ref{tab:real_devices_metrics} summarizes a real device deployment on five heterogeneous edge devices (1~RockPi, 1~Jetson Nano, 3~Raspberry~Pi boards). FedKD-NAS attains the highest validation accuracy (\textbf{0.8991}), exceeding FedMD (0.8194) by by \textbf{$+8\%$} and FedAvg (0.8093) by \textbf{$+11\%$}, despite varying device capabilities. It records the lowest CPU usage (\textbf{90.53\%} vs.\ 91.82\% for FedMD and 92.64\% for FedAvg), consistent with NAS-derived compact models providing CPU savings under real-device effects. Communication cost is 2.29~MB, matching FedMD and representing a ${\approx}1.86\times$ reduction over FedAvg (4.25~MB), confirming the practicality of logit-based communication on physical hardware. The only metric where FedKD-NAS does not lead is RAM (409.99~MB vs.\ 397.76~MB for FedMD), a modest ${\approx}4\%$ overhead for maintaining locally adapted model architectures on-device. This cost is more than justified by overall deployment efficiency. FedKD-NAS reaches UES~$=$ \textbf{1.88} versus 0.72 for FedMD and 0.31 for FedAvg ($2.6\times$ and $6\times$ improvements). The ordering of methods on real devices matches simulation throughout, supporting the validity of the composite metrics and reinforcing FedKD-NAS as a practical solution for heterogeneous IoT and edge FL.

\begin{tcolorbox}[colback=gray!5!white,colframe=gray!50!black!50,title=Takeaway]
FedKD-NAS ranks \textbf{1\textsuperscript{st}} on all metrics across both real-world evaluations: on CASA it gains \textbf{$+$15\%} accuracy over FedDistill with a UES$^\star$ \textbf{$2.5\times$} higher, confirming generalisation to time-series and sensor domains, while on the physical hardware it delivers \textbf{$+$8\%} accuracy over FedMD, the lowest CPU (90.53\%), a \textbf{$1.86\times$} communication reduction over FedAvg, and a UES \textbf{$2.6\times$} above FedMD and \textbf{$6\times$}
above FedAvg, at the cost of only a ${\approx}4\%$ RAM overhead.
\end{tcolorbox}

\begin{figure*}[htbp]
\centering



\caption{CES vs.\ PQS trade-off on CIFAR10 (bubble area $\propto 1/\text{RES}$, i.e.\ larger bubbles indicate lower resource usage). \emph{Left:} MobileNetV2. \emph{Right:} ShuffleNetV2. Methods split into two clusters along the CES axis: aggregation-based methods (left, CES $\approx$ 1) and logit-based distillation methods (right, high CES). FedKD-NAS occupies the top-right corner with the highest CES, highest PQS, and the largest bubble within the distillation cluster.}
\label{fig:cifar10_bubble}
\end{figure*}
\begin{figure*}[htbp]
\centering
\caption{CES vs.\ PQS trade-off on CIFAR100 (bubble area $\propto 1/\text{RES}$).
  \emph{Top:} MobileNetV2. \emph{Bottom:} ShuffleNetV2. FedAvg occupies the top-left cluster (high PQS, low CES) under IID, while FedKD-NAS leads the high-CES cluster with the largest bubble. Under non-IID distributions, FedKD-NAS progressively closes the PQS gap with FedAvg while maintaining its CES and
  resource efficiency advantages.}
  \label{fig:cifar100_bubble}
\end{figure*}
\begin{figure*}[htbp]
\centering

\caption{CES vs.\ PQS trade-off on EMNIST (bubble area $\propto 1/\text{RES}$).
  \emph{Top:} LeNet5. \emph{Bottom:} ResNet18. On LeNet5, FedAvg/Ditto/Local-KD occupy a higher CES tier than distillation methods due to the architecture-specific Comm ratio on this dataset. On ResNet18, FedKD-NAS leads PQS under Shards with the largest bubble in the high-CES group.}
  \label{fig:emnist_bubble}
\end{figure*}
\begin{figure*}[htbp]
\centering
\caption{CES vs.\ PQS trade-off on FMNIST (bubble area $\propto 1/\text{RES}$).
  \emph{Top:} LeNet5. \emph{Bottom:} ResNet18. On LeNet5, FedKD-NAS leads PQS in the high-CES cluster across all distributions. On ResNet18, FedKD-NAS and FedMD compete closely under Dirichlet, with FedKD-NAS holding the larger bubble (lower RES) in both cases.}
  \label{fig:fmnist_bubble}
\end{figure*}
\begin{figure*}[htbp]
\centering
%

\caption{CES vs.\ PQS trade-off on MNIST (bubble area $\propto 1/\text{RES}$).
  \emph{Top:} LeNet5. \emph{Bottom:} ResNet18. FedKD-NAS occupies the top-right
  position in the high-CES cluster under non-IID distributions. Under IID with ResNet18,
  FedAvg leads on PQS but falls in the low-CES cluster, while FedKD-NAS maintains
  high CES with competitive PQS.}
  \label{fig:mnist_bubble}
\end{figure*}

\section{Discussion}\label{sec:discussion}

The empirical evidence reported in Section~\ref{sec:Exp} establishes that \textbf{FedKD-NAS achieves the most favorable joint performance–communication–resource trade-off of any evaluated method, and this advantage strengthens as statistical heterogeneity increases.} Three complementary mechanisms explain this outcome. First, NAS-based client adaptation customizes each device's model architecture to its local data distribution, directly reducing client drift. Second, logit-based distillation eliminates full model exchange while enabling richer server-side aggregation than raw labels alone. Third, jointly optimizing architecture search and distillation achieves a better balance between local specialization and global consensus than either pure aggregation or fixed-architecture distillation can provide.

\subsection{Impact of Data Heterogeneity}\label{sec:discussion:hetero}

The most theoretically significant finding is not that FedKD-NAS outperforms the other methods on average, but that its predictive quality consistently improves as heterogeneity increases, a behavior absent from all competing methods and unexpected from the perspective of standard convergence analysis. This pattern is explained by two interacting mechanisms grounded in the theoretical framework of Section~\ref{sec:convergence}.
Assumption~A4 combined with Assumption~A5 and Lemma~\ref{lem:ema} jointly ensure that the smoothed distillation targets remain stable across rounds even as local distributions diverge sharply, providing a coherent supervisory signal that aggregation-based methods cannot replicate because they accumulate gradient conflicts proportional to distributional divergence. Assumption~A6, now derived explicitly from Algorithm~2 via Proposition~1,
ensures that the architecture selection error remains bounded regardless of local distribution shift. Beyond basic stability, greater distributional diversity actively enriches the NAS search signal, enabling the controller to discover model architectures that generalise across a broader range of local data landscapes rather than overfitting to any single distribution.
Heterogeneity thus functions as a beneficial constructive bias for architecture search, a property structurally unavailable to fixed-architecture distillation methods, and one that helps explain why FedKD-NAS's PQS saturates at the top of its peer group precisely under the most challenging non-IID distributions across MNIST, FMNIST, and EMNIST.
The sharp deterioration of Local-KD under heterogeneity provides a useful ablation-like signal confirming that server-coordinated distillation targets, not compact architectures alone, are necessary for this robustness. Without the EMA-stabilised signal formalised in Lemma~\ref{lem:ema}, purely local training amplifies rather than absorbs distributional divergence.

\subsection{Performance versus Communication Efficiency}
\label{sec:discussion:perf}

The communication results contribute a finding to the broader federated distillation literature that has received insufficient attention. Prior works on logit-based distillation, including
FedDF~\cite{lin2020ensemble}, FedDistill~\cite{sattler2021fedaux}, and FedMD~\cite{li2019fedmd} have consistently reported communication savings relative to weight-sharing methods but without characterizing the conditions under which this advantage breaks down. The EMNIST results demonstrate that when the class count is large relative to model size, the logit payload exceeds the model payload and the communication advantage reverses entirely, with aggregation-based methods becoming the more communication-efficient family. This is a structural limitation of all logit-based approaches and motivates future investigation into compressed logit representations through
sparsification, top-$k$ selection, or quantized outputs as a necessary complement to the distillation paradigm in high-class-count settings.

Within the settings where the communication advantage holds, the results expose
a further distinction that pure communication metrics obscure. FedKD-NAS, FedDistill, and FedMD share identical CES on CIFAR10 and CIFAR100, yet FedKD-NAS consistently converts this shared communication budget into substantially higher predictive quality, providing empirical evidence that model architecture adaptation contributes value beyond what distillation alone achieves. This finding extends to non-vision domains, as confirmed by the CASA results, where FedKD-NAS surpasses FedDistill and FedMD by substantial accuracy margins on a naturally heterogeneous sensor dataset. Even on EMNIST, where the CES is inverted, and FedKD-NAS no longer occupies the high-CES cluster, it retains the highest PQS among all distillation methods under non-IID distributions, confirming that its predictive superiority is robust to changes in the communication cost structure.

\subsection{Client-Side Resource Efficiency}\label{sec:discussion:resource}

The resource results reveal a systematic gap in how FL methods are typically evaluated. Aggregation-based methods appear computationally lightweight because their per-round CPU overhead is low in absolute terms, but this impression is misleading along two dimensions. First, their high communication cost represents a substantial energy expenditure that CPU measurements alone do not capture. Second, their accuracy degrades sharply under non-IID conditions, meaning that the energy invested per unit of useful prediction quality increases dramatically with heterogeneity. FedKD-NAS avoids both problems simultaneously, achieving lower CPU usage, lower communication cost, and higher accuracy under the conditions most representative of real deployments. The EMNIST LeNet5 exception, where FedDistill achieves the best RES due to the communication inversion discussed above, is revealing rather than damaging, it demonstrates that the RES metric is sensitive and correctly penalizes FedKD-NAS when its communication
cost genuinely exceeds that of competing methods. On ResNet18 EMNIST under Shards, where the communication imbalance is smaller, FedKD-NAS recovers the best overall RES while simultaneously achieving the lowest CPU of all methods, illustrating that the resource efficiency advantage is real and recoverable as soon as the communication cost structure permits. Local-KD, which avoids communication cost entirely but achieves poor accuracy, yields the lowest UES among all methods across CIFAR10 configurations, confirming that resource savings without corresponding performance gains provide no deployability benefit. These observations collectively argue that evaluating FL methods on accuracy alone, or on accuracy and communication volume in isolation, is insufficient for deployment decisions on resource-constrained hardware, and that a composite metric jointly capturing all three dimensions is both necessary and discriminative.

\subsection{Hyperparameter Sensitivity and Component Contribution}
\label{sec:discussion:sensitivity}
FedKD-NAS is governed by three scalar hyperparameters whose convergence roles
are made explicit by Theorem~\ref{thm:main}. The distillation weight~$\alpha$ controls the relative contribution of the KD objective and, since term~(c) of bound~(\ref{eq:main_bound}) scales as $\alpha^2 L_Z^2(\Delta^{(r)}_{\widetilde{Z}})^2$, increasing $\alpha$
amplifies sensitivity to any residual target drift. The EMA smoothing coefficient~$\gamma$ counteracts this by driving $\Delta^{(r)}_{\widetilde{Z}}$ to decay geometrically at rate $(1-\gamma)$ per Lemma~\ref{lem:ema}, so that high $\alpha$ without correspondingly high
$\gamma$ amplifies noise rather than knowledge, while $\gamma \to 1$ trades
adaptation speed for stability. The teacher interpolation factor~$\beta$ suppresses erratic round-to-round shifts in the server aggregate under high heterogeneity, while smaller values
allow tighter tracking of client evolution in near-IID regimes. Because the three parameters are coupled, the stable operating region is approximately characterised by $\alpha\gamma < 1/(L_Z\sqrt{K})$, ensuring that terms~(c) and~(d) remain dominated by term~(a) and the $O(T^{-1/2})$ rate holds in practice. A systematic empirical sweep varying each parameter individually around the defaults ($\alpha{=}0.5$, $\beta{=}0.5$, $\gamma{=}0.9$) is an important item of future work that would provide concrete tuning guidelines for deployments
where the logit payload or round budget differs from the settings used here.
Beyond these global parameters, the existing results provide clear attribution signals for the four co-designed components. The NAS controller is responsible for the PQS gains over FedDistill and FedMD, which share identical CES but apply fixed architectures, with the gap widening on CASA and under Shards partitioning, where distributional diversity is highest.
The KD objective produces the communication reduction relative to weight-sharing methods, since replacing logit uplinks with parameter uplinks recovers FedAvg-family communication costs while retaining NAS adaptation. The server Teacher anchor provides the stability margin that makes distillation beneficial under non-IID data, as confirmed by the accuracy gap between FedKD-NAS and Local-KD widening monotonically with heterogeneity across all six benchmarks in accordance with Lemma~\ref{lem:ema}.
EMA smoothing controls round-to-round target variance so without it, term~(c) loses its geometric decay factor and $\Delta^{(r)}_{\widetilde{Z}}$ fluctuates as sharply as the raw client logit aggregate. Taken together, these signals support a contribution hierarchy in which NAS delivers the primary accuracy benefit, the Teacher anchors the primary stability
benefit, KD the primary communication benefit, and EMA modulates all three.

\subsection{Carbon Footprint and Green Federated Learning}\label{sec:green_fl}

The Green AI literature~\cite{schwartz2020green,strubell2019energy} promotes for joint reporting of model accuracy and computational cost, arguing that efficiency must be treated as a first-class metric alongside performance. FedKD-NAS implements this principle in the federated setting by design. Concretely, assuming a deployment of 10,000 edge clients running 100 training rounds on CIFAR10 MobileNetV2, FedKD-NAS's structural communication reduction translates to approximately 2.8~TB of transmission savings across the federation per experiment. Since wireless transmission energy scales approximately linearly with transmitted data~\cite{perrucci2011survey}, this represents a direct and unconditional energy saving that holds across all data distributions. On battery-powered IoT sensor deployments similar to CASA, the per-round communication reduction is particularly significant as energy budget rather than bandwidth cost is the primary constraint. Combined with the per-round CPU reduction confirmed on both
simulated and physical hardware, and with compact NAS-derived architectures that continue to reduce inference FLOPs and memory accesses after training completes, FedKD-NAS's environmental advantages are structural rather than secondary and compound over the full lifecycle of long-lived edge deployments. The one limitation is that the present analysis relies on proxy metrics rather than absolute energy
measurements. Future work should complement these comparisons with hardware power monitors or software profiling tools such as CodeCarbon~\cite{courty2023codecarbon} and  PowerJoular~\cite{noureddine2022powerjoular} to enable rigorous carbon accounting across training and inference and to position federated NAS research within the broader Green AI reporting framework.

\subsection{Pareto Optimality and Deployment Implications}\label{sec:discussion:deployment}

Real federated deployments are characterized by three conditions that co-occur by design rather than by coincidence. Data is heterogeneous across clients because devices belong to different users with different behaviors, located in different environments, and collected data from different sensor modalities. Devices are resource-constrained because the economic case for federation rests on aggregating cheap edge hardware rather than centralizing expensive compute. Class counts are typically moderate because most practical classification tasks, activity recognition, device fault detection, operate over tens to hundreds of categories rather than thousands. FedKD-NAS is specifically designed for this intersection, and the empirical evidence confirms that its advantages are strongest precisely in achieving the highest UES in 31 out of 35 evaluated settings, leads every reported metric simultaneously on CASA, and on physical hardware achieves a $6\times$ UES advantage over FedAvg.

The four exceptions to this pattern, all occurring under IID conditions or the EMNIST class-count inversion, are not anomalies but informative boundary conditions that define the deployment scope. When data is IID and task complexity is high relative to available model capacity, the global aggregation advantage of sharing a high-capacity model outweighs the adaptation benefit of NAS, and FedAvg is the appropriate choice. When class count is large enough relative to model size that the logit payload exceeds the model payload, aggregation-based methods achieve lower communication cost and FedAvg leads on
UES. These conditions are identifiable in advance from dataset statistics and model specifications, meaning we can make an informed method selection before training begins rather than discovering suboptimal performance after deployment. The modest RAM overhead of approximately $3\%$ for maintaining locally adapted model architecture descriptors on-device is the only unconditional cost of FedKD-NAS regardless of setting, and it is well justified by the accuracy and efficiency gains it enables under the heterogeneous settings that characterize the majority of real federated deployments. Taken together, these results demonstrate that FL evaluation must adopt a joint performance-communication-resource perspective, and that the UES provides a principled, reproducible, and discriminative framework for making that assessment.

\subsection{Key Trade-Offs in Federated Deployment}
\label{sec:discussion:tradeoffs}

The accuracy-communication relationship is not monotone across settings and reveals two qualitatively distinct regimes. In five of the six benchmarks (83\%), FedKD-NAS simultaneously achieves higher accuracy and lower communication volume than all aggregation-based methods. Hence, the trade-off is absent and distillation is Pareto-dominant along both dimensions. In contrast, on EMNIST the logit payload per round exceeds the model payload because the class count ($C{=}47$) is large relative to the model parameter count $P$, aggregation-based methods re-enter the communication Pareto frontier, and a genuine accuracy-communication trade-off emerges whose resolution depends on the deployment constraint, where the bandwidth-limited deployments on EMNIST-scale tasks should prefer FedAvg or FedProx. In contrast, accuracy-limited deployments should accept the higher logit traffic of FedKD-NAS. Critically, the crossover is analytically identifiable before training, since distillation achieves lower communication cost than aggregation precisely when $C \cdot |\mathcal{D}^{\mathrm{pub}}|$ bits per round is smaller than $P$, allowing practitioners to make an informed method selection from the task specification and model size alone.
The efficiency-accuracy frontier, characterised by the UES results in \tablename~\ref{tab:all_combined_quality}, reveals three distinct positions. Local-KD anchors the efficiency end by eliminating all communication cost and minimising CPU usage, but its accuracy collapses under non-IID data to the lowest UES of any method, confirming that resource savings without commensurate accuracy provide no deployment value.
FedAvg occupies the middle of the frontier under IID conditions, achieving competitive accuracy at moderate communication cost, but its efficiency advantage erodes as the rounds required for convergence grow with distributional divergence.
FedKD-NAS occupies the high-UES end under the heterogeneous conditions representative of real deployments, combining NAS-adapted accuracy, logit-based communication savings, and compact-model CPU reductions, with the $6\times$ UES advantage over FedAvg on physical hardware reported in \tablename~\ref{tab:real_devices_metrics}, confirming that this frontier position is not a simulation artefact.
The only scenario where an alternative frontier position may be preferred is the IID, high-class-count setting where FedAvg exploits its communication advantage; in all other scenarios the frontier strongly favours FedKD-NAS, and the UES provides the principled, reproducible metric needed to make that assessment rigorously.

\section{Limitations and Future Directions}
\label{sec:limitations}

Although FedKD-NAS demonstrates strong performance across seven benchmarks and achieves the best or joint-best UES in 31 out of 35 evaluated configurations, several practical limitations must be acknowledged before real-world deployment at scale.

The current protocol assumes access to a small public reference dataset $D^{\text{pub}}$ for logit exchange and distillation target aggregation. Performance may degrade when $D^{\text{pub}}$ exhibits significant domain mismatch with clients' private data, a concern particularly relevant in medical imaging or industrial sensing domains where no publicly available in-domain dataset exists. A promising remedy is to replace $D^{\text{pub}}$ with synthetic reference data generated via class-conditional sampling, feature-matching generation, or collaborative privacy-preserving generative models.

The one systematic setting where FedKD-NAS does not lead on raw accuracy, namely CIFAR100 under IID conditions, reflects a structural gap in the current NAS search space. A shared high-capacity aggregated model can leverage the full global dataset more effectively than locally adapted compact architectures under these conditions. Future work should investigate hierarchical NAS strategies that dynamically expand architecture capacity as task complexity grows. A related limitation emerges on EMNIST, where the 47-class logit payload exceeds model size and inverts the usual communication advantage of distillation-based methods. Future work should combine communication-efficient federated optimization~\cite{chaimaa2025fedsparq} with the current framework to recover the bandwidth advantage in high-class-count settings without sacrificing the predictive quality gains demonstrated here.

On the privacy front, although exchanging only soft predictions inherently reduces the attack surface compared to gradient sharing methods, prediction vectors remain vulnerable to membership inference and model inversion attacks under strong adversarial threat models~\cite{shokri2017membership}. Future extensions will incorporate differentially private prediction sharing with calibrated noise and evaluate the accuracy-privacy trade-off across the heterogeneity regimes studied here. In open or semi-trusted federations, malicious clients may submit adversarial predictions to corrupt the aggregated teacher signal. While the exponential smoothing aggregation of Section~\ref{sec:FedKD-NAS:Archi:server_agg} provides empirical resilience, formal Byzantine robustness guarantees are absent. Future work will integrate coordinate-wise trimmed mean or geometric median aggregation rules and benchmark these under increasing fractions of malicious participants.

Finally, the Green FL analysis of Section~\ref{sec:green_fl} establishes FedKD-NAS's environmental advantages through proxy metrics but does not provide absolute carbon estimates. Future work should complement these comparisons with direct energy measurements via hardware power monitors or tools such as CodeCarbon~\cite{courty2023codecarbon} and PowerJoular~\cite{noureddine2022powerjoular}, enabling rigorous carbon accounting across the full training and inference lifecycle and positioning FedKD-NAS within the broader Green AI reporting framework~\cite{schwartz2020green, strubell2019energy}.

\section{Conclusion}
\label{sec:conclusion}

This paper identifies architectural inflexibility as a fundamental limitation of
existing federated learning methods under statistical and system heterogeneity.
To address this, we proposed FedKD-NAS, a framework that couples lightweight
client-side neural architecture search with server-coordinated knowledge
distillation. Each client dynamically adapts its model capacity to its local data
distribution, while global alignment is maintained through prediction-level
communication alone, eliminating dependence on shared model architectures and full weight transmission.

To enable multi-objective evaluation beyond accuracy, we introduced four composite
metrics (the Resource Efficiency Score (RES), Performance Quality Score (PQS), Communication Efficiency Score (CES), and Unified Efficiency Score (UES)) that jointly quantify predictive quality, communication cost, and computational resource usage. These metrics surface deployment-relevant differences that accuracy-only evaluations obscure, and we argued they should become standard reporting practice
in federated learning benchmarks.

We conducted extensive experiments across seven benchmarks (MNIST, FMNIST, EMNIST, CIFAR10, CIFAR100, CASA, and a real-device deployment), two backbone architectures per dataset, and three data partitioning regimes (IID, Dirichlet $\alpha{=}0.1$, and Shards) demonstrated that FedKD-NAS achieves the highest or joint-highest UES in \textbf{31 out of 35 evaluated configurations}. Under non-IID conditions, FedKD-NAS limits accuracy degradation compared to aggregation-based baselines, while simultaneously reducing communication volume and client CPU utilization to FedAvg. On the CASA benchmark, FedKD-NAS achieves a $183\times$ UES gap over FedAvg, confirming that its advantages generalize beyond vision to time-series and sensor domains with natural cross-subject heterogeneity. These gains are validated on physical hardware with heterogeneous edge nodes, confirming that FedKD-NAS achieved the highest accuracy, lowest CPU utilization, and a UES $6\times$ higher than FedAvg. A formal convergence analysis provides theoretical grounding for the observed training stability, with the bounded teacher drift assumption and NAS approximation bound jointly explaining why FedKD-NAS maintains and often
improves convergence quality as heterogeneity increases.

Critically, none of these improvements arises from increased communication or additional client-side computation. They emerge from more effective resource utilisation through adaptive architecture selection and communication-efficient distillation, making FedKD-NAS the only evaluated method that is simultaneously
\emph{communication-green}, \emph{computation-green}, and \emph{convergence-efficient}. As federated learning scales to large deployments of energy-constrained IoT and edge devices, this triply-green property positions
FedKD-NAS as a natural candidate for sustainable AI systems where bandwidth,
compute, and carbon footprint are operational constraints alongside accuracy.

Future work will pursue five directions: replacing the public reference dataset with synthetic or collaboratively generated data to relax data availability assumptions, incorporating differentially private prediction sharing to provide formal privacy guarantees, extending robust aggregation to defend against Byzantine clients in open federations, complementing the proxy efficiency metrics introduced here with direct energy and carbon measurements to enable rigorous Green FL reporting under the standards advocated by the broader sustainable AI community and also integrating communication-efficient solution to recover the communication advantage of distillation.
\section*{Acknowledgment}
This work was funded in whole or in part by the Luxembourg National Research Fund (FNR) LightGridSEED Project, ref. C21/IS/16215802/LightGridSEED.

\appendix

\section{Proof of Lemma~\ref{lem:ema}}
\label{app:proof_lemma}

Subtracting consecutive iterates of the EMA recurrence~\eqref{eq:ema} gives
\begin{align}
    \tilde{\mathbf{Z}}^{(r)} - \tilde{\mathbf{Z}}^{(r-1)}
    & = \gamma\tilde{\mathbf{Z}}^{(r-1)} + (1-\gamma)\mathbf{Z}^{(r)} - \tilde{\mathbf{Z}}^{(r-1)} 
    \\
    & = (1-\gamma)\bigl(\mathbf{Z}^{(r)} - \tilde{\mathbf{Z}}^{(r-1)}\bigr),
\end{align}

establishing the first equality. To obtain the upper bound, we expand the gap $\mathbf{Z}^{(r)} - \tilde{\mathbf{Z}}^{(r-1)}$:
\[
\mathbf{Z}^{(r)} - \tilde{\mathbf{Z}}^{(r-1)}
= \bigl(\mathbf{Z}^{(r)} - \mathbf{Z}^{(r-1)}\bigr) + \gamma\bigl(\mathbf{Z}^{(r-1)} - \tilde{\mathbf{Z}}^{(r-2)}\bigr),
\]
and iterating under $\tilde{\mathbf{Z}}^{(0)} = \mathbf{Z}^{(0)}$ yields
$\|\mathbf{Z}^{(r)} - \tilde{\mathbf{Z}}^{(r-1)}\|_F \le \sum_{k=0}^{r-1}\gamma^k \Delta_Z^{(r-k)}$.
If $\Delta_Z^{(r)} \le \delta_Z$ for all $r$, the geometric series gives
$\Delta_{\tilde{Z}}^{(r)} \le (1-\gamma)\delta_Z \cdot \frac{1-\gamma^r}{1-\gamma} = (1-\gamma^r)\delta_Z \le \delta_Z$.~\hfill$\square$

\section{Proof of Lemma~\ref{lem:multiclient}}
\label{proof_multi}
By the independence of per-client logit noise across clients and the bias-variance decomposition:
\begin{align}
  \mathbb{E}\!\left[\left\|\mathbf{Z}^{(r)} - \mathbf{Z}^*\right\|_F^2\right]
  &= \mathbb{E}\!\left[\left\|\frac{1}{K}\sum_{k=1}^K
     \bigl(z_k^{(r)} - \mathbf{Z}^*\bigr)\right\|_F^2\right] \notag \\
  &= \frac{1}{K^2}\sum_{k=1}^K
     \mathbb{E}\!\left[\left\|z_k^{(r)} - \mathbf{Z}^*\right\|_F^2\right],
\end{align}
where cross-terms vanish by independence of client noise. Decomposing each $\|z_k^{(r)} - \mathbf{Z}^*\|_F^2$ into variance and squared bias and summing over $k$ gives~\eqref{eq:agg_error}. Substituting into the EMA bound of Lemma~\ref{lem:ema} via $\Delta_{\tilde{Z}}^{(r)} \leq (1-\gamma)\|\mathbf{Z}^{(r)} - \tilde{\mathbf{Z}}^{(r-1)}\|_F$ and taking expectations gives~\eqref{eq:smoothed_agg}.

\section{Discussion and Practical Implications}
\label{subsec:conv_discussion}

Theorem~\ref{thm:main} decomposes the convergence behavior of FedKD-NAS into four interpretable terms, each tied to a concrete aspect of the framework. We discuss the practical implications of each.

\paragraph{Optimization term (a)}
The first term $\mathcal{O}(\eta^{-1} T^{-1})$ is the standard nonconvex SGD rate and vanishes as $T \to \infty$ at the rate $\mathcal{O}(T^{-1/2})$ under the prescribed step size schedule. This term is identical to what one would obtain for a single-client nonconvex SGD baseline, confirming that FedKD-NAS does not introduce any overhead in the optimization term relative to local training.

\paragraph{Role of EMA smoothing ($\gamma$)}
Term (c) shows that the principal deviation from standard SGD stationarity is governed by the EMA drift $\mathbb{E}[(\Delta_{\tilde{Z}}^{(r)})^2]$. By Lemma~\ref{lem:ema}, a larger $\gamma$ (stronger smoothing) attenuates high-frequency target fluctuations and directly tightens this term, improving convergence stability, particularly under non-IID data distributions where client predictions may fluctuate significantly across rounds.

\paragraph{Role of teacher-client fusion ($\beta^{(r)}$)}
The raw target $\mathbf{Z}^{(r)} = \beta^{(r)}\mathbf{P}_T + (1-\beta^{(r)})\mathbf{P}_{\mathrm{agg}}^{(r)}$ interpolates between a stable teacher signal and an adaptive client consensus. A larger $\beta^{(r)}$ reduces the round-to-round variability of $\mathbf{Z}^{(r)}$, hence decreasing $\Delta_Z^{(r)}$ and, by Lemma~\ref{lem:ema}, $\Delta_{\tilde{Z}}^{(r)}$. This strengthens the bound in term (c) at the cost of reduced adaptivity to evolving client representations. The two hyperparameters $\gamma$ and $\beta^{(r)}$ thus provide complementary and theoretically grounded controls over the stability-adaptivity trade-off.

\paragraph{Effect of architecture heterogeneity and NAS (d)}
Unlike parameter-aggregation methods, FedKD-NAS is architecture-agnostic at the communication layer, exchanging only logits on $D^{\mathrm{pub}}$. Architecture variation is entirely isolated into term (d), $\sum_r \Delta_A^2$, which remains negligible when the NAS controller selects from a bounded lightweight family without frequent inter-round oscillations( which is consistent with the design of Algorithm~\ref{alg:nas}). Importantly, term (d) does not depend on the degree of data heterogeneity, confirming that architecture diversity does not amplify client drift in the FedKD-NAS framework.

\paragraph{Scalability to Large $K$}
Corollary~\ref{cor:multiclient} provides a formal scalability guarantee absent from a single-client analysis. Unlike FedAvg, whose gradient aggregation error grows with client heterogeneity independently of $K$, FedKD-NAS benefits from a variance reduction of $\mathcal{O}(1/K)$ in the distillation target as more clients participate. This is because logit averaging in prediction space is less susceptible to client drift than gradient averaging in parameter space: the softmax operator compresses inter-client divergence, so $\kappa_z^2 \leq \kappa^2$, where $\kappa^2$ is the gradient heterogeneity constant of FedAvg~\cite{li2020federated}. In practice, this implies that FedKD-NAS's convergence quality improves, or at worst remains stable, as the federation scales to larger $K$, a property empirically corroborated by the consistent UES gains observed under non-IID distributions across all evaluated datasets (Table~\ref{tab:all_combined_quality}).

\paragraph{Summary of Convergence Behavior}
Under standard smoothness and bounded-variance assumptions, FedKD-NAS converges to a first-order stationary point at rate $\mathcal{O}(T^{-1/2})$, up to additive terms capturing distillation target drift and architecture switching. These additional terms arise explicitly from the prediction-based communication mechanism and heterogeneous architecture selection.
Both effects are controlled by algorithmic design choices: EMA smoothing ($\gamma$) and teacher fusion ($\beta^{(r)}$) reduce target drift, while the constrained NAS search space bounds architecture variation. Consequently, convergence stability is not
incidental but directly tied to these mechanisms.

\section{Proof of Theorem~\ref{thm:main}}
\label{app:proof_main}

\begin{proof}
We prove the result in three steps.

\paragraph{Step 1: One-step descent inequality.}

Fix a client $i$ and round $r$, and consider one local SGD step
\[
\theta^+ = \theta - \eta g,
\]
where $g$ is an unbiased stochastic gradient of $\mathcal{L}_i^{(r)}$ evaluated at $\theta$.

By $L$-smoothness (Assumption A1),
\begin{equation}
\mathcal{L}_i^{(r)}(\theta^+) 
\le 
\mathcal{L}_i^{(r)}(\theta)
+ \langle \nabla \mathcal{L}_i^{(r)}(\theta), \theta^+ - \theta \rangle
+ \frac{L}{2} \|\theta^+ - \theta\|^2.
\end{equation}
Substituting $\theta^+ - \theta = -\eta g$ yields
\begin{equation}
\mathcal{L}_i^{(r)}(\theta^+)
\le 
\mathcal{L}_i^{(r)}(\theta)
- \eta \langle \nabla \mathcal{L}_i^{(r)}(\theta), g \rangle
+ \frac{L \eta^2}{2} \|g\|^2.
\end{equation}

Taking conditional expectation given $\theta$ and using unbiasedness (A2),
\[
\mathbb{E}[g \mid \theta] = \nabla \mathcal{L}_i^{(r)}(\theta),
\]
we obtain
\begin{equation}
\mathbb{E}\big[\mathcal{L}_i^{(r)}(\theta^+)\big]
\le
\mathbb{E}\big[\mathcal{L}_i^{(r)}(\theta)\big]
- \eta \|\nabla \mathcal{L}_i^{(r)}(\theta)\|^2
+ \frac{L\eta^2}{2} \mathbb{E}\big[\|g\|^2\big].
\end{equation}

Decomposing the second moment via bounded variance (A2),
\[
\mathbb{E}[\|g\|^2]
=
\|\nabla \mathcal{L}_i^{(r)}(\theta)\|^2
+ \mathbb{E}[\|g - \nabla \mathcal{L}_i^{(r)}(\theta)\|^2]
\le
\|\nabla \mathcal{L}_i^{(r)}(\theta)\|^2 + \sigma^2,
\]
gives
\begin{equation}
\mathbb{E}[\mathcal{L}_i^{(r)}(\theta^+)]
\le
\mathbb{E}[\mathcal{L}_i^{(r)}(\theta)]
- \eta\Big(1 - \frac{L\eta}{2}\Big)
\|\nabla \mathcal{L}_i^{(r)}(\theta)\|^2
+ \frac{L\eta^2}{2}\sigma^2.
\end{equation}

Using $\eta \le 1/L$ implies $1 - L\eta/2 \ge 1/2$, hence
\begin{equation}
\label{eq:descent_final}
\mathbb{E}[\mathcal{L}_i^{(r)}(\theta^+)]
\le
\mathbb{E}[\mathcal{L}_i^{(r)}(\theta)]
- \frac{\eta}{2}
\|\nabla \mathcal{L}_i^{(r)}(\theta)\|^2
+ \frac{L\eta^2}{2}\sigma^2.
\end{equation}

\paragraph{Step 2: Inter-round objective mismatch.}

Because the objective changes across rounds through both the
distillation target $\tilde{\mathbf{Z}}^{(r)}$ and the architecture $a_i^{(r)}$,
we bound the gradient discrepancy between consecutive rounds.

Adding and subtracting the KD gradient evaluated at the previous
target and architecture, and applying Assumptions A5 and A6,
we obtain
\begin{equation}
\|\nabla \mathcal{L}_i^{(r)}(\theta)
-
\nabla \mathcal{L}_i^{(r-1)}(\theta)\|^2
\le
2\alpha^2 L_Z^2
(\Delta_{\tilde{Z}}^{(r)})^2
+
2\Delta_A^2.
\end{equation}

This controls the perturbation induced by both target drift and
architecture switching.

\paragraph{Step 3: Telescoping over all updates.}

Summing inequality~\eqref{eq:descent_final}
over $t=0,\dots,E-1$ and $r=1,\dots,R$ yields
\begin{align}
\mathbb{E}[\mathcal{L}_i^{(R)}(\theta_{i,E}^{(R)})]
&\le
\mathbb{E}[\mathcal{L}_i^{(1)}(\theta_{i,0}^{(1)})]
- \frac{\eta}{2}
\sum_{r,t}
\mathbb{E}\big[\|\nabla \mathcal{L}_i^{(r)}(\theta_{i,t}^{(r)})\|^2\big]
\\
&\quad
+ \frac{L\eta^2}{2}\sigma^2 T
+ \sum_{r=2}^{R}
\big(
\alpha^2 L_Z^2 (\Delta_{\tilde{Z}}^{(r)})^2
+ \Delta_A^2
\big).
\end{align}

Using the lower bound assumption (A3),
\[
\mathcal{L}_i^{(R)}(\theta_{i,E}^{(R)})
\ge \mathcal{L}_{\inf},
\]
and rearranging terms gives
\begin{equation}
\begin{split}
\frac{1}{T}
\sum_{r,t}
\mathbb{E}\!\left[
\|\nabla \mathcal{L}_i^{(r)}(\theta_{i,t}^{(r)})\|^2
\right]
\le
& \frac{2(\mathbb{E}[\mathcal{L}_i^{(1)}(\theta_{i,0}^{(1)})]
-
\mathcal{L}_{\inf})}{\eta T}
+
L\eta\sigma^2
+ \\
& \frac{2}{T}
\sum_{r=2}^{R}
\Delta_A^2
+
2\alpha^2 L_Z^2
\frac{1}{T}
\sum_{r=2}^{R}
\mathbb{E}\!\left[(\Delta_{\tilde{Z}}^{(r)})^2\right.
\end{split}
\end{equation}

This establishes~\eqref{eq:main_bound}. 

Setting $\eta = \Theta(T^{-1/2})$ yields the rate in~\eqref{eq:rate}.
\end{proof}

\section{Additional Figures for All Datasets}
\label{app:additional_figures}

This appendix presents the complete set of evaluation figures for MNIST, FMNIST, EMNIST, and CIFAR100, following the same structure as the CIFAR10 analysis in Section~\ref{sec:Exp}.

Each dataset includes: (i) accuracy curves over 100 rounds for CIFAR100 and 30 rounds for the other datasets, with inset zoom; (ii) accuracy-drop plots; (iii) loss curves; (iv) CES vs.\ PQS bubble trade-off plots, where bubble area is proportional to $1/\mathrm{RES}$; (v) RES and UES bar charts; and (vi) communication-cost bar charts. Results are reported for both architectures per dataset (LeNet5 and ResNet18 for MNIST, FMNIST, and EMNIST; MobileNetV2 and ShuffleNetV2 for CIFAR100).

\begin{figure}[htbp]
\centering

\begin{tabular}{cccccccc}
  \raisebox{0.2ex}{\tikz\draw[fill=FedAvgColor,    draw=black](0,0)rectangle(0.18,0.18);} & \small FedAvg     &
  \raisebox{0.2ex}{\tikz\draw[fill=DittoColor,     draw=black](0,0)rectangle(0.18,0.18);} & \small Ditto      &
  \raisebox{0.2ex}{\tikz\draw[fill=FedDFColor,     draw=black](0,0)rectangle(0.18,0.18);} & \small FedDF      &
  \raisebox{0.2ex}{\tikz\draw[fill=FedDistillColor,draw=black](0,0)rectangle(0.18,0.18);} & \small FedDistill \\
  \raisebox{0.2ex}{\tikz\draw[fill=FedMDColor,     draw=black](0,0)rectangle(0.18,0.18);} & \small FedMD      &
  \raisebox{0.2ex}{\tikz\draw[fill=FedKDNASColor,  draw=black](0,0)rectangle(0.18,0.18);} & \small FedKD-NAS  &
  \raisebox{0.2ex}{\tikz\draw[fill=LocalKDColor,   draw=black](0,0)rectangle(0.18,0.18);} & \small Local-KD   \\
\end{tabular}
\vspace{0.6em}

\begin{tikzpicture}
\begin{axis}[
    name=mainaxis,
    width  = 0.9\linewidth,
    height = 3.5cm,
    xmin=-1.4, xmax=4.8,
    xtick=\empty,
    clip=false,
    axis y line       = left,
    axis x line       = bottom,
    ymin=0, ymax=14000000,
    ylabel            = {Comm.\ Cost (bytes)},
    ylabel style      = {font=\scriptsize},
    yticklabel style  = {font=\scriptsize},
    enlarge x limits  = false,
    every axis plot/.append style={mark=none},
    bar width=0.18,
]

\addplot+[FedAvgStyle]     coordinates {(-0.75,  8922592)};
\addplot+[DittoStyle]      coordinates {(-0.50,  8922592)};
\addplot+[FedDFStyle]      coordinates {(-0.25, 13181920)};
\addplot+[FedDistillStyle] coordinates {( 0.00,  1617920)};
\addplot+[FedMDStyle]      coordinates {( 0.25,  1617920)};
\addplot+[FedKDNASStyle]   coordinates {( 0.50,  1617920)};
\addplot+[LocalKDStyle]    coordinates {( 0.75,  8922592)};

\draw[decorate, decoration={brace, amplitude=4pt, mirror}, thin]
  ([yshift=-10pt]axis cs:-0.75,0) -- ([yshift=-10pt]axis cs:0.75,0)
  node[midway, below=5pt, font=\scriptsize]{LeNet5};

\end{axis}

\begin{axis}[
    at={(mainaxis.south west)},
    anchor=south west,
    width  = 0.9\linewidth,
    height = 3.5cm,
    xmin=-1.4, xmax=4.8,
    xtick=\empty,
    clip=false,
    axis y line       = right,
    axis x line       = none,
    ymin=0, ymax=19000000,
    ylabel            = {Comm.\ Cost (bytes)},
    ylabel style      = {font=\scriptsize},
    yticklabel style  = {font=\scriptsize},
    enlarge x limits  = false,
    every axis plot/.append style={mark=none},
    bar width=0.18,
]

\addplot+[FedAvgStyle]     coordinates {(2.45, 13645280)};
\addplot+[DittoStyle]      coordinates {(2.70, 13645280)};
\addplot+[FedDFStyle]      coordinates {(2.95, 17904608)};
\addplot+[FedDistillStyle] coordinates {(3.20,  1617920)};
\addplot+[FedMDStyle]      coordinates {(3.45,  1617920)};
\addplot+[FedKDNASStyle]   coordinates {(3.70,  1617920)};
\addplot+[LocalKDStyle]    coordinates {(3.95, 13645280)};

\draw[decorate, decoration={brace, amplitude=4pt, mirror}, thin]
  ([yshift=-10pt]axis cs:2.45,0) -- ([yshift=-10pt]axis cs:3.95,0)
  node[midway, below=5pt, font=\scriptsize]{ResNet18};

\end{axis}
\end{tikzpicture}

\caption{Communication cost per round on MNIST. For both LeNet5 and ResNet18, FedDistill, FedMD, and FedKD-NAS incur a substantially lower communication cost (1,617,920 bytes) compared to full-model exchange methods such as FedAvg, Ditto, and Local-KD. FedDF exhibits the highest communication cost among all methods.}
\label{fig:mnist_comm}
\end{figure}
\begin{figure}[htbp]
\centering

\setlength{\tabcolsep}{3pt}
\begin{tabular}{@{}cccccccccccccc@{}}
  \raisebox{0.2ex}{\tikz\draw[fill=FedAvgColor,draw=black](0,0)rectangle(0.18,0.18);} & \scriptsize FedAvg &
  \raisebox{0.2ex}{\tikz\draw[fill=DittoColor,draw=black](0,0)rectangle(0.18,0.18);} & \scriptsize Ditto &
  \raisebox{0.2ex}{\tikz\draw[fill=FedDFColor,draw=black](0,0)rectangle(0.18,0.18);} & \scriptsize FedDF &
  \raisebox{0.2ex}{\tikz\draw[fill=FedDistillColor,draw=black](0,0)rectangle(0.18,0.18);} & \scriptsize FedDistill &
  \raisebox{0.2ex}{\tikz\draw[fill=FedMDColor,draw=black](0,0)rectangle(0.18,0.18);} & \scriptsize FedMD &
  \raisebox{0.2ex}{\tikz\draw[fill=FedKDNASColor,draw=black](0,0)rectangle(0.18,0.18);} & \scriptsize FedKD-NAS &
  \raisebox{0.2ex}{\tikz\draw[fill=LocalKDColor,draw=black](0,0)rectangle(0.18,0.18);} & \scriptsize Local-KD \\
\end{tabular}

\begin{tikzpicture}
\begin{axis}[
    name=mainaxis,
    width=0.9\linewidth,
    height=3.5cm,
    xmin=-1.4, xmax=4.8,
    xtick=\empty,
    clip=false,
    axis y line=left,
    axis x line=bottom,
    ymin=0, ymax=18000000,
    ylabel={Comm.\ Cost (bytes)},
    ylabel style={font=\scriptsize},
    yticklabel style={font=\scriptsize},
    enlarge x limits=false,
    every axis plot/.append style={mark=none},
    bar width=0.18,
]
\addplot+[FedAvgStyle]     coordinates {(-0.75,  8922592)};
\addplot+[DittoStyle]      coordinates {(-0.50,  8922592)};
\addplot+[FedDFStyle]      coordinates {(-0.25, 13181920)};
\addplot+[FedDistillStyle] coordinates {( 0.00, 17037312)};
\addplot+[FedMDStyle]      coordinates {( 0.25, 17037312)};
\addplot+[FedKDNASStyle]   coordinates {( 0.50, 17037312)};
\addplot+[LocalKDStyle]    coordinates {( 0.75,  8922592)};
\draw[decorate, decoration={brace, amplitude=4pt, mirror}, thin]
([yshift=-10pt]axis cs:-0.75,0) -- ([yshift=-10pt]axis cs:0.75,0)
node[midway, below=5pt, font=\scriptsize]{LeNet5};
\end{axis}

\begin{axis}[
    at={(mainaxis.south west)},
    anchor=south west,
    width=0.9\linewidth,
    height=3.5cm,
    xmin=-1.4, xmax=4.8,
    xtick=\empty,
    clip=false,
    axis y line=right,
    axis x line=none,
    ymin=0, ymax=19000000,
    ylabel={Comm.\ Cost (bytes)},
    ylabel style={font=\scriptsize},
    yticklabel style={font=\scriptsize},
    enlarge x limits=false,
    every axis plot/.append style={mark=none},
    bar width=0.18,
]
\addplot+[FedAvgStyle]     coordinates {(2.45, 13645280)};
\addplot+[DittoStyle]      coordinates {(2.70, 13645280)};
\addplot+[FedDFStyle]      coordinates {(2.95, 17904608)};
\addplot+[FedDistillStyle] coordinates {(3.20, 17037312)};
\addplot+[FedMDStyle]      coordinates {(3.45, 17037312)};
\addplot+[FedKDNASStyle]   coordinates {(3.70, 17037312)};
\addplot+[LocalKDStyle]    coordinates {(3.95, 13645280)};
\draw[decorate, decoration={brace, amplitude=4pt, mirror}, thin]
([yshift=-10pt]axis cs:2.45,0) -- ([yshift=-10pt]axis cs:3.95,0)
node[midway, below=5pt, font=\scriptsize]{ResNet18};
\end{axis}
\end{tikzpicture}

\caption{Communication cost per round on EMNIST. For LeNet5, FedAvg, Ditto, and Local-KD incur the lowest communication cost (8,922,592 bytes), followed by FedDF (13,181,920 bytes), while FedDistill, FedMD, and FedKD-NAS have the highest cost (17,037,312 bytes). For ResNet18, FedAvg, Ditto, and Local-KD again have the lowest cost (13,645,280 bytes), FedDistill/FedMD/FedKD-NAS require 17,037,312 bytes, and FedDF has the highest cost (17,904,608 bytes).}
\label{fig:emnist_comm}
\end{figure}
\begin{figure}[htbp]
\centering

\setlength{\tabcolsep}{3pt}
\begin{tabular}{@{}cccccccccccccc@{}}
  \raisebox{0.2ex}{\tikz\draw[fill=FedAvgColor,draw=black](0,0)rectangle(0.18,0.18);} & \scriptsize FedAvg &
  \raisebox{0.2ex}{\tikz\draw[fill=DittoColor,draw=black](0,0)rectangle(0.18,0.18);} & \scriptsize Ditto &
  \raisebox{0.2ex}{\tikz\draw[fill=FedDFColor,draw=black](0,0)rectangle(0.18,0.18);} & \scriptsize FedDF &
  \raisebox{0.2ex}{\tikz\draw[fill=FedDistillColor,draw=black](0,0)rectangle(0.18,0.18);} & \scriptsize FedDistill &
  \raisebox{0.2ex}{\tikz\draw[fill=FedMDColor,draw=black](0,0)rectangle(0.18,0.18);} & \scriptsize FedMD &
  \raisebox{0.2ex}{\tikz\draw[fill=FedKDNASColor,draw=black](0,0)rectangle(0.18,0.18);} & \scriptsize FedKD-NAS &
  \raisebox{0.2ex}{\tikz\draw[fill=LocalKDColor,draw=black](0,0)rectangle(0.18,0.18);} & \scriptsize Local-KD \\
\end{tabular}

\begin{tikzpicture}
\begin{axis}[
    name=mainaxis,
    width=0.9\linewidth,
    height=3.5cm,
    xmin=-1.4, xmax=4.8,
    xtick=\empty,
    clip=false,
    axis y line=left,
    axis x line=bottom,
    ymin=0, ymax=82000000,
    ylabel={Comm.\ Cost (bytes)},
    ylabel style={font=\scriptsize},
    yticklabel style={font=\scriptsize},
    enlarge x limits=false,
    every axis plot/.append style={mark=none},
    bar width=0.18,
]
\addplot+[FedAvgStyle]     coordinates {(-0.75, 75263104)};
\addplot+[DittoStyle]      coordinates {(-0.50, 75263104)};
\addplot+[FedDFStyle]      coordinates {(-0.25, 79307904)};
\addplot+[FedDistillStyle] coordinates {( 0.00, 16179200)};
\addplot+[FedMDStyle]      coordinates {( 0.25, 16179200)};
\addplot+[FedKDNASStyle]   coordinates {( 0.50, 16179200)};
\addplot+[LocalKDStyle]    coordinates {( 0.75, 75263104)};
\draw[decorate, decoration={brace, amplitude=4pt, mirror}, thin]
([yshift=-10pt]axis cs:-0.75,0) -- ([yshift=-10pt]axis cs:0.75,0)
node[midway, below=5pt, font=\scriptsize]{MobileNetV2};
\end{axis}

\begin{axis}[
    at={(mainaxis.south west)},
    anchor=south west,
    width=0.9\linewidth,
    height=3.5cm,
    xmin=-1.4, xmax=4.8,
    xtick=\empty,
    clip=false,
    axis y line=right,
    axis x line=none,
    ymin=0, ymax=19000000,
    ylabel={Comm.\ Cost (bytes)},
    ylabel style={font=\scriptsize},
    yticklabel style={font=\scriptsize},
    enlarge x limits=false,
    every axis plot/.append style={mark=none},
    bar width=0.18,
]
\addplot+[FedAvgStyle]     coordinates {(2.45, 14217344)};
\addplot+[DittoStyle]      coordinates {(2.70, 14217344)};
\addplot+[FedDFStyle]      coordinates {(2.95, 18262144)};
\addplot+[FedDistillStyle] coordinates {(3.20, 16179200)};
\addplot+[FedMDStyle]      coordinates {(3.45, 16179200)};
\addplot+[FedKDNASStyle]   coordinates {(3.70, 16179200)};
\addplot+[LocalKDStyle]    coordinates {(3.95, 14217344)};
\draw[decorate, decoration={brace, amplitude=4pt, mirror}, thin]
([yshift=-10pt]axis cs:2.45,0) -- ([yshift=-10pt]axis cs:3.95,0)
node[midway, below=5pt, font=\scriptsize]{ShuffleNetV2 x0.5};
\end{axis}
\end{tikzpicture}

\caption{Communication cost per round on CIFAR100 under the Dirichlet setting. Since communication cost depends on the exchanged model or logit payload rather than on per-round accuracy, one bar is shown per method and architecture. For MobileNetV2, FedKD-NAS, FedDistill, and FedMD incur a communication cost of 16,179,200 bytes, corresponding to an approximately $4.65\times$ reduction relative to FedAvg, Ditto, and Local-KD (75,263,104 bytes), while FedDF has the highest cost (79,307,904 bytes). For ShuffleNetV2 x0.5, the logit-based methods require 16,179,200 bytes, which is higher than FedAvg, Ditto, and Local-KD (14,217,344 bytes) but lower than FedDF (18,262,144 bytes).}
\label{fig:cifar100_comm}
\end{figure}
\begin{figure}
\centering

\caption{Accuracy drop from IID to non-IID on MNIST. FedKD-NAS and FedDistill exhibit the smallest degradation across both architectures. Ditto and Local-KD suffer the largest drops, consistent with CIFAR-10 findings.}
\label{fig:mnist_acc_drop}
\end{figure}
\begin{figure}
\centering
%
\caption{Accuracy drop from IID to non-IID on FMNIST. FedKD-NAS sustains minimal degradation on LeNet5 ($<4\%$). On ResNet18, Ditto collapses by $>34\%$ under Shards while FedKD-NAS and FedDistill remain stable.}
\label{fig:fmnist_acc_drop}
\end{figure}
\begin{figure}
\centering
%
\caption{Accuracy drop from IID to non-IID on EMNIST. On LeNet5, FedDistill achieves the smallest drop while FedKD-NAS remains in the low-degradation group. On ResNet18, FedKD-NAS limits its drop to less than $5\%$ under Shards, whereas Ditto collapses by more than $35\%$.}
\label{fig:emnist_acc_drop}
\end{figure}
\begin{figure}
\centering
%
\caption{Accuracy drop from IID to non-IID on CIFAR100. All methods degrade substantially on this harder task. FedKD-NAS and FedAvg exhibit the most contained degradation, whereas Ditto and Local-KD again suffer the largest collapses.}
\label{fig:cifar100_acc_drop}
\end{figure}

\begin{figure*}[htbp]
\centering

\setlength{\tabcolsep}{3pt}
%
%
}
\caption{RES ($\downarrow$) and UES ($\uparrow$) on CIFAR100. FedKD-NAS achieves the lowest RES on both architectures under all three data distributions. For UES, FedKD-NAS attains the highest values on MobileNetV2 across all three data distributions. On ShuffleNetV2, FedAvg has the highest overall UES, while FedKD-NAS remains the strongest method within the low-communication logit-based group, achieving 1.0432, 1.2543, and 1.1971 under IID, Dirichlet, and Shards.}
\label{fig:cifar100_res_ues}
\end{figure*}
\begin{figure*}[htbp]
\centering

%
%
}
\caption{RES ($\downarrow$) and UES ($\uparrow$) on MNIST. On LeNet5, FedDistill consistently achieves the lowest RES across all data distributions, while FedKD-NAS attains the highest UES in IID, Dirichlet, and Shards, driven by its superior PQS and high CES. On ResNet18, FedAvg achieves the lowest RES under IID and Dirichlet, while FedDistill attains the lowest RES under Shards. For UES, FedKD-NAS consistently achieves the highest values across all distributions, with the largest gains under IID and Dirichlet, while FedDistill is competitive under Shards.}
\label{fig:mnist_res_ues}
\end{figure*}
\begin{figure*}[htbp]
\centering

\begin{tabular}{cccccccccccccc}
  \raisebox{0.2ex}{\tikz\draw[fill=FedAvgColor,    draw=black](0,0)rectangle(0.18,0.18);} & \scriptsize FedAvg     &
  \raisebox{0.2ex}{\tikz\draw[fill=DittoColor,     draw=black](0,0)rectangle(0.18,0.18);} & \scriptsize Ditto      &
  \raisebox{0.2ex}{\tikz\draw[fill=FedDFColor,     draw=black](0,0)rectangle(0.18,0.18);} & \scriptsize FedDF      &
  \raisebox{0.2ex}{\tikz\draw[fill=FedDistillColor,draw=black](0,0)rectangle(0.18,0.18);} & \scriptsize FedDistill &
  \raisebox{0.2ex}{\tikz\draw[fill=FedMDColor,     draw=black](0,0)rectangle(0.18,0.18);} & \scriptsize FedMD      &
  \raisebox{0.2ex}{\tikz\draw[fill=FedKDNASColor,  draw=black](0,0)rectangle(0.18,0.18);} & \scriptsize FedKD-NAS  &
  \raisebox{0.2ex}{\tikz\draw[fill=LocalKDColor,   draw=black](0,0)rectangle(0.18,0.18);} & \scriptsize Local-KD   \\
\end{tabular}

\vspace{0.6em}

\makebox[0pt][c]{%
\begin{tikzpicture}
\begin{groupplot}[
    group style={
        group size     = 4 by 1,
        horizontal sep = 0.5cm,
    },
    width  = 5.5cm,
    height = 4.0cm,
    xmin=-1.4, xmax=7.8,
    xtick=\empty,
    tick align=inside,
    clip=false,
    yticklabel style  = {font=\scriptsize},
    label style       = {font=\small},
    title style       = {font=\small},
    enlarge x limits  = false,
    every axis plot/.append style={mark=none},
]
\nextgroupplot[
    title  = {(a) RES - LeNet5},
    ylabel = {RES},
    ymin=0, ymax=1.1,
    bar width=0.18,
]
\AddThreeRegimePlotShift{RES}{0}{\MobiIIDFm}{\MobiDirFm}{\MobiShaFm}{FedAvgStyle}    {-0.75}{2.45}{5.65}
\AddThreeRegimePlotShift{RES}{1}{\MobiIIDFm}{\MobiDirFm}{\MobiShaFm}{DittoStyle}     {-0.50}{2.70}{5.90}
\AddThreeRegimePlotShift{RES}{2}{\MobiIIDFm}{\MobiDirFm}{\MobiShaFm}{FedDFStyle}     {-0.25}{2.95}{6.15}
\AddThreeRegimePlotShift{RES}{3}{\MobiIIDFm}{\MobiDirFm}{\MobiShaFm}{FedDistillStyle}{ 0.00}{3.20}{6.40}
\AddThreeRegimePlotShift{RES}{4}{\MobiIIDFm}{\MobiDirFm}{\MobiShaFm}{FedMDStyle}     { 0.25}{3.45}{6.65}
\AddThreeRegimePlotShift{RES}{5}{\MobiIIDFm}{\MobiDirFm}{\MobiShaFm}{FedKDNASStyle}  { 0.50}{3.70}{6.90}
\AddThreeRegimePlotShift{RES}{6}{\MobiIIDFm}{\MobiDirFm}{\MobiShaFm}{LocalKDStyle}   { 0.75}{3.95}{7.15}
\DrawRegimeBrackets

\nextgroupplot[
    title  = {(b) RES - ResNet18},
    ymin=0, ymax=1.1,
    bar width=0.18,
]
\AddThreeRegimePlotShift{RES}{0}{\ShufIIDFm}{\ShufDirFm}{\ShufShaFm}{FedAvgStyle}    {-0.75}{2.45}{5.65}
\AddThreeRegimePlotShift{RES}{1}{\ShufIIDFm}{\ShufDirFm}{\ShufShaFm}{DittoStyle}     {-0.50}{2.70}{5.90}
\AddThreeRegimePlotShift{RES}{2}{\ShufIIDFm}{\ShufDirFm}{\ShufShaFm}{FedDFStyle}     {-0.25}{2.95}{6.15}
\AddThreeRegimePlotShift{RES}{3}{\ShufIIDFm}{\ShufDirFm}{\ShufShaFm}{FedDistillStyle}{ 0.00}{3.20}{6.40}
\AddThreeRegimePlotShift{RES}{4}{\ShufIIDFm}{\ShufDirFm}{\ShufShaFm}{FedMDStyle}     { 0.25}{3.45}{6.65}
\AddThreeRegimePlotShift{RES}{5}{\ShufIIDFm}{\ShufDirFm}{\ShufShaFm}{FedKDNASStyle}  { 0.50}{3.70}{6.90}
\AddThreeRegimePlotShift{RES}{6}{\ShufIIDFm}{\ShufDirFm}{\ShufShaFm}{LocalKDStyle}   { 0.75}{3.95}{7.15}
\DrawRegimeBrackets

\nextgroupplot[
    title  = {(c) UES - LeNet5},
    ylabel = {UES},
    ymin=0, ymax=5.6,
    xshift=0.8cm,
    bar width=0.18,
]
\AddThreeRegimePlotShift{UES}{0}{\MobiIIDFm}{\MobiDirFm}{\MobiShaFm}{FedAvgStyle}    {-0.75}{2.45}{5.65}
\AddThreeRegimePlotShift{UES}{1}{\MobiIIDFm}{\MobiDirFm}{\MobiShaFm}{DittoStyle}     {-0.50}{2.70}{5.90}
\AddThreeRegimePlotShift{UES}{2}{\MobiIIDFm}{\MobiDirFm}{\MobiShaFm}{FedDFStyle}     {-0.25}{2.95}{6.15}
\AddThreeRegimePlotShift{UES}{3}{\MobiIIDFm}{\MobiDirFm}{\MobiShaFm}{FedDistillStyle}{ 0.00}{3.20}{6.40}
\AddThreeRegimePlotShift{UES}{4}{\MobiIIDFm}{\MobiDirFm}{\MobiShaFm}{FedMDStyle}     { 0.25}{3.45}{6.65}
\AddThreeRegimePlotShift{UES}{5}{\MobiIIDFm}{\MobiDirFm}{\MobiShaFm}{FedKDNASStyle}  { 0.50}{3.70}{6.90}
\AddThreeRegimePlotShift{UES}{6}{\MobiIIDFm}{\MobiDirFm}{\MobiShaFm}{LocalKDStyle}   { 0.75}{3.95}{7.15}
\DrawRegimeBrackets

\nextgroupplot[
    title  = {(d) UES - ResNet18},
    ymin=0, ymax=10.1,
    bar width=0.18,
]
\AddThreeRegimePlotShift{UES}{0}{\ShufIIDFm}{\ShufDirFm}{\ShufShaFm}{FedAvgStyle}    {-0.75}{2.45}{5.65}
\AddThreeRegimePlotShift{UES}{1}{\ShufIIDFm}{\ShufDirFm}{\ShufShaFm}{DittoStyle}     {-0.50}{2.70}{5.90}
\AddThreeRegimePlotShift{UES}{2}{\ShufIIDFm}{\ShufDirFm}{\ShufShaFm}{FedDFStyle}     {-0.25}{2.95}{6.15}
\AddThreeRegimePlotShift{UES}{3}{\ShufIIDFm}{\ShufDirFm}{\ShufShaFm}{FedDistillStyle}{ 0.00}{3.20}{6.40}
\AddThreeRegimePlotShift{UES}{4}{\ShufIIDFm}{\ShufDirFm}{\ShufShaFm}{FedMDStyle}     { 0.25}{3.45}{6.65}
\AddThreeRegimePlotShift{UES}{5}{\ShufIIDFm}{\ShufDirFm}{\ShufShaFm}{FedKDNASStyle}  { 0.50}{3.70}{6.90}
\AddThreeRegimePlotShift{UES}{6}{\ShufIIDFm}{\ShufDirFm}{\ShufShaFm}{LocalKDStyle}   { 0.75}{3.95}{7.15}
\DrawRegimeBrackets

\end{groupplot}
\end{tikzpicture}%
}
\caption{RES ($\downarrow$) and UES ($\uparrow$) on FMNIST. On LeNet5, FedMD achieves the lowest RES under IID, while FedDistill attains the lowest RES under Dirichlet and Shards; FedKD-NAS achieves the highest UES under IID, whereas FedMD and FedDistill lead under Dirichlet and Shards, respectively. On ResNet18, the lowest RES is achieved by FedAvg under IID, FedKD-NAS under Dirichlet, and Ditto under Shards. For UES on ResNet18, FedDistill is best under IID and Dirichlet, while FedKD-NAS achieves the highest UES under Shards.}
\label{fig:fmnist_res_ues}
\end{figure*}
\begin{figure*}[htbp]
\centering

\setlength{\tabcolsep}{3pt}
\begin{tabular}{@{}cccccccccccccc@{}}
  \raisebox{0.2ex}{\tikz\draw[fill=FedAvgColor,draw=black](0,0)rectangle(0.18,0.18);} & \scriptsize FedAvg &
  \raisebox{0.2ex}{\tikz\draw[fill=DittoColor,draw=black](0,0)rectangle(0.18,0.18);} & \scriptsize Ditto &
  \raisebox{0.2ex}{\tikz\draw[fill=FedDFColor,draw=black](0,0)rectangle(0.18,0.18);} & \scriptsize FedDF &
  \raisebox{0.2ex}{\tikz\draw[fill=FedDistillColor,draw=black](0,0)rectangle(0.18,0.18);} & \scriptsize FedDistill &
  \raisebox{0.2ex}{\tikz\draw[fill=FedMDColor,draw=black](0,0)rectangle(0.18,0.18);} & \scriptsize FedMD &
  \raisebox{0.2ex}{\tikz\draw[fill=FedKDNASColor,draw=black](0,0)rectangle(0.18,0.18);} & \scriptsize FedKD-NAS &
  \raisebox{0.2ex}{\tikz\draw[fill=LocalKDColor,draw=black](0,0)rectangle(0.18,0.18);} & \scriptsize Local-KD \\
\end{tabular}

\vspace{0.6em}

\makebox[0pt][c]{%
\begin{tikzpicture}
\begin{groupplot}[
    group style={group size=4 by 1, horizontal sep=0.5cm},
    width=5cm,
    height=4.0cm,
    xmin=-1.2, xmax=6.0,
    xtick=\empty,
    tick align=inside,
    clip=false,
    yticklabel style={font=\scriptsize},
    label style={font=\small},
    title style={font=\small},
    enlarge x limits=false,
    every axis plot/.append style={mark=none},
]
\nextgroupplot[title={(a) RES - LeNet5}, ylabel={RES}, ymin=0, ymax=1.1, bar width=0.18]
\AddThreeRegimePlotShift{RES}{0}{\MobiIIDEm}{\MobiDirEm}{\MobiShaEm}{FedAvgStyle}{-0.75}{1.85}{4.05}
\AddThreeRegimePlotShift{RES}{1}{\MobiIIDEm}{\MobiDirEm}{\MobiShaEm}{DittoStyle}{-0.50}{2.10}{4.30}
\AddThreeRegimePlotShift{RES}{2}{\MobiIIDEm}{\MobiDirEm}{\MobiShaEm}{FedDFStyle}{-0.25}{2.35}{4.55}
\AddThreeRegimePlotShift{RES}{3}{\MobiIIDEm}{\MobiDirEm}{\MobiShaEm}{FedDistillStyle}{0.00}{2.60}{4.80}
\AddThreeRegimePlotShift{RES}{4}{\MobiIIDEm}{\MobiDirEm}{\MobiShaEm}{FedMDStyle}{0.25}{2.85}{5.05}
\AddThreeRegimePlotShift{RES}{5}{\MobiIIDEm}{\MobiDirEm}{\MobiShaEm}{FedKDNASStyle}{0.50}{3.10}{5.30}
\AddThreeRegimePlotShift{RES}{6}{\MobiIIDEm}{\MobiDirEm}{\MobiShaEm}{LocalKDStyle}{0.75}{3.35}{5.55}
\DrawRegimeBrackets

\nextgroupplot[title={(b) RES - ResNet18}, ymin=0, ymax=1.1, bar width=0.18]
\AddThreeRegimePlotShift{RES}{0}{\ShufIIDEm}{\ShufDirEm}{\ShufShaEm}{FedAvgStyle}{-0.75}{1.85}{4.05}
\AddThreeRegimePlotShift{RES}{1}{\ShufIIDEm}{\ShufDirEm}{\ShufShaEm}{DittoStyle}{-0.50}{2.10}{4.30}
\AddThreeRegimePlotShift{RES}{2}{\ShufIIDEm}{\ShufDirEm}{\ShufShaEm}{FedDFStyle}{-0.25}{2.35}{4.55}
\AddThreeRegimePlotShift{RES}{3}{\ShufIIDEm}{\ShufDirEm}{\ShufShaEm}{FedDistillStyle}{0.00}{2.60}{4.80}
\AddThreeRegimePlotShift{RES}{4}{\ShufIIDEm}{\ShufDirEm}{\ShufShaEm}{FedMDStyle}{0.25}{2.85}{5.05}
\AddThreeRegimePlotShift{RES}{5}{\ShufIIDEm}{\ShufDirEm}{\ShufShaEm}{FedKDNASStyle}{0.50}{3.10}{5.30}
\AddThreeRegimePlotShift{RES}{6}{\ShufIIDEm}{\ShufDirEm}{\ShufShaEm}{LocalKDStyle}{0.75}{3.35}{5.55}
\DrawRegimeBrackets

\nextgroupplot[title={(c) UES - LeNet5}, ylabel={UES}, ymin=0, ymax=2.5, xshift=0.8cm, bar width=0.18]
\AddThreeRegimePlotShift{UES}{0}{\MobiIIDEm}{\MobiDirEm}{\MobiShaEm}{FedAvgStyle}{-0.75}{1.85}{4.05}
\AddThreeRegimePlotShift{UES}{1}{\MobiIIDEm}{\MobiDirEm}{\MobiShaEm}{DittoStyle}{-0.50}{2.10}{4.30}
\AddThreeRegimePlotShift{UES}{2}{\MobiIIDEm}{\MobiDirEm}{\MobiShaEm}{FedDFStyle}{-0.25}{2.35}{4.55}
\AddThreeRegimePlotShift{UES}{3}{\MobiIIDEm}{\MobiDirEm}{\MobiShaEm}{FedDistillStyle}{0.00}{2.60}{4.80}
\AddThreeRegimePlotShift{UES}{4}{\MobiIIDEm}{\MobiDirEm}{\MobiShaEm}{FedMDStyle}{0.25}{2.85}{5.05}
\AddThreeRegimePlotShift{UES}{5}{\MobiIIDEm}{\MobiDirEm}{\MobiShaEm}{FedKDNASStyle}{0.50}{3.10}{5.30}
\AddThreeRegimePlotShift{UES}{6}{\MobiIIDEm}{\MobiDirEm}{\MobiShaEm}{LocalKDStyle}{0.75}{3.35}{5.55}
\DrawRegimeBrackets

\nextgroupplot[title={(d) UES - ResNet18}, ymin=0, ymax=2.5, bar width=0.18]
\AddThreeRegimePlotShift{UES}{0}{\ShufIIDEm}{\ShufDirEm}{\ShufShaEm}{FedAvgStyle}{-0.75}{1.85}{4.05}
\AddThreeRegimePlotShift{UES}{1}{\ShufIIDEm}{\ShufDirEm}{\ShufShaEm}{DittoStyle}{-0.50}{2.10}{4.30}
\AddThreeRegimePlotShift{UES}{2}{\ShufIIDEm}{\ShufDirEm}{\ShufShaEm}{FedDFStyle}{-0.25}{2.35}{4.55}
\AddThreeRegimePlotShift{UES}{3}{\ShufIIDEm}{\ShufDirEm}{\ShufShaEm}{FedDistillStyle}{0.00}{2.60}{4.80}
\AddThreeRegimePlotShift{UES}{4}{\ShufIIDEm}{\ShufDirEm}{\ShufShaEm}{FedMDStyle}{0.25}{2.85}{5.05}
\AddThreeRegimePlotShift{UES}{5}{\ShufIIDEm}{\ShufDirEm}{\ShufShaEm}{FedKDNASStyle}{0.50}{3.10}{5.30}
\AddThreeRegimePlotShift{UES}{6}{\ShufIIDEm}{\ShufDirEm}{\ShufShaEm}{LocalKDStyle}{0.75}{3.35}{5.55}
\DrawRegimeBrackets
\end{groupplot}
\end{tikzpicture}%
}
\caption{RES ($\downarrow$) and UES ($\uparrow$) on EMNIST. On LeNet5, FedDistill consistently achieves the lowest RES across all data distributions, while FedAvg attains the highest UES because of its strong accuracy and higher CES. On ResNet18, Ditto achieves the lowest RES under IID, FedAvg under Dirichlet, and FedKD-NAS under Shards. For UES, FedAvg consistently achieves the highest values across all distributions and both architectures.}
\label{fig:emnist_res_ues}
\end{figure*}

\begin{figure*}[t]
  \centering
    \begin{subfigure}[t]{0.98\textwidth}
    \centering
    \includegraphics[width=\textwidth]{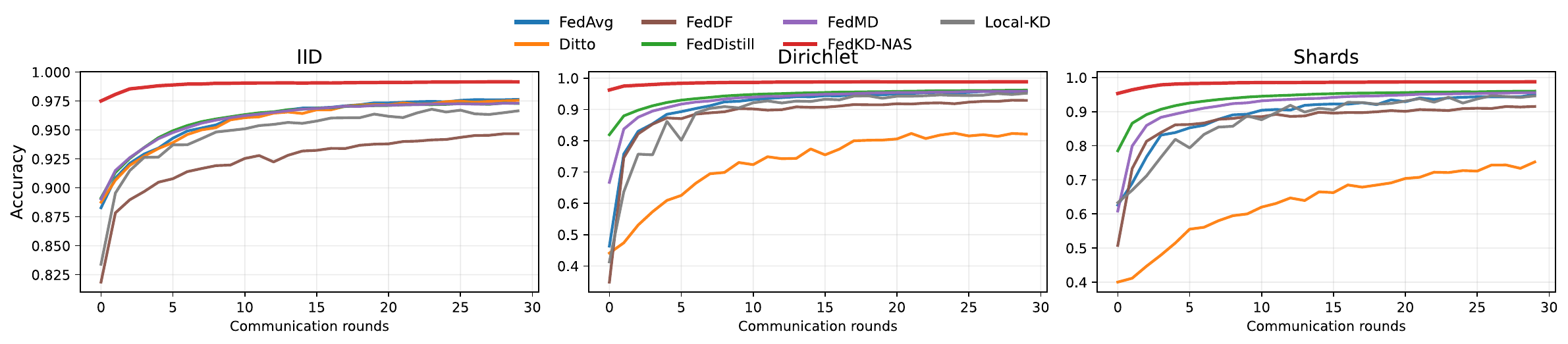}
    \caption{LeNet5}
  \end{subfigure}

  \vspace{0.5em}

 \begin{subfigure}[t]{0.98\textwidth}
    \centering
    \begin{adjustbox}{clip,trim=0 0 0 20pt}
      \includegraphics[width=\textwidth]{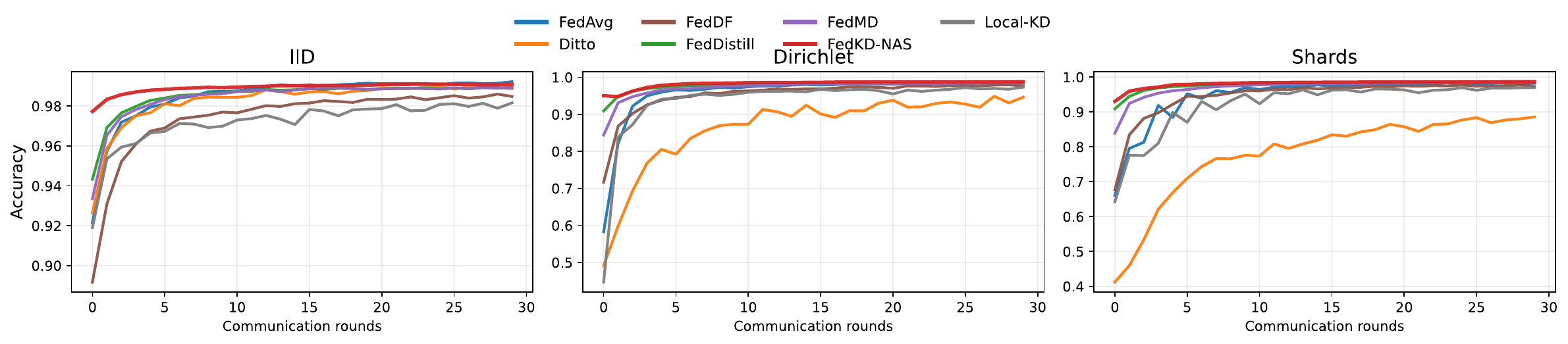}
    \end{adjustbox}
    \caption{ResNet18}
  \end{subfigure}
    \caption{Accuracy curves over 100 rounds on MNIST under IID, Dirichlet ($\alpha = 0.1$), and Shards. \emph{Top:} LeNet5. \emph{Bottom:} ResNet18. FedKD-NAS (red) achieves the highest final accuracy under all distributions with LeNet5. Under IID with ResNet18, FedAvg leads at round 100, but FedKD-NAS overtakes it under non-IID conditions.}
  \label{fig:mnist_acc_curves}
\end{figure*}
\begin{figure*}[t]
\centering
\begin{subfigure}[t]{0.98\textwidth}
    \centering
    \includegraphics[width=\textwidth]{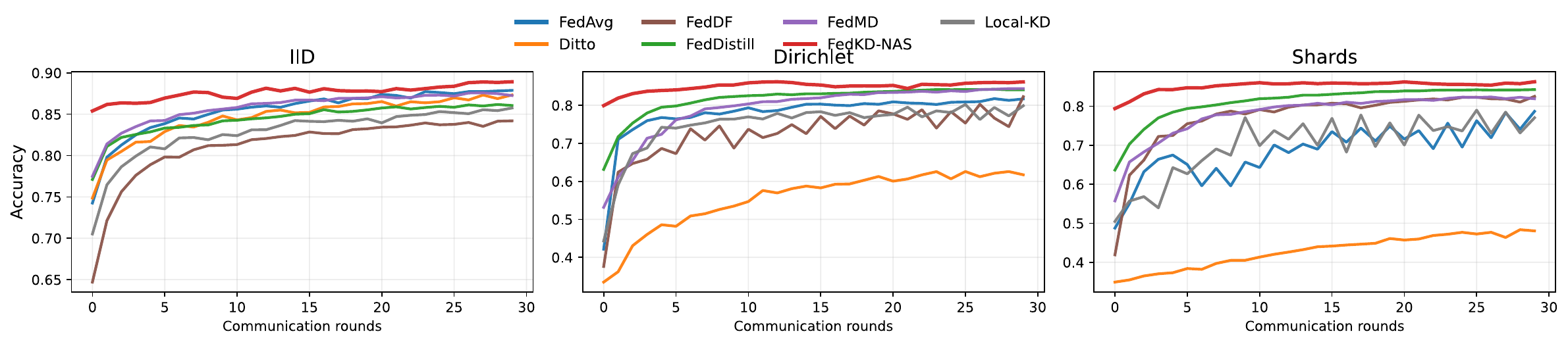}
    \caption{LeNet5}
  \end{subfigure}

  \vspace{0.5em}

 \begin{subfigure}[t]{0.98\textwidth}
    \centering
    \begin{adjustbox}{clip,trim=0 0 0 20pt}
      \includegraphics[width=\textwidth]{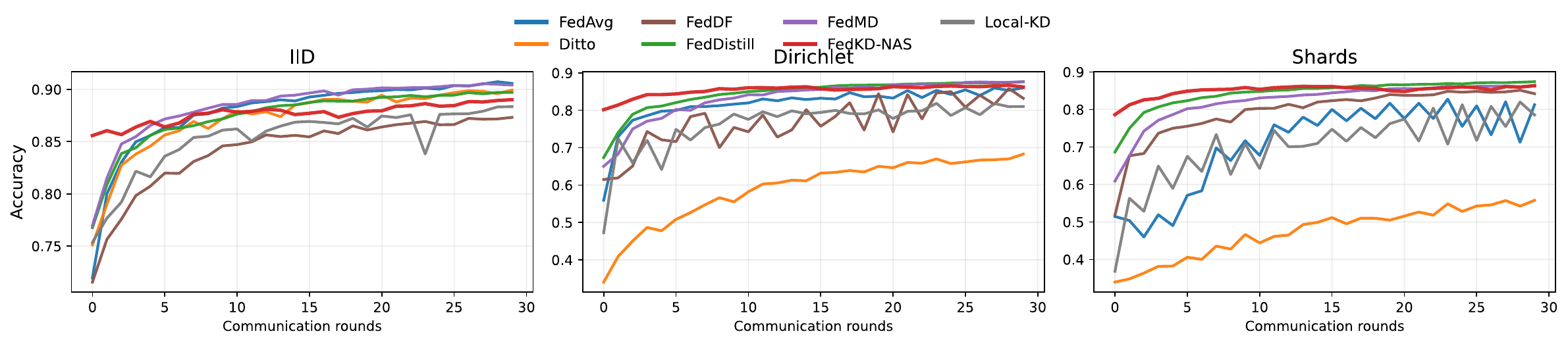}
    \end{adjustbox}
    \caption{ResNet18}
  \end{subfigure}
\caption{Accuracy curves over 100 rounds on FMNIST. FedKD-NAS leads under all distributions with LeNet5. On ResNet18, FedAvg and FedMD are competitive under IID and Dirichlet, respectively, but FedKD-NAS achieves the best accuracy under Shards on both architectures.}
\label{fig:fmnist_acc_curves}
\end{figure*}

\begin{figure*}[htbp]
\centering
\begin{subfigure}[t]{0.98\textwidth}
    \centering
    \includegraphics[width=\textwidth]{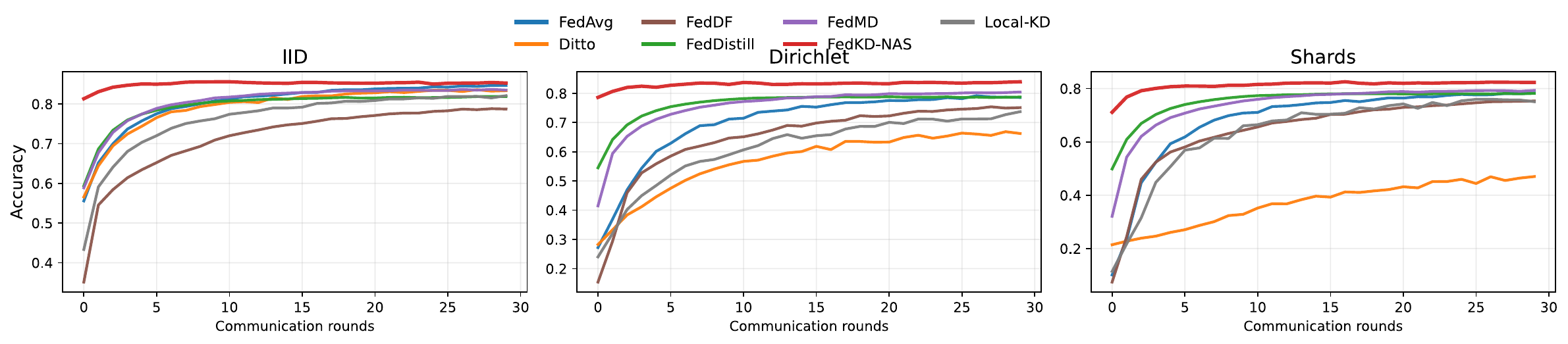}
    \caption{LeNet5}
  \end{subfigure}

  \vspace{0.5em}

 \begin{subfigure}[t]{0.98\textwidth}
    \centering
    \begin{adjustbox}{clip,trim=0 0 0 20pt}
      \includegraphics[width=\textwidth]{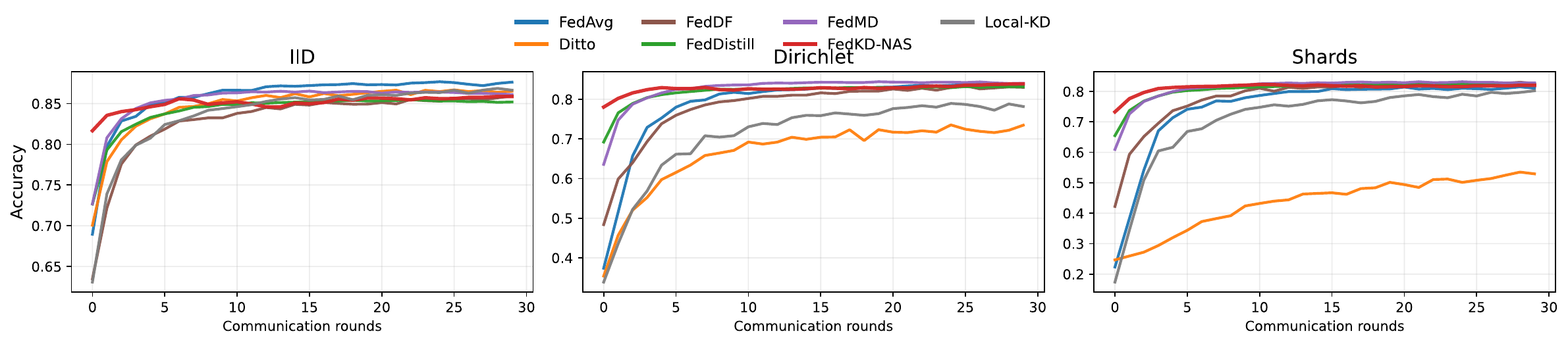}
    \end{adjustbox}
    \caption{ResNet18}
  \end{subfigure}
\caption{Accuracy curves over 100 rounds on EMNIST. EMNIST is a 47-class benchmark in which heterogeneity creates a significant accuracy spread across methods. FedKD-NAS maintains the highest accuracy under non-IID distributions on LeNet5. On ResNet18, FedAvg leads under IID, but FedKD-NAS and FedMD remain competitive under Shards.}
\label{fig:emnist_acc_curves}
\end{figure*}
\begin{figure*}[htbp]
\centering
\begin{subfigure}[t]{0.98\textwidth}
    \centering
    \includegraphics[width=\textwidth]{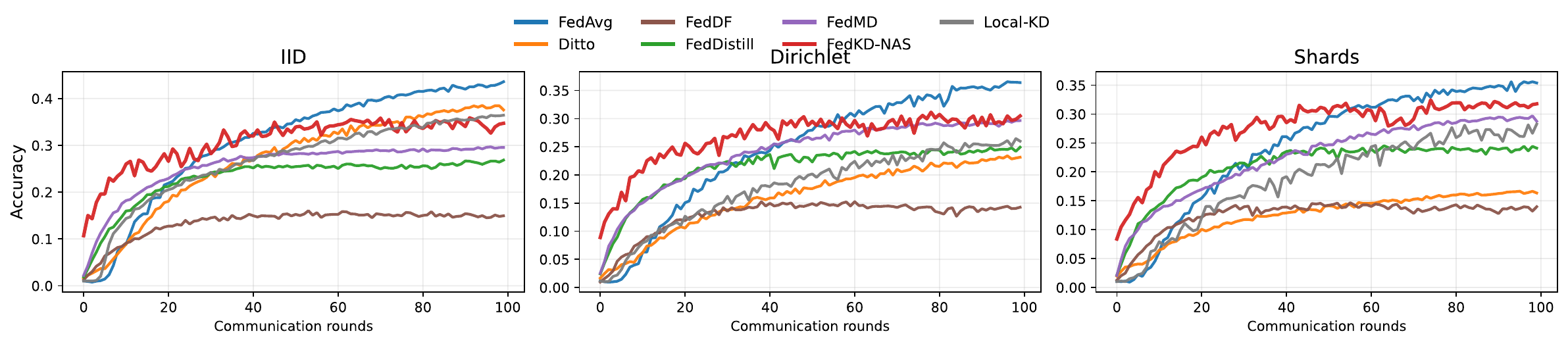}
    \caption{MobileNetV2}
  \end{subfigure}

  \vspace{0.5em}

 \begin{subfigure}[t]{0.98\textwidth}
    \centering
    \begin{adjustbox}{clip,trim=0 0 0 20pt}
      \includegraphics[width=\textwidth]{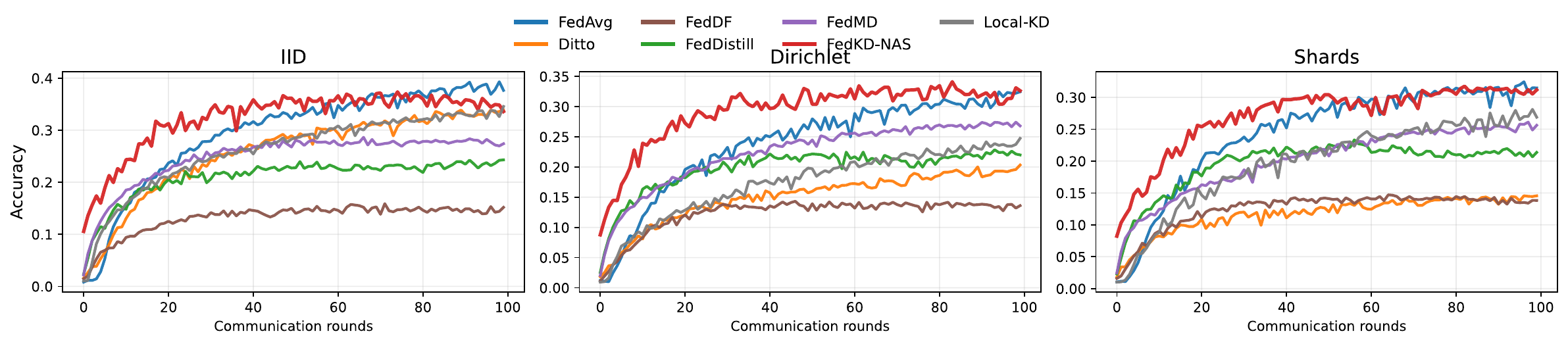}
    \end{adjustbox}
    \caption{ShuffleNetV2}
  \end{subfigure}
\caption{Accuracy curves over 100 rounds on CIFAR100. CIFAR100 is the most challenging benchmark because it contains 100 classes. FedAvg achieves the highest final accuracy under IID with both architectures, but FedKD-NAS converges more stably and shows smaller degradation under non-IID distributions.}
\label{fig:cifar100_acc_curves}
\end{figure*}

\begin{table*}[htbp]
\centering
\caption{Raw results across MNIST, FMNIST, EMNIST, and CIFAR100 under three data distributions (IID, Dirichlet, Shards): performance and resource usage metrics for LeNet5 and ResNet18 over 30 communication rounds for MNIST/FMNIST/EMNIST, MobileNetV2 and ShuffleNetV2 over 100 communication rounds for CIFAR100. Accuracy is higher-better ($\uparrow$); loss, CPU, RAM, and communication cost are lower-better ($\downarrow$).}
\label{tab:all_combined_raw}

\setlength{\tabcolsep}{2.0pt}
\renewcommand{\arraystretch}{0.82}
\small
\begin{adjustbox}{max width=\textwidth, max totalheight=0.9\textheight, keepaspectratio}
\begin{tabular}{lll*{15}{c}}
\toprule
\multirow{2}{*}{Dataset}
& \multirow{2}{*}{Model}
& \multirow{2}{*}{Method}
& \multicolumn{5}{c}{IID}
& \multicolumn{5}{c}{Dirichlet}
& \multicolumn{5}{c}{Shards} \\
\cmidrule(lr){4-8}\cmidrule(lr){9-13}\cmidrule(lr){14-18}
& & 
& Acc$\uparrow$ & Loss$\downarrow$ & CPU$\downarrow$ & RAM$\downarrow$ & Comm$\downarrow$
& Acc$\uparrow$ & Loss$\downarrow$ & CPU$\downarrow$ & RAM$\downarrow$ & Comm$\downarrow$
& Acc$\uparrow$ & Loss$\downarrow$ & CPU$\downarrow$ & RAM$\downarrow$ & Comm$\downarrow$ \\
\midrule

\multirow{14}{*}{\textbf{MNIST}}
& \multirow{7}{*}{LeNet5}
& FedAvg
& 0.9760 & 0.0731 & 43.3281 & \textbf{605.5195} & 8\,618\,304
& 0.9572 & 0.1400 & 43.4335 & 608.8672 & 8\,618\,304
& 0.9507 & 0.1668 & 43.2207 & 603.5156 & 8\,618\,304 \\
& & Ditto
& 0.9755 & 0.0735 & 43.7189 & 619.1250 & 8\,618\,304
& 0.8212 & 0.1473 & 43.3855 & 619.1406 & 8\,618\,304
& 0.7528 & 0.1762 & 43.3826 & 621.1250 & 8\,618\,304 \\
& & FedDF
& 0.9467 & 0.1716 & 43.6166 & 635.0313 & 9\,099\,584
& 0.9289 & 0.2625 & 43.3273 & 615.4023 & 9\,099\,584
& 0.9149 & 0.2955 & 42.6526 & 624.7773 & 9\,099\,584 \\
& & FedDistill
& 0.9729 & 0.0950 & \textbf{43.1896} & 586.4063 & \textbf{1\,925\,120}
& 0.9613 & 0.1343 & 43.0710 & \textbf{569.0898} & \textbf{1\,925\,120}
& 0.9601 & \textbf{0.1336} & \textbf{40.7937} & \textbf{585.0000} & \textbf{1\,925\,120} \\
& & FedMD
& 0.9730 & 0.0929 & 43.2285 & 590.7354 & \textbf{1\,925\,120}
& 0.9581 & 0.1421 & 43.1042 & 573.6250 & \textbf{1\,925\,120}
& 0.9555 & 0.1482 & 43.1458 & \textbf{568.4648} & \textbf{1\,925\,120} \\
& & Local-KD
& 0.9681 & 0.1719 & 47.7265 & 801.9434 & 8\,618\,304
& 0.9496 & 0.2372 & 46.6961 & 773.4072 & 8\,618\,304
& 0.9416 & 0.2261 & 47.1313 & 762.4521 & 8\,618\,304 \\
& & FedKD-NAS
& \textbf{0.9906} & \textbf{0.0288} & 43.9906 & 731.0547 & \textbf{1\,925\,120}
& \textbf{0.9871} & \textbf{0.0391} & 43.6538 & 718.7305 & \textbf{1\,925\,120}
& \textbf{0.9868} & 0.0424 & 43.9561 & 721.7002 & \textbf{1\,925\,120} \\

\cmidrule(lr){2-18}

& \multirow{7}{*}{ResNet18}
& FedAvg
& \textbf{0.9914} & \textbf{0.0264} & 44.5184 & \textbf{660.1582} & 13\,492\,544
& 0.9847 & 0.0460 & 44.6725 & \textbf{640.3623} & 13\,492\,544
& 0.9827 & 0.0553 & 45.1473 & \textbf{635.5342} & 13\,492\,544 \\
& & Ditto
& 0.9888 & 0.0265 & 44.7035 & 685.6455 & 13\,492\,544
& 0.9465 & 0.0482 & 44.0447 & 661.5205 & 13\,492\,544
& 0.8875 & 0.0567 & 43.9930 & 668.9219 & 13\,492\,544 \\
& & FedDF
& 0.9851 & 0.0433 & 44.7817 & 667.5684 & 13\,973\,824
& 0.9761 & 0.0759 & 45.2540 & 676.7383 & 13\,973\,824
& 0.9712 & 0.0944 & 45.3302 & 677.0088 & 13\,973\,824 \\
& & FedDistill
& 0.9893 & 0.0360 & 44.0701 & 792.8594 & \textbf{1\,925\,120}
& 0.9851 & 0.0487 & 44.0502 & 1015.0391 & \textbf{1\,925\,120}
& 0.9855 & 0.0472 & 44.0880 & 612.1982 & \textbf{1\,925\,120} \\
& & FedMD
& 0.9891 & 0.0357 & 43.1523 & 1372.1016 & \textbf{1\,925\,120}
& 0.9848 & 0.0535 & 44.3172 & 752.1445 & \textbf{1\,925\,120}
& 0.9849 & 0.0514 & 44.7044 & 855.3242 & \textbf{1\,925\,120} \\
& & Local-KD
& 0.9794 & 0.1805 & 47.7973 & 821.1201 & 13\,492\,544
& 0.9689 & 0.2056 & 47.5593 & 773.7686 & 13\,492\,544
& 0.9688 & 0.1673 & 47.2858 & 756.7031 & 13\,492\,544 \\
& & FedKD-NAS
& 0.9908 & 0.0277 & \textbf{42.0451} & 739.9141 & \textbf{1\,925\,120}
& \textbf{0.9869} & \textbf{0.0382} & \textbf{43.7023} & 709.0547 & \textbf{1\,925\,120}
& \textbf{0.9869} & \textbf{0.0411} & \textbf{43.4466} & 708.1055 & \textbf{1\,925\,120} \\

\midrule

\multirow{14}{*}{\textbf{FMNIST}}
& \multirow{7}{*}{LeNet5}
& FedAvg
& 0.8799 & \textbf{0.3360} & 43.4961 & 606.0078 & 8\,618\,304
& 0.8143 & 0.4946 & 42.4676 & 608.6680 & 8\,618\,304
& 0.7797 & 0.6261 & 43.5863 & 606.6172 & 8\,618\,304 \\
& & Ditto
& 0.8728 & 0.3369 & 44.1388 & 619.1250 & 8\,618\,304
& 0.6193 & 0.4773 & 43.9290 & 622.6719 & 8\,618\,304
& 0.4826 & 0.5856 & \textbf{42.2044} & 621.7266 & 8\,618\,304 \\
& & FedDF
& 0.8411 & 0.4404 & 43.3163 & 637.0703 & 9\,099\,584
& 0.8225 & 0.5167 & \textbf{40.3196} & 646.8320 & 9\,099\,584
& 0.8249 & 0.5622 & 43.2100 & 618.4180 & 9\,099\,584 \\
& & FedDistill
& 0.8584 & 0.4309 & 43.2469 & 584.7461 & \textbf{1\,925\,120}
& 0.8382 & 0.4679 & 42.5425 & \textbf{569.7227} & \textbf{1\,925\,120}
& 0.8444 & \textbf{0.4533} & 43.2056 & \textbf{581.7617} & \textbf{1\,925\,120} \\
& & FedMD
& 0.8737 & 0.3580 & \textbf{43.1240} & \textbf{579.0479} & \textbf{1\,925\,120}
& 0.8435 & \textbf{0.4426} & 43.2162 & 569.9258 & \textbf{1\,925\,120}
& 0.8213 & 0.5010 & 43.1608 & 583.1875 & \textbf{1\,925\,120} \\
& & Local-KD
& 0.8576 & 0.4741 & 47.9304 & 802.6133 & 8\,618\,304
& 0.7903 & 0.8212 & 47.0624 & 792.3389 & 8\,618\,304
& 0.7441 & 0.8116 & 46.8584 & 767.1650 & 8\,618\,304 \\
& & FedKD-NAS
& \textbf{0.8933} & 0.3606 & 44.4097 & 734.1006 & \textbf{1\,925\,120}
& \textbf{0.8664} & 0.4682 & 43.0446 & 742.1797 & \textbf{1\,925\,120}
& \textbf{0.8616} & 0.4556 & 43.3393 & 695.0117 & \textbf{1\,925\,120} \\

\cmidrule(lr){2-18}

& \multirow{7}{*}{ResNet18}
& FedAvg
& \textbf{0.9055} & 0.2578 & \textbf{43.0534} & \textbf{641.4414} & 13\,492\,544
& 0.8608 & 0.3817 & 45.4318 & \textbf{656.1436} & 13\,492\,544
& 0.8135 & 0.5095 & 45.0078 & \textbf{650.2842} & 13\,492\,544 \\
& & Ditto
& 0.9008 & \textbf{0.2541} & 44.3868 & 666.1387 & 13\,492\,544
& 0.6821 & 0.3788 & 44.7068 & 678.5410 & 13\,492\,544
& 0.5596 & 0.4958 & 43.4206 & 657.4619 & 13\,492\,544 \\
& & FedDF
& 0.8722 & 0.3588 & 45.4739 & 673.7041 & 13\,973\,824
& 0.8431 & 0.4615 & 43.9577 & 675.8506 & 13\,973\,824
& 0.8427 & 0.5043 & 44.5871 & 688.7148 & 13\,973\,824 \\
& & FedDistill
& 0.8962 & 0.3431 & 44.6656 & 1330.6758 & \textbf{1\,925\,120}
& 0.8770 & 0.3673 & 44.8635 & 669.1680 & \textbf{1\,925\,120}
& \textbf{0.8735} & \textbf{0.4038} & 44.0357 & 987.8984 & \textbf{1\,925\,120} \\
& & FedMD
& 0.9048 & 0.2880 & 44.6696 & 974.1602 & \textbf{1\,925\,120}
& \textbf{0.8777} & \textbf{0.3525} & 44.8709 & 986.7305 & \textbf{1\,925\,120}
& 0.8627 & 0.4096 & \textbf{43.1560} & 1454.8672 & \textbf{1\,925\,120} \\
& & Local-KD
& 0.8814 & 0.4429 & 47.9144 & 809.5762 & 13\,492\,544
& 0.7883 & 0.7695 & 47.4576 & 794.4297 & 13\,492\,544
& 0.7912 & 0.6712 & 47.1259 & 770.8984 & 13\,492\,544 \\
& & FedKD-NAS
& 0.8908 & 0.3761 & 44.1912 & 889.0664 & \textbf{1\,925\,120}
& 0.8649 & 0.4567 & \textbf{42.4085} & 707.9326 & \textbf{1\,925\,120}
& 0.8606 & 0.4766 & 43.7047 & 698.2813 & \textbf{1\,925\,120} \\

\midrule

\multirow{14}{*}{\textbf{EMNIST}}
& \multirow{7}{*}{LeNet5}
& FedAvg
& 0.8468 & \textbf{0.4745} & 42.6728 & 652.0977 & \textbf{8\,922\,592}
& 0.7857 & 0.6529 & 43.7307 & 657.7305 & \textbf{8\,922\,592}
& 0.7831 & 0.6694 & 42.8195 & 655.1875 & \textbf{8\,922\,592} \\
& & Ditto
& 0.8334 & 0.4808 & 42.2911 & 667.2500 & \textbf{8\,922\,592}
& 0.6624 & 0.6526 & 43.3166 & 670.4258 & \textbf{8\,922\,592}
& 0.4699 & 0.6707 & \textbf{41.2970} & 668.7773 & \textbf{8\,922\,592} \\
& & FedDF
& 0.7872 & 0.6844 & 42.1660 & 680.6250 & 13\,181\,920
& 0.7510 & 0.8199 & 43.2494 & 668.1758 & 13\,181\,920
& 0.7534 & 0.8499 & 42.6477 & 689.0342 & 13\,181\,920 \\
& & FedDistill
& 0.8380 & 0.5298 & \textbf{41.7678} & 657.0859 & 13\,181\,920
& 0.8197 & 0.6260 & \textbf{42.8149} & \textbf{646.1416} & 13\,181\,920
& 0.8105 & 0.6879 & 41.7719 & 684.0078 & 13\,181\,920 \\
& & FedMD
& 0.8376 & 0.5281 & 42.0817 & \textbf{646.6758} & 13\,181\,920
& 0.8192 & 0.6262 & 43.0333 & 646.5859 & 13\,181\,920
& 0.8065 & 0.6931 & 42.4014 & \textbf{638.3789} & 13\,181\,920 \\
& & Local-KD
& 0.8444 & 0.5565 & 43.2037 & 673.4023 & \textbf{8\,922\,592}
& 0.7734 & 0.7825 & 43.1476 & 667.0234 & \textbf{8\,922\,592}
& 0.7874 & 0.7238 & 43.2311 & 661.0234 & \textbf{8\,922\,592} \\
& & FedKD-NAS
& \textbf{0.8526} & 0.4867 & 42.0546 & 652.1289 & 13\,181\,920
& \textbf{0.8396} & \textbf{0.5986} & 42.8504 & 651.5547 & 13\,181\,920
& \textbf{0.8231} & \textbf{0.6560} & 42.2949 & 654.3633 & 13\,181\,920 \\

\cmidrule(lr){2-18}

& \multirow{7}{*}{ResNet18}
& FedAvg
& \textbf{0.8763} & \textbf{0.3706} & 44.4480 & \textbf{696.5771} & \textbf{13\,645\,280}
& 0.8348 & \textbf{0.4922} & 43.9550 & \textbf{715.5703} & \textbf{13\,645\,280}
& 0.8107 & 0.5698 & 44.2024 & \textbf{705.5752} & \textbf{13\,645\,280} \\
& & Ditto
& 0.8652 & 0.3714 & \textbf{43.2755} & 721.0303 & \textbf{13\,645\,280}
& 0.7346 & 0.4987 & 43.7218 & 721.6416 & \textbf{13\,645\,280}
& 0.5293 & 0.5685 & 44.6558 & 714.8604 & \textbf{13\,645\,280} \\
& & FedDF
& 0.8594 & 0.4198 & 44.3327 & 746.9258 & 17\,904\,608
& 0.8302 & 0.5255 & \textbf{43.0420} & 738.4873 & 17\,904\,608
& 0.8262 & \textbf{0.5582} & 44.7857 & 759.8770 & 17\,904\,608 \\
& & FedDistill
& 0.8519 & 0.6311 & 44.2358 & 816.6895 & 17\,037\,312
& 0.8311 & 0.7115 & 43.1805 & 1095.8750 & 17\,037\,312
& 0.8235 & 0.7230 & 43.6043 & 3281.4414 & 17\,037\,312 \\
& & FedMD
& 0.8611 & 0.5152 & 44.8281 & 979.8066 & 17\,037\,312
& \textbf{0.8403} & 0.5821 & 43.7457 & 736.7539 & 17\,037\,312
& \textbf{0.8281} & 0.5976 & 44.0089 & 2600.5156 & 17\,037\,312 \\
& & Local-KD
& 0.8664 & 0.6044 & 47.2418 & 905.0605 & \textbf{13\,645\,280}
& 0.7817 & 0.9182 & 45.9689 & 869.8691 & \textbf{13\,645\,280}
& 0.8022 & 1.1139 & 46.9388 & 865.8916 & \textbf{13\,645\,280} \\
& & FedKD-NAS
& 0.8586 & 0.5039 & 44.5918 & 1187.3555 & 17\,037\,312
& 0.8388 & 0.5819 & 43.4029 & 884.5527 & 17\,037\,312
& 0.8202 & 0.6755 & \textbf{42.5365} & 820.6143 & 17\,037\,312 \\

\midrule

\multirow{14}{*}{\textbf{CIFAR100}}
& \multirow{7}{*}{MobileNetV2}
& FedAvg
& \textbf{0.4332} & \textbf{2.1744} & 46.8684 & 1095.1211 & 75\,263\,104
& \textbf{0.3727} & \textbf{2.3779} & 47.0038 & 1145.4902 & 75\,263\,104
& \textbf{0.3567} & 2.4574 & 47.1546 & 1080.7998 & 75\,263\,104 \\
& & Ditto
& 0.3930 & 2.1742 & 47.0402 & 1162.7871 & 75\,263\,104
& 0.2300 & 2.4760 & 47.1608 & 1285.4453 & 75\,263\,104
& 0.1619 & \textbf{2.4368} & 47.1846 & 1207.7324 & 75\,263\,104 \\
& & FedDF
& 0.1618 & 3.5913 & 47.1966 & 1122.4121 & 79\,307\,904
& 0.1345 & 3.9060 & 47.1475 & 1173.5918 & 79\,307\,904
& 0.1206 & 4.0082 & 47.0975 & 1090.9590 & 79\,307\,904 \\
& & FedDistill
& 0.2695 & 2.7560 & 47.0136 & 1046.8477 & \textbf{1\,617\,920}
& 0.2287 & 2.7414 & 47.1488 & 1123.7461 & \textbf{1\,617\,920}
& 0.2265 & 2.8167 & 46.9418 & 1064.4805 & \textbf{1\,617\,920} \\
& & FedMD
& 0.3261 & 2.3841 & 46.7092 & 1055.6328 & \textbf{1\,617\,920}
& 0.2572 & 2.5601 & 46.7717 & 1112.7402 & \textbf{1\,617\,920}
& 0.2582 & 2.5828 & 46.9478 & 1047.8672 & \textbf{1\,617\,920} \\
& & Local-KD
& 0.2562 & 2.8606 & 46.9869 & 1178.0068 & 75\,263\,104
& 0.1144 & 4.6756 & 47.0660 & 1373.6992 & 75\,263\,104
& 0.1210 & 4.5290 & 47.1403 & 1211.3145 & 75\,263\,104 \\
& & FedKD-NAS
& 0.2710 & 2.7131 & \textbf{34.0793} & \textbf{957.1016} & \textbf{1\,617\,920}
& 0.2643 & 2.6500 & \textbf{35.5990} & \textbf{928.3594} & \textbf{1\,617\,920}
& 0.2334 & 2.7392 & \textbf{34.8734} & \textbf{957.5625} & \textbf{1\,617\,920} \\

\cmidrule(lr){2-18}

& \multirow{7}{*}{ShuffleNetV2}
& FedAvg
& \textbf{0.3815} & 2.4420 & 42.8852 & \textbf{829.6660} & 14\,217\,344
& \textbf{0.3284} & \textbf{2.6018} & 42.7763 & \textbf{845.8994} & 14\,217\,344
& \textbf{0.3143} & \textbf{2.6649} & 43.0894 & \textbf{831.2324} & 14\,217\,344 \\
& & Ditto
& 0.3306 & \textbf{2.3711} & \textbf{42.5637} & 888.8955 & 14\,217\,344
& 0.1952 & 2.6512 & \textbf{42.5654} & 899.9932 & 14\,217\,344
& 0.1477 & 2.7679 & \textbf{42.6513} & 887.4346 & 14\,217\,344 \\
& & FedDF
& 0.1457 & 3.7029 & 43.0044 & 848.3096 & 18\,262\,144
& 0.1392 & 3.8697 & 42.7728 & 861.2412 & 18\,262\,144
& 0.1375 & 3.8998 & 42.9304 & 844.3828 & 18\,262\,144 \\
& & FedDistill
& 0.2205 & 2.9419 & 42.8959 & 862.1084 & \textbf{1\,617\,920}
& 0.1998 & 2.9552 & 42.7038 & 884.7432 & \textbf{1\,617\,920}
& 0.1988 & 2.9825 & 42.8770 & 864.6064 & \textbf{1\,617\,920} \\
& & FedMD
& 0.2672 & 2.7210 & 42.7184 & 866.7246 & \textbf{1\,617\,920}
& 0.2234 & 2.8654 & 42.8040 & 863.6289 & \textbf{1\,617\,920}
& 0.2234 & 2.8728 & 42.8805 & 860.7461 & \textbf{1\,617\,920} \\
& & Local-KD
& 0.2157 & 2.9675 & 43.8116 & 989.7852 & 14\,217\,344
& 0.1002 & 4.9991 & 43.7808 & 1075.9805 & 14\,217\,344
& 0.1096 & 4.7778 & 43.9632 & 1014.0508 & 14\,217\,344 \\
& & FedKD-NAS
& 0.2279 & 2.9385 & 33.8331 & 906.9492 & \textbf{1\,617\,920}
& 0.2111 & 2.9354 & 35.0074 & 890.7393 & \textbf{1\,617\,920}
& 0.2024 & 2.9687 & 34.4866 & 902.2451 & \textbf{1\,617\,920} \\

\bottomrule
\end{tabular}
\end{adjustbox}

\end{table*}

\end{document}